\documentclass{article}
\PassOptionsToPackage{numbers,sort&compress}{natbib}
 \usepackage[preprint]{neurips_2026}

\usepackage[utf8]{inputenc} 
\usepackage[T1]{fontenc}    
\usepackage{hyperref}       
\usepackage{url}            
\usepackage{booktabs}       
\usepackage{amsfonts}       
\usepackage{nicefrac}       
\usepackage{microtype}      
\usepackage{xcolor}         

\usepackage{booktabs}
\usepackage{color, colortbl}
\definecolor{greyC}{RGB}{180,180,180}
\definecolor{greyL}{RGB}{235,235,235}
\definecolor{shadecolor}{rgb}{0.92,0.92,0.92}
\definecolor{color1}{RGB}{255,190,122}
\definecolor{color2}{RGB}{142,207,201}
\definecolor{color3}{RGB}{190,184,220}
\definecolor{color4}{RGB}{130,176,210}
\usepackage{multirow}
\usepackage{multicol}
\usepackage{makecell}
\usepackage{graphicx}
\usepackage{subcaption}
\usepackage{amsmath}
\usepackage{algorithmic,algorithm}
\usepackage{caption}
\usepackage{soul}
\usepackage{threeparttable}
\usepackage{dsfont}
\usepackage{enumerate}
\usepackage{amsthm,amssymb}
\usepackage{wrapfig}
\usepackage{framed}
\usepackage{ulem}
\usepackage{rotating}
\usepackage{tablefootnote}
\usepackage{bbm}
\usepackage{pythonhighlight}

\newtheorem{remark}{Remark}
\newtheorem{theorem}{Theorem}
\newtheorem{assumption}{Assumption}
\newcommand{\colorb}[1]{\textcolor{blue}{#1}}

\newcommand{\colorr}[1]{\textcolor{red}{#1}}

\title{Task Specialization Fine-Tuning\\ for Contextual Reinforcement Learning}

\author{
\textbf{Jianan Zhou$^{1}$} \quad
\textbf{Jung-Hoon Cho}$^{2}$ \quad
\textbf{Tianyue Zhou$^{2}$}\quad
\textbf{Han Zheng$^{2}$}\\ 
\textbf{Jie Zhang$^{1}$} \quad
\textbf{Roy Dong$^{3}$} \quad
\textbf{Yining Ma}$^{2}\thanks{Corresponding author.}$ \quad
\textbf{Cathy Wu}$^{2}$ \vspace{2.5mm}\\
$^1$Nanyang Technological University\quad
$^2$MIT \quad
$^3$UIUC \vspace{2.5mm}\\
\texttt{jianan004@e.ntu.edu.sg},\ \texttt{zhangj@ntu.edu.sg},\ \texttt{roydong@illinois.edu} \\ 
\texttt{\{jhooncho, tianyuez, hanzheng, yiningma, cathywu\}@mit.edu}
}

\begin{document}

\maketitle

\begin{abstract}
Contextual Reinforcement Learning (CRL) seeks to generalize classical RL by maximizing task coverage across a context space of related tasks. While prior works often train from scratch and rely on either multi-task learning for a single policy or strategically training multiple policies, we advocate for a unified alternative: pretraining a single policy with good initial performance, followed by fine-tuning multiple policies for task specialization. This new paradigm, however, introduces unique challenges, such as heterogeneous marginal returns and sample inefficiency. This raises a critical research question: given a pretrained policy and a constrained budget, \emph{how much} fine-tuning should each task region receive to enable sample-efficient CRL? To this end, we propose \textit{Task Specialization Fine-Tuning (TSFT)}, an online framework that predicts fine-tuning performance with a simple parametric model and exactly solves the resulting discrete budget allocation problem via integer linear programming. Extensive experiments across diverse decision domains, including combinatorial optimization, continuous control, and LLM fine-tuning, demonstrate that TSFT significantly outperforms baselines in task coverage and approaches oracle performance. Our work charts a new direction for model-based CRL, aligning with the modern pretrain-finetune era.
\end{abstract}

\section{Introduction}
\label{intro}
Reinforcement learning (RL) has achieved remarkable success across various domains \cite{mnih2015human,silver2016mastering,fawzi2022discovering,mankowitz2023faster,zheng2026learning}, yet it remains fragile when applied to families of related tasks that differ only in a few key environment parameters \cite{benjaminscontextualize,degrave2022magnetic}. Contextual RL (CRL)~\cite{hallak2015contextual,modi2018markov,benjaminscontextualize} explicitly formalizes such families as Contextual Markov Decision Processes (CMDPs), where individual tasks are parameterized by vectors within a unified context space, with the objective of achieving broad coverage across that space.

Existing paradigms for solving CMDPs fall into three categories (see also  Table~\ref{tab:paradigm-comparison}): 1) \emph{independent training}~\cite{mnih2015human,rusu2016policy,yu2020meta,benjaminscontextualize}, which learns a separate policy for each task, straightforward yet expensive for covering high-dimensional context spaces;
2) \emph{multi-task training}~\cite{caruana1997multitask,wilson2007multi,teh2017distral,sodhani2021multi}, which learns a single unified policy across all tasks but is constrained by limited model capacity and susceptible to negative transfer;
3) \emph{multi-policy training}~\cite{cho2024model,ivanov2024personalized,ge2025learning,zhou2026structure,cho2026temporal}, which trains multiple policies, each on a single task, and relies on their generalization to cover the unseen tasks. 
Despite their differences, these paradigms share a common assumption: each policy is trained from scratch until convergence. Such an approach stands at odds with the pretrain-finetune convention of modern machine learning~\cite{bommasani2021opportunities,achiam2023gpt,yang2023foundation,taiga2023investigating,sun2023smart}, limiting the potential scalability of CRL in expansive, high-dimensional context space~\cite{benjaminscontextualize,cho2024model,zhou2026structure}.

In this paper, we advocate for a unified alternative: pretrain a \emph{single} policy with good initial performance across the context space, then fine-tune the trained policy with \textit{multiple} specialized variants under a constrained budget. A conceptual overview of the studied problem is illustrated in Fig.~\ref{fig:conceptual_overview}. This pretrain-finetune paradigm unifies multi-task and multi-policy training and holds potential advantages for three reasons. First, the pretrained policy may offer a good initialization across the context space. Second, fine-tuning enables the specialization of multiple policies, mitigating negative transfer without the computational expense of training from scratch. Finally, this approach integrates CRL into the scalable frameworks characteristic of modern machine learning.

\begin{table}[t]
\centering
\caption{Comparison of CRL paradigms along three orthogonal axes.}
\vspace{5pt}
\label{tab:paradigm-comparison}
\begin{small}
\begin{tabular}{l|ccc}
\toprule
Paradigm & \# Policy & Training Method & Training Mode \\
\midrule
Independent Training & Multiple $\mathcal{N}$ & Single-task & From scratch \\
Multi-Task Training  & Single   & Multi-task  & From scratch \\
Multi-Policy Training & Multiple $N\ll\mathcal{N}$ & Single-task & From scratch \\
\midrule
\textbf{TSFT (Ours)} & \textbf{Multiple $N\ll\mathcal{N}$} & \textbf{Multi-task} & \textbf{Pretrain + Fine-tune} \\
\bottomrule
\end{tabular}
\end{small}
\vspace{-3mm}
\end{table}

However, this paradigm introduces unique challenges: fine-tuning gains are \emph{heterogeneous} across tasks, where some regions improve rapidly while others quickly saturate or even degrade. Consequently, a uniform allocation of fine-tuning compute is often sample-inefficient, prompting a key research question: \emph{given a pretrained policy and a constrained budget, how much fine-tuning should each task region receive to enable sample-efficient CRL?} Notably, existing CRL literature has primarily investigated \emph{where} to train, selecting source tasks to maximize coverage under the implicit assumption of a fixed per-policy budget. The orthogonal question of \emph{how much} fine-tuning each region should receive remains largely open. To this end, this paper introduces a budget-aware perspective on CRL that complements existing source-task selection.

We introduce \emph{Task Specialization Fine-Tuning} (TSFT), an online framework for model-based budget allocation in CRL. Specifically, TSFT leverages a simple yet effective parametric model to predict how task performance, and consequently the induced coverage set, evolves as additional budget is assigned. Based on these predictions, each planning step reduces to a maximum coverage problem (MCP) variant over allocation vectors, which can be solved exactly through integer linear programming (ILP).
While ILP yields an optimal solution with respect to the model-based allocation, 
its theoretical optimality may be compromised by modeling errors. 
To mitigate this issue, we further embed ILP solving into an online framework with periodic model re-estimation. At each step, TSFT solves the current ILP, executes the derived allocation policy for a limited horizon, collects new data, and updates the model accordingly.
Empirical results demonstrate the promise of online budget allocation in CRL, enabling the coverage landscape to be substantially expanded under a constrained budget.

\textbf{Contributions}: 1) \textit{Conceptually}, we formulate a \emph{budget-constrained fine-tuning} problem in CRL, shifting the focus from \emph{where} to train to \emph{how much} fine-tuning to allocate under a pretrain-finetune paradigm; 2) {\textit{Methodologically}}, we introduce \textit{TSFT}, an online framework that optimizes model-based budget allocation via ILP, accompanied by a theoretical error analysis.
3) \textit{Experimentally}, we validate TSFT across combinatorial optimization, continuous control, and LLM fine-tuning, with up to 2--3$\times$ improvements in task coverage over simple strategies, and 2$\times$ gains in sample efficiency over multi-task training baselines, while performing comparably to an oracle policy under diverse settings.

\section{Related Work}
\textbf{Multi-Task Learning and Contextual RL.}
Multi-task learning improves generalization by jointly learning related tasks, but often suffers from negative transfer when tasks are insufficiently related \cite{caruana1997multitask, standley2020tasks, zhang2021survey}. 
In reinforcement learning, this challenge is further exacerbated by diverse task dynamics, sparse feedback, and unstable optimization. 
Contextual RL can be viewed as a structured form of multi-task RL, where a family of related tasks is parameterized by a context variable that affects the environment dynamics, rewards, or initial-state distributions \cite{hallak2015contextual,modi2018markov,benjaminscontextualize}.
A common approach in both multi-task RL and CRL is to train a single policy that generalizes across tasks, often through shared task representations, context-conditioned policies, or task-conditioned policy and value heads \cite{sodhani2021multi, yu2020meta, nauman2025bigger, grooten2026out}. 
Although recent work shows that larger and better-regularized value functions can improve generalization across diverse tasks \cite{nauman2025bigger}, single-policy approaches remain limited by model capacity and may suffer from negative transfer as task diversity increases.

Another line of work addresses this through transfer learning \cite{teh2017distral} or policy composition \cite{sun2022paco}, which alleviates task interference by allowing different components to specialize.
Recent multi-policy CRL methods also address contextual generalization by strategically selecting source tasks and relying on zero-shot transfer to cover the remaining context space \cite{cho2024model, zhou2026structure, cho2026temporal}. 
For example, MBTL models source task performance and transfer gaps to guide source task selection, while SD-MBTL further detects the underlying generalization structure of the CMDP and switches between suitable task selection strategies. 
These methods primarily address \emph{where} to train and typically assume that each selected policy is trained under a fixed or converged budget. Our work follows this multi-policy paradigm to mitigate negative transfer, while addressing their lack of budget allocation mechanism.

\textbf{RL Fine-Tuning.} 
Fine-tuning has become a common way to reuse experience in RL. Early related work focused on learning an initialization or task representation that can adapt with a small amount of new data, such as MAML and PEARL~\cite{finn2017model,rakelly2019efficient}. Another line learns reusable behaviors before downstream training, for example through unsupervised skill discovery~\cite{eysenbach2018diversity}. More recent work studies pretraining followed by fine-tuning more directly: multi-task pretraining with task-specific fine-tuning has been shown to be a strong and simple alternative to meta-RL~\cite{zhao2022effectiveness}, and multi-task pretraining on Atari variants has been shown to improve generalization to unseen variants even after substantial fine-tuning~\cite{taiga2023investigating}. Self-supervised multi-task pretraining has also been explored for sequential decision-making models, improving downstream fine-tuning efficiency across seen and unseen control tasks~\cite{sun2023smart}. Offline-to-online RL methods use prior datasets to accelerate later online improvement~\cite{nair2021awac,ball2023efficient,nakamoto2023cal}; and parameter-efficient methods such as L2M reduce forgetting when adapting pretrained decision-making models~\cite{schmied2023learning}. These works mainly address how to obtain a useful initialization, stabilize one fine-tuning run, or adapt a single agent to a new task. In contrast, TSFT treats fine-tuning progress itself as a planning signal: it models heterogeneous marginal gains from specialization and coordinates compute across a portfolio of fine-tuned policies.

\section{Problem Statement}
\textbf{Contextual MDP.}
Let $M=(S, A, P, R, \rho)$ denote a standard MDP, where $S$ is the state space, $A$ the action space, $P$ the transition dynamics, $R$ the reward function, and $\rho$ the initial state distribution. A contextual MDP, denoted by $\{M_x\}_{x\in \mathcal{X}}$, is a family of context-specific MDPs $M_x=(S, A, P_x, R_x, \rho_x)$ parameterized by a context vector $x$ drawn from a finite and bounded context set $\mathcal{X}$,
which can influence transition dynamics, reward function, and initial state distribution \cite{hallak2015contextual,modi2018markov,benjaminscontextualize}. Hereafter, we simplify notation by using $x$ to refer to a specific task.

\textbf{Task Specialization Fine-Tuning.} Given a pretrained policy with parameters $\theta$, it can be fine-tuned on a source task set $X_\mathcal{S} \subset \mathcal{X}$ with $k$ budget units to achieve task specialization ($\theta \to \theta_{X_\mathcal{S}}^k$) on a context subspace. 
The performance of the policy on a task $x \in \mathcal{X}$ is denoted by $J(\theta_{X_\mathcal{S}}^k, x)$.
A task $x$ is considered \emph{covered} by the policy if its performance satisfies a predefined threshold, $J(\theta_{X_\mathcal{S}}^k, x) \leq \epsilon$, assuming a minimization objective. 
The coverage set of the policy is defined as the union of all such covered tasks, $\mathcal{C}(\theta_{X_\mathcal{S}}^k)=\{x\in \mathcal{X}| J(\theta_{X_\mathcal{S}}^k, x) \leq \epsilon\}$. This convention is without loss of generality: for a maximization objective, the inequality is simply reversed.


Prior studies typically assume full training to convergence, which can be sample inefficient. This paper instead considers a more practical budget-constrained setting and formulates task specialization as a budget allocation problem.
Formally, given $N$ distinct source task sets $\{X_{\mathcal{S}_1}, X_{\mathcal{S}_2}, \dots, X_{\mathcal{S}_N}\}$ and a total budget of $K$ units, the goal is to distribute this constrained budget to maximize the global coverage of the context space (i.e., the union of converage sets). Starting from a pretrained policy $\theta$, we independently fine-tune it on each source task set $X_{\mathcal{S}_n}$, resulting in $N$ specialized policies.
The allocation objective is thus formulated as follows:
\begin{equation}
\begin{aligned}
\label{eq_obj}
    \max_{\mathcal{K}} \quad & G(\mathcal{K}),\\
     \text{s.t.} \quad & \|\mathcal{K}\|_1 \leq K,
\end{aligned}
\end{equation}
where $\mathcal{K} = [k_1, k_2, \dots, k_N]^T \in \mathbb{Z}_{\geq0}^N$ denotes the allocation vector, and $k_n$ denotes the number of budget units allocated to fine-tuning the $n$-th policy on its source task set $X_{\mathcal{S}_n}$. 
Given that specialized policies may exhibit overlapping coverage across the context space, we define the \emph{global coverage} as the measure of the union of individual coverage sets: 
\begin{equation}
\label{eq_g}
    G(\mathcal{K}) = \sum_{x \in \mathcal{X}} \mathbb{I} \left( x\in \bigcup_{n=1}^N\mathcal{C}(\theta_{X_{S_n}}^{k_n})\right)
    = \sum_{x \in \mathcal{X}} \mathbb{I} \left( \min_{n} J(\theta_{X_{\mathcal{S}_n}}^{k_n}, x) \leq \epsilon \right).
\end{equation}
In this paper, we define a \emph{budget unit} as the number of data samples consumed over training epochs or gradient steps, depending on the specific domain. This formulation induces a \emph{combinatorial} search space and leads to a \emph{non-convex} optimization problem, whose solution strategy is discussed next.


\begin{figure}[!t]
    \centering
    \includegraphics[width=0.99\textwidth]{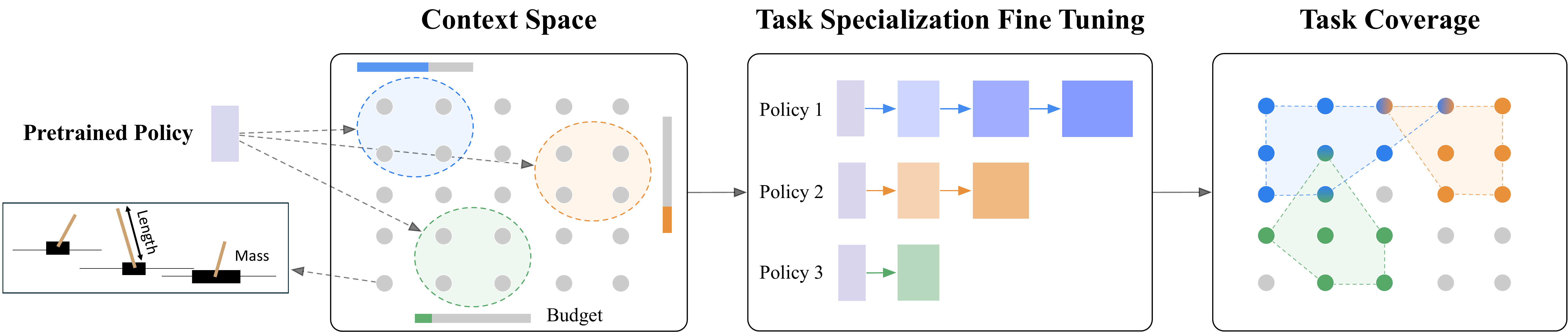}
    \vspace{1.5mm}
    \caption{Conceptual Overview of Task Specialization Fine-Tuning for Contextual RL. We illustrate the framework using a two-dimensional CartPole context space, where a pretrained policy is fine-tuned on three source task sets under different budget allocations to maximize final task coverage.}
    \label{fig:conceptual_overview}
    \vspace{-2mm}
\end{figure}

\section{Methodology}
In this section, we formalize the problem as a model-based allocation problem and solve it exactly via ILP.
Then, we introduce the overall framework TSFT, which embeds ILP solving into an online setting to enable effective task specialization in CRL.

\subsection{Problem Formulation}
Given a pretrained policy, a policy class, an RL training algorithm, a set of source task distributions $\{X_{\mathcal{S}_1},\ldots,X_{\mathcal{S}_N}\}$, and a total budget $K$, the goal is to decide \emph{how much} budget should be assigned to each policy. 
Each allocation decision assigns one fixed budget unit to a policy, e.g., by running the given RL algorithm for a fixed number of samples or epochs on the corresponding source task set. Although the total budget induces a finite horizon of $K$ allocation decisions, the final objective depends only on the resulting allocation vector, rather than on the order of decisions. We therefore directly optimize over feasible allocations subject to the budget constraint. An alternative MDP perspective is provided in Appendix~\ref{app_mdp}.

\textbf{ILP Formulation.}
We first consider the model-free setting in which the coverage set $\mathcal{C}(\theta_{X_{\mathcal{S}_n}}^{k})$ is known for every policy $n$ and allocation level $k$. Under this assumption, Eq.~\eqref{eq_obj} can be reformulated exactly as an ILP. Specifically, we introduce a binary decision variable $z_{n,k}$ to indicate whether policy $n$ is allocated $k$ budget units, and a binary decision variable $q_x$ to indicate whether task $x$ is covered by at least one policy. The resulting ILP formulation is defined as follows:
\begin{align}
    \max_{\mathbf{z},\mathbf{q}} \quad
    & \sum_{x\in\mathcal X}q_x,
    \label{eq:ilp_obj}\\
    \text{s.t.} \quad
    & \sum_{k=0}^{K} z_{n,k} = 1,
    && \forall n \in \{1,\ldots,N\},
    \label{eq:ilp_one_choice}\\
    & \sum_{n=1}^{N}\sum_{k=0}^{K} k\,z_{n,k}
    \leq K,
    \label{eq:ilp_budget}\\
    & q_x \leq
    \sum_{n=1}^{N}
    \sum_{k:\,x \in \mathcal{C}(\theta_{X_{S_n}}^{k})}
    z_{n,k},
    && \forall x \in \mathcal{X},
    \label{eq:ilp_coverage}\\
    & z_{n,k}\in\{0,1\}, \quad q_x\in\{0,1\}.
    \label{eq:ilp_binary}
\end{align}
Constraint~\eqref{eq:ilp_one_choice} selects exactly one allocation level for each policy. Constraint~\eqref{eq:ilp_budget} ensures that the total allocated budget does not exceed $K$. Constraint~\eqref{eq:ilp_coverage} enforces that $q_x$ can be set to one only if task $x$ is covered by at least one policy.
The optimal solution to Eq.~\eqref{eq_obj} is recovered as $\mathcal{K}^*=[k_1^*, k_2^*, \dots, k_N^*]$, where $k_n^\star = \sum_{k=0}^{K} k z_{n,k}^\star$.
Note that this formulation follows the standard ILP formulation of the maximum coverage problem (MCP), as our problem can be viewed as a multiple-choice budgeted MCP (see alternative formulations in Appendix~\ref{app_formulation}). In specific, each policy is associated with $K+1$ candidate coverage sets, among which exactly one must be selected. The formulation contains $N(K+1)+|\mathcal{X}|$ decision variables and $N+|\mathcal{X}|+1$ constraints. The ILP solver (e.g., CBC) operates on the precomputed coverage sets and leverages mature branch-and-bound and cutting-plane methods for efficient optimization. When a time limit is imposed, the solver returns the best feasible incumbent along with its optimality status. 
However, in practice, the coverage set $\mathcal{C}(\theta_{X_{\mathcal{S}_n}}^{k})$ is not known a priori. We therefore introduce a surrogate performance model to approximate the induced coverage sets.


\textbf{Surrogate Performance Model.} We model the evolution of each policy's task-wise performance and thereby implicitly estimate its coverage set. For each policy-task pair $(n,x)$, we fit a parametric performance model of the form:
\begin{equation}
    \mathcal{F}(y) = L \pm B e^{-d y},
\end{equation}
where $L$, $B \ge 0$, and $d > 0$ are learnable parameters, and $y$ denotes the consumed budget. The sign of $B$ controls the direction of the performance shift, capturing both improvement and degradation during fine-tuning. We fit $\mathcal{F}_{n}^{x}(y)$ via non-linear least squares (NLLS), which predicts the performance of policy $n$ on task $x$ after consuming a given amount of budget $y$. Empirically, this fitting process can be performed in parallel over thousands of tasks within seconds. We refer the reader to Appendix~\ref{app_model_study} for a comprehensive study of alternative models.

The true optimal solution (i.e., Oracle) could, in principle, be obtained by training each policy to completion using the full budget $K$, and then solving Eq.~(\ref{eq_obj}) exactly in a model-free manner. However, in practice, when computing $G$ via Eq.~(\ref{eq_g}), $J(\theta_{X_{\mathcal{S}_n}}^{k_n}, x)$ is unknown during the planning phase. We therefore replace it with our model prediction $\mathcal{F}_n^x(\cdot)$, yielding a surrogate objective $\widehat G$ that we optimize within our framework. 
To optimize $\widehat{G}$ via ILP, we replace $\mathcal{C}(\theta_{X_{\mathcal{S}_n}}^{k})$ in constraint~\eqref{eq:ilp_coverage} with the estimated coverage set $\widehat{\mathcal{C}}(\theta_{X_{\mathcal{S}_n}}^{k})=\left\{x\in\mathcal{X}| \mathcal{F}_n^x(k)\leq \epsilon \right\}$. The resulting ILP can be solved exactly with respect to $\widehat G$, although its solution may differ from that of $G$ because of modeling error.

\subsection{Overall Framework}
The above subsections detail how we solve the ILP to obtain the solution (i.e., allocation policy) at a single decision-making step (line 7). We now embed this procedure into our online framework TSFT for task specialization in contextual RL. 
The detailed algorithmic workflow is presented in Alg.~\ref{alg}.

Concretely, the framework begins with a \emph{warmup stage} (lines 1-4), in which the warmup budget $W$ is uniformly allocated across $N$ policies. Each policy $n$ is then trained on its corresponding source task set $X_{\mathcal{S}_n}$ according to this allocation. 
During training, we save checkpoints for every budget unit.
Subsequently, all checkpoints $\Theta = \{\theta_{X_{\mathcal{S}_n}}^{0}, \theta_{X_{\mathcal{S}_n}}^{1}, \dots, \theta_{X_{\mathcal{S}_n}}^{k_n}\}_{n=1}^N$ are evaluated on each task $x\in \mathcal{X}$ to collect training data $\mathcal{Z}$. In practice, this evaluation can be performed online if checkpoint storage becomes prohibitive.
This warmup phase ensures that all policies receive equal initial training and provides sufficient data to enable reliable model learning.

Next (lines 5-12), the framework iteratively allocates an \emph{execution budget} $E$ at each decision-making step until the remaining budget $b$ is exhausted. In specific, at each step, for each policy-task pair $(n, x)$, we first fit a model $\mathcal{F}_{n}^x$ to the available training data $\mathcal{Z}$. By calculating the surrogate objective with these models, we solve the model-based allocation via ILP to derive its solution $\widehat{\mathcal{K}}$. However, due to inherent modeling errors, we do not execute $\widehat{\mathcal{K}}$ to completion. Instead, we allocate the execution budget $E$ proportionally according to $\widehat{\mathcal{K}}$, which may prevent over-committing to an imperfect model and ensure balanced exploration. We then train the policies under this allocation and collect new data through evaluation. This newly acquired data $\mathcal{Z}'$ is appended to the existing training dataset $\mathcal{Z}$, which may progressively reduce modeling error in subsequent iterations. Furthermore, if the ILP solver terminates early without allocating the entire execution budget, any unused budget is carried forward and reconsidered in subsequent TSFT iterations.

To analyze TSFT, we introduce two oracle allocation policies. The \emph{Oracle} directly optimizes the true coverage objective over the entire budget and represents the best possible allocation in hindsight. The \emph{Oracle-Warmup} follows the same warmup stage as TSFT but optimizes the remaining allocation using the true coverage objective. We then present the following error bound and decomposition.

\begin{theorem}[Oracle Gap Decomposition]
\label{thm:oracle_gap}
Let $G(\mathcal{K})$ denote the global coverage achieved by $\mathcal{K}$, $\mathcal{K}^*$ denote the Oracle allocation, $\mathcal{K}_{\mathrm{OW}}^*$ denote the Oracle-Warmup allocation, and $\mathcal{K}_{\mathrm{TSFT}}$ denote the allocation returned by TSFT. Under Assumptions~\ref{assum:bounded_error} and~\ref{assum:planning_accuracy}, the optimality gap of TSFT satisfies
\begin{equation}
    G(\mathcal{K}^*) - G(\mathcal{K}_{\mathrm{TSFT}})
    \le
    G(\mathcal{K}^*) - G(\mathcal{K}_{\mathrm{OW}}^*)
    +
    2\delta_m
    +
    \eta_{\mathrm{alg}}.
\end{equation}
\end{theorem}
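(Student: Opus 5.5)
The proof is a standard three-term telescoping argument. The Assumptions are not shown yet, so I will read them in the natural way. Assumption~\ref{assum:bounded_error} should give a uniform surrogate error over the feasible allocations that share TSFT's warmup, namely $|G(\mathcal{K}) - \widehat G(\mathcal{K})| \le \delta_m$. Assumption~\ref{assum:planning_accuracy} should say that the allocation TSFT returns is near-optimal for the surrogate among those allocations: $\widehat G(\mathcal{K}_{\mathrm{TSFT}}) \ge \max_{\mathcal{K}} \widehat G(\mathcal{K}) - \eta_{\mathrm{alg}}$. Here $\eta_{\mathrm{alg}}$ absorbs the effects of the online scheme: proportional execution of $E$, re-fitting, solver time limits, and carried-over budget.

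With this setup, I first add and subtract $G(\mathcal{K}_{\mathrm{OW}}^*)$:
\begin{equation*}
G(\mathcal{K}^*) - G(\mathcal{K}_{\mathrm{TSFT}}) = \bigl[G(\mathcal{K}^*) - G(\mathcal{K}_{\mathrm{OW}}^*)\bigr] + \bigl[G(\mathcal{K}_{\mathrm{OW}}^*) - G(\mathcal{K}_{\mathrm{TSFT}})\bigr].
\end{equation*}
The first bracket is exactly the warmup cost term in the statement, so it stays as is. For the second bracket I telescope through the surrogate:
\begin{equation*}
G(\mathcal{K}_{\mathrm{OW}}^*) - G(\mathcal{K}_{\mathrm{TSFT}}) = \bigl[G(\mathcal{K}_{\mathrm{OW}}^*) - \widehat G(\mathcal{K}_{\mathrm{OW}}^*)\bigr] + \bigl[\widehat G(\mathcal{K}_{\mathrm{OW}}^*) - \widehat G(\mathcal{K}_{\mathrm{TSFT}})\bigr] + \bigl[\widehat G(\mathcal{K}_{\mathrm{TSFT}}) - G(\mathcal{K}_{\mathrm{TSFT}})\bigr].
\end{equation*}
\begin{itemize}
\item The first and third terms are each at most $\delta_m$ by Assumption~\ref{assum:bounded_error}, which gives the $2\delta_m$.
\item The middle term is at most $\max_{\mathcal{K}} \widehat G(\mathcal{K}) - \widehat G(\mathcal{K}_{\mathrm{TSFT}}) \le \eta_{\mathrm{alg}}$. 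This holds because $\mathcal{K}_{\mathrm{OW}}^*$ is feasible for the warmup-constrained problem that TSFT's surrogate planning addresses.
\end{itemize}
Summing these bounds gives the claimed inequality.

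The main obstacle is ensuring the feasible sets line up. $\mathcal{K}_{\mathrm{OW}}^*$ and $\mathcal{K}_{\mathrm{TSFT}}$ must lie in the same class, namely allocations with at least $W/N$ units per policy and total at most $K$. The error bound $\delta_m$ must hold on both of these specific allocations, not just on the allocations the ILP considered at some intermediate step.

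If TSFT does not spend its entire budget, its allocation is still feasible, since $\|\cdot\|_1 \le K$. If the planning assumption is stated per iteration rather than for the final allocation, I would aggregate the per-step suboptimalities into $\eta_{\mathrm{alg}}$, which is what the statement appears to intend.

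If instead the model-error assumption is stated pointwise on performance predictions, $|\mathcal{F}_n^x(k) - J(\theta^{k}_{X_{\mathcal{S}_n}},x)|$, I would need a margin argument. The idea is to define $\delta_m$ as the number of tasks whose coverage indicator can flip under that error. This bounds $|G - \widehat G|$ by $\delta_m$ for every allocation and reduces to the case above.
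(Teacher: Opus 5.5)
Your proposal is correct and matches the paper's proof: the paper first bounds $G(\mathcal{K}_{\mathrm{OW}}^*) - G(\mathcal{K}_{\mathrm{TSFT}}) \le 2\delta_m + \eta_{\mathrm{alg}}$ by the same three-term telescoping through $\widehat G$, then adds and subtracts $G(\mathcal{K}_{\mathrm{OW}}^*)$ to isolate the warmup term. You guessed both assumptions exactly: Assumption~\ref{assum:bounded_error} is uniform over $\Omega_W$, and Assumption~\ref{assum:planning_accuracy} concerns the final returned allocation in $\Omega_W$, so your contingency arguments (per-iteration aggregation and the pointwise-to-coverage margin reduction) are not needed.
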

Theorem~\ref{thm:oracle_gap} shows that the gap between TSFT and the Oracle decomposes into three terms: the warmup error $G(\mathcal{K}^*) - G(\mathcal{K}_{\mathrm{OW}}^*)$, the surrogate-model error $2\delta_m$, and the algorithmic planning error $\eta_{\mathrm{alg}}$. 
If the post-warmup state lies on an optimal trajectory, the warmup error becomes zero.
Moreover, this provides a conditional post-warmup guarantee with respect to a fixed surrogate objective. 
For a one-shot exact full-horizon planner that optimizes this fixed surrogate and executes the returned allocation exactly, $\eta_{\mathrm{alg}}=0$. 
We refer to Appendix~\ref{app_theory} for proofs and further discussion.

In summary, our proposed online framework solves a model-based allocation via ILP at each decision-making step, while allowing for periodic model re-estimation to mitigate error accumulation. This enables adaptive and sample-efficient policy training, facilitating effective task specialization in CRL.

\begin{algorithm}[!t]
    \caption{Task Specialization Fine-Tuning (TSFT)}
    \label{alg}
    \textbf{Input}: Context space $\mathcal{X}$, policy count $N$, total budget $K$, warmup budget $W$, execution budget $E$, Source task sets $\{X_{\mathcal{S}_n}\}_{n=1}^N$;\\
    \textbf{Output}: $N$ trained policies;
    \begin{algorithmic}[1] 
        \STATE \textbf{Initialize:} $\mathcal{K} = \{k_n\}_{n=1}^N$, $k_n = \frac{W}{N}$ 
        \STATE $\Theta \gets$ Train $N$ policies on $\{X_{\mathcal{S}_n}\}_{n=1}^N$ following $\mathcal{K}$ 
        \STATE $\mathcal{Z} \gets$ Evaluate $\Theta$ on each task $x \in \mathcal{X}$ to construct dataset
        \STATE Remaining budget $b \gets K - W$
        \WHILE{$b > 0$}
            \STATE $\forall n, x: \mathcal{F}_{n}^x \gets$ Fit models using dataset $\mathcal{Z}$
            \STATE $\widehat{\mathcal{K}} \gets$ Solve ILP with models $\{\mathcal{F}_{n}^x\}_{n,x}$
            \STATE $\Theta \gets$ Distribute budget $\min(b, E)$ based on $\widehat{\mathcal{K}}$; Train $\mathcal{N}$ policies
            \STATE $\mathcal{Z}' \gets$ Evaluate $\Theta$ on each task $x \in \mathcal{X}$
            \STATE Update dataset $\mathcal{Z} \gets \mathcal{Z} \cup \mathcal{Z}'$
            \STATE Update remaining budget $b = b - \min(b, E)$
        \ENDWHILE
    \end{algorithmic}
\end{algorithm}

\section{Experiment}
\label{exp}
In this section, we empirically validate our framework across continuous control, combinatorial optimization, and LLM fine-tuning. The source code will be publicly released upon publication.

\textbf{Baseline.} 
1) \emph{Oracle}: We first train each policy to completion (i.e., using the maximal budget $K$), and then solve the ILP defined in Eqs.~\eqref{eq:ilp_obj}-\eqref{eq:ilp_binary}.
This serves as a standard oracle reference, representing an upper bound on achievable performance.
2) \emph{Oracle-Warmup}: Similar to the Oracle, but isolates the warmup phase by solving the ILP after warmup. Specifically, it treats the post-warmup allocation $\mathcal K^0$ as the initial state and optimizes the true coverage objective over $\Omega_W =\{\mathcal K\in\mathbb Z_{\ge0}^N:\mathcal K\succeq \mathcal K^0,\;\|\mathcal K\|_1\le K\}$. Therefore, Oracle-Warmup shares the same initial decision-making state as our proposed framework, serving as an additional oracle reference for validating model error.
3) \emph{Pretrained}: The given pretrained policy without further adaptation. Depending on the decision domain, the pretrained policy may be trained outside the context space.
4) \emph{MTL}: This approach fine-tunes the pretrained policy over the entire context space in a multi-task learning manner, using the same computation budget as the other methods.
5) \emph{Random}: This approach randomly allocates the compute budget across the $N$ policies.
6) \emph{Uniform}: This approach evenly allocates the compute budget across the $N$ policies.
7) \emph{Adaptive}: At each decision-making step, this approach adaptively allocates the decision budget $D$ according to a probability distribution derived from the observed improvement rate of each policy's coverage set.
8) \emph{LinUCB} \cite{li2010contextual,chu2011contextual}: This approach formulates the compute allocation as a contextual multi-armed bandit problem. At each decision step, it utilizes the Linear Upper Confidence Bound (LinUCB) algorithm to select a policy for training. The context vector encodes the normalized training progress and coverage momentum, while the reward is defined as the monotonic improvement in the policy's coverage set to account for temporary performance dips.
We refer to Appendix~\ref{app_baseline} for additional details on the baselines.

We report average results in the main paper (see Appendix \ref{app_exp_result} for full results). During inference, we evaluate all $N$ policies and select the best one for each task, following the convention in CRL.

\subsection{Combinatorial Optimization}
\textbf{Environment and Context Space.} We consider the capacitated vehicle routing problem (CVRP) and its variant with time window constraints (CVRPTW).
1) \emph{CVRP}: We construct a 2D context space defined by customer node distribution and vehicle capacity. The distribution parameter ranges from [0.01, 0.25] with a step size of 0.01, while the vehicle capacity ranges from [10, 400] with a step size of 10. This results in a total of 1{,}000 tasks in the context space.
2) \emph{CVRPTW}: We construct a 2D context space defined by time window tightness and vehicle capacity. The time window parameter ranges from [0.04, 1.00] with a step size of 0.04, while the vehicle capacity ranges from [10, 400] with a step size of 100. This also yields a total of 1{,}000 tasks in the context space. Each source task set corresponds to a square region in the context space, comprising 49 tasks centered around a reference task.
More details are presented in Appendix \ref{app_context_space}.

\begin{table*}[!t]
  \vskip -0.1in
  \caption{Performance Comparison in Combinatorial Optimization and Continuous Control.}
  \label{table_nco_control}
  \begin{center}
  \begin{small}
  \renewcommand\arraystretch{1.0}  
  \resizebox{0.99\textwidth}{!}{ 
  \begin{tabular}{l|ccc|ccc|ccc|ccc}
    \toprule
     & \multicolumn{3}{c|}{\textbf{CVRP}} & \multicolumn{3}{c|}{\textbf{CVRPTW}} & \multicolumn{3}{c|}{\textbf{CartPole}} & \multicolumn{3}{c}{\textbf{Ant}} \\
     $N/K$ & 3/100 & 3/150 & 4/150 & 3/100 & 3/150 & 4/150 & 3/50 & 3/100 & 4/100 & 3/50 & 3/100 & 4/100 \\
    \midrule
    Oracle & 26.7\% & 29.3\% & 32.5\% & 49.4\% & 50.0\% & 50.0\% & 89.2\% & 89.7\% & 90.5\% & 100.0\% & 100.0\% & 100.0\% \\
    Oracle-Warmup & 21.4\% & 24.5\% & 28.0\% & 46.6\% & 47.5\% & 47.5\% & 86.8\% & 87.5\% & 90.4\% & 100.0\% & 100.0\% & 100.0\% \\
    \midrule
    Pretrained & 6.8\% & 6.8\% & 6.8\% & 40.9\% & 40.9\% & 40.9\% & 66.6\% & 66.6\% & 66.6\% & 31.1\% & 31.1\% & 31.1\% \\
    \midrule
    MTL & 9.5\% & 11.0\% & 11.0\% & 44.8\% & 46.0\% & 46.0\% & 71.9\% & 74.5\% & 74.5\% & 31.1\% & 31.1\% & 31.1\% \\
    Random & 10.1\% & 11.0\% & 12.9\% & 42.9\% & 43.2\% & 43.3\% & 75.4\% & 75.0\% & 82.4\% & 15.8\% & 42.5\% & 45.7\% \\
    Uniform & 7.9\% & 15.4\% & 12.4\% & 42.8\% & 41.8\% & 42.0\% & 72.8\% & 65.4\% & 82.8\% & 25.7\% & 53.0\% & 64.7\% \\
    Adaptive & 10.4\% & 16.1\% & 14.5\% & 43.2\% & 44.0\% & 44.0\% & 75.1\% & 76.9\% & 87.3\% & 54.1\% & 88.0\% & 91.9\% \\
    LinUCB & 12.6\% & 20.5\% & 20.8\% & 42.8\% & 44.3\% & 44.3\% & 77.7\% & 82.6\% & 82.6\% & 64.3\% & 88.2\% & 76.9\% \\
    \midrule
    TSFT & \textbf{18.7\%} & \textbf{23.0\%} & \textbf{26.4\%} & \textbf{45.8\%} & \textbf{46.2\%} & \textbf{46.9\%} & \textbf{82.2\%} & \textbf{84.0\%} & \textbf{87.6\%} & \textbf{88.4\%} & \textbf{97.7\%} & \textbf{99.5\%} \\
    \bottomrule
  \end{tabular}}
  \end{small}
  \end{center}
  \vskip -0.2in
\end{table*}

\textbf{Setup.} We adopt POMO \cite{kwon2020pomo} as the policy network, consisting of 1.27M parameters, and train it using the REINFORCE algorithm \cite{williams1992simple}, following the training configurations in \cite{kwon2020pomo} (see Appendix \ref{app_training}). In this domain, one budget unit corresponds to 100 training epochs, with each epoch processing 10{,}000 data samples.
Starting from a policy pretrained over the entire context space for 5{,}000 epochs, we perform task specialization via multi-task fine-tuning on the corresponding source task sets.
Performance is evaluated in terms of optimality gap with respect to HGS \cite{vidal2022hybrid}. 
We set the performance threshold $\epsilon$ to 1.25\% for CVRP and 3\% for CVRPTW.
We consider a range of budget configurations. For example, 3/100 denotes training $N=3$ policies with a total budget of $K=100$ units, as shown in Table \ref{table_nco_control}.
In our framework, the warmup budget is set to $W = 5\times N$ units. The execution budget is initially set to $E = 5$ units, and is increased to $10$ after a cumulative budget of $10\times N$ has been consumed. This reflects increased confidence in the model as more data becomes available and reduces the frequency of ILP solving. 

\textbf{Result.} We report the best coverage rate achieved by all approaches on CVRP and CVRPTW in Table~\ref{table_nco_control}, where the coverage rate is defined as the global coverage divided by the total number of tasks in the context space. TSFT achieves the best non-oracle performance across all six settings. On CVRP, TSFT brings substantial improvements over the strongest baseline, especially as the budget or number of policies increases, and closely approaches Oracle-Warmup. On CVRPTW, the gains are more moderate because several baselines, particularly MTL, already achieve strong coverage, possibly due to reduced task interference in the considered context space. Nevertheless, TSFT still consistently improves over all non-oracle baselines and remains close to Oracle-Warmup.

\subsection{Continuous Control}
\textbf{Environment and Context Space.}
We evaluate on three continuous control suites:
1) \emph{CartPole 3D} from CARL~\cite{benjaminscontextualize}, where the context varies pole length, cart mass, and pole mass. Each axis is discretized into 10 values evenly spaced over $[0.1, 10]\times$ the CARL defaults, yielding a $10^3$-task grid.
2) \emph{Ant 2D} from CARL~\cite{benjaminscontextualize}, where the context varies gravity and friction over $[0.2, 2.0]\times$ their defaults $(g_0, \mu_0) = (9.8, 1.0)$ on a $25\!\times\!40$-task grid.
3) \emph{Meta-World MT50}~\cite{yu2020meta}, a benchmark of $50$ robotic manipulation tasks. We refer to Appendix \ref{app_context_space} for more details.

\textbf{Setup for CartPole and Ant.} 
We use PPO~\cite{schulman2017proximal} as the base RL algorithm, with an MLP policy/value network of hidden sizes $[64,64]$ for CartPole and $[256,256]$ for Ant. The pretrained policy $\theta$ is obtained by multi-task PPO on the full grid for $5$M (CartPole) and $1$M (Ant) environment steps.
One budget unit corresponds to 10{,}000 steps for CartPole and 40{,}000 steps for Ant.
Source task sets correspond to $N$ axis-aligned regions of the grid, each containing $3^3=27$ contexts for CartPole and $5\!\times\!4=20$ contexts for Ant. 
We set the performance threshold $\epsilon$ to 500 for CartPole and 6.9 for Ant.
Note that, in contrast to combinatorial optimization, larger values indicate better performance in continuous control. We consider budget settings with $N\in{3,4}$ and $K\in{50,100}$ for both environments. For TSFT, we set $E=50$ for CartPole and $E=10$ for Ant, while keeping the other hyperparameters the same as those used for combinatorial optimization. We report the average performance across seeds in the main paper, with per-seed results provided in Appendix~\ref{app_exp_result}.

\textbf{Setup for Meta-World.} 
We use MOORE~\cite{hendawy2023multi} as the base multi-task RL algorithm. Specifically, we pretrain MOORE on the full MT50 benchmark for 50M steps to obtain the pretrained policy. The remaining 50M steps are used for task specialization, where each budget unit corresponds to 1M steps. To construct the source task sets, we extract the task-specific expert weights learned by pretrained MOORE and apply K-means clustering to partition the 50 tasks into 10 groups, each containing 5 tasks.
We set the performance threshold to $\epsilon=1.0$, corresponding to a success rate of 100\% over five evaluation episodes. We consider the budget setting with $N=10$ policies and $K=50$ units. For TSFT, we set the warm-up budget to $W=3\times N$. 
The remaining hyperparameters are the same as those used for combinatorial optimization. To evaluate robustness, we additionally report results on a smaller setting with 5 random seeds in Appendix~\ref{app_exp_result}.

\begin{wraptable}{r}{0.5\textwidth}
  \caption{Results for Meta-World and LLM-FT.}
  \label{table_metaworld_llm}
  \begin{center}
  \vspace{-5pt}
  \renewcommand\arraystretch{1.0}  
  \resizebox{0.5\textwidth}{!}{ 
  \begin{tabular}{l|c|c}
  \toprule
     & Meta-World (10/50) & LLM-FT (4/100) \\
    \midrule
    Oracle & 86.0\% & 77.8\% \\
    Oracle-Warmup & 84.0\% & 77.8\% \\
    \midrule
    Pretrained & 58.0\% & 0.0\% \\
    \midrule
    MOORE & 60.0\% & / \\
    Random & 71.2\% & 26.7\% \\
    Uniform & 70.0\% & 22.2\% \\
    Adaptive & 71.2\% & 28.9\% \\
    LinUCB & 68.0\% & 33.3\% \\
    \midrule
    TSFT & \textbf{74.0\%} & \textbf{55.6\%} \\
    \bottomrule
  \end{tabular}}
  \end{center}
  \vskip -0.1in
\end{wraptable}

\textbf{Result.} Table~\ref{table_nco_control} reports global coverage on CartPole and Ant. Gains over adaptive and bandit baselines are moderate, but TSFT consistently improves coverage and narrows the gap to Oracle-Warmup as the budget grows. MTL can be unstable here, e.g., on Ant, additional training does not necessarily yield higher task coverage. In contrast, Meta-World results (Table~\ref{table_metaworld_llm}) show strong MTL performance, yet TSFT still achieves better coverage.
Specifically, MOORE achieves an average success rate of 0.68 with 60\% task coverage, whereas TSFT improves these to 0.72 and 74\%, respectively. 
We also note that task grouping plays an important role, yet it is not deliberately optimized in this paper. Further improving the task grouping in Meta-World could yield even better performance and higher sample efficiency.
Overall, these results show that model-based budget allocation is effective for continuous-control CRL, especially in higher-dimensional settings where naive multi-task fine-tuning is sample-inefficient.

\subsection{LLM Fine-Tuning}
\textbf{Environment and Context Space.}
We evaluate TSFT on reinforcement fine-tuning (RFT) of LLMs. The discrete 1D context space consists of 9 reasoning tasks (benchmarks). The in-distribution (ID) set comprises the test splits of our four fine-tuning corpora: DAPO-17K~\cite{yu2025dapo}, MATH-500~\cite{hendrycks2021measuring,lightman2023lets}, GSM8K~\cite{cobbe2021training}, and CodeContests+~\cite{wang2025codecontests}. The out-of-distribution (OOD) set comprises five held-out benchmarks: AIME 2024/2025, Minerva Math~\cite{lewkowycz2022solving}, MBPP~\cite{austin2021program}, and BigCodeBench~\cite{zhuo2024bigcodebench}.

\textbf{Setup.}
We adopt Qwen3-4B-Base~\cite{yang2025qwen3} as the pretrained policy and apply GRPO~\cite{shao2024deepseekmath} with verifiable rule-based rewards. We construct $N=4$ specialized policies, each fine-tuned on one of \{DAPO-17K, MATH, GSM8K, CodeContests+\}, under a total budget of $K=100$, where one budget unit corresponds to a single GRPO update step. Per Eq.~(\ref{eq_obj}), each task is dispatched to all $N$ policies and the best pass@1 (greedy decoding) is taken. A task is covered if pass@1 exceeds a per-benchmark threshold. For TSFT, we use the same hyperparameters as in the combinatorial optimization setting. Full training, decoding, and threshold details are presented in Appendices~\ref{app_training} and~\ref{app_context_space}.

\textbf{Result.}
Table \ref{table_metaworld_llm} reports the global coverage on the LLM suite. TSFT consistently outperforms all non-oracle baselines and approach Oracle-Warmup, whereas
heuristic allocation baselines recover only part of the specialization benefit. 
These results suggest that model-based budget allocation can scale to LLM RFT with billions of parameters.

\begin{figure*}[!t]
    \centering
    \centerline{
    \raisebox{1.2mm}{\includegraphics[width=0.29\columnwidth]{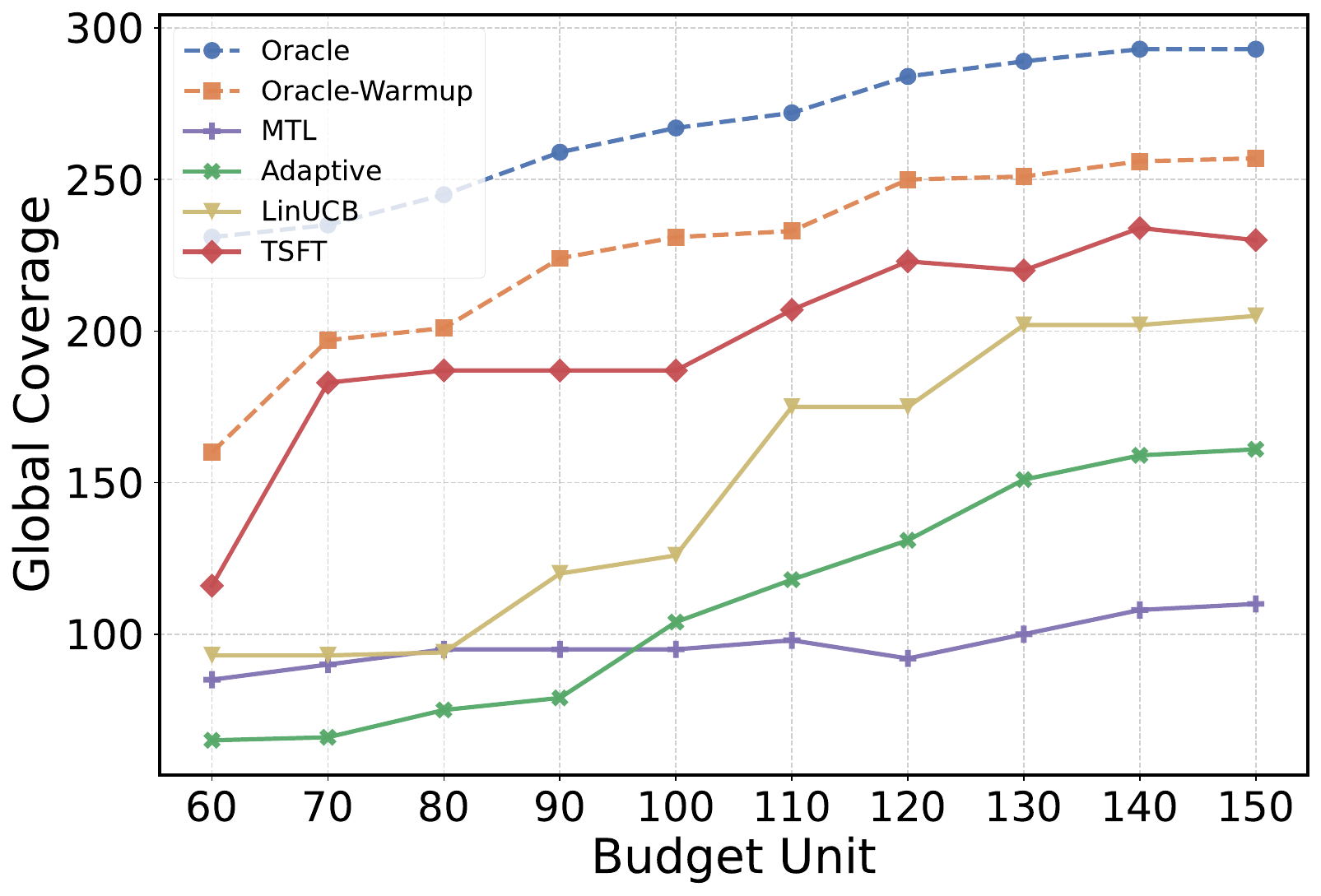}}
    \includegraphics[width=0.22\columnwidth]{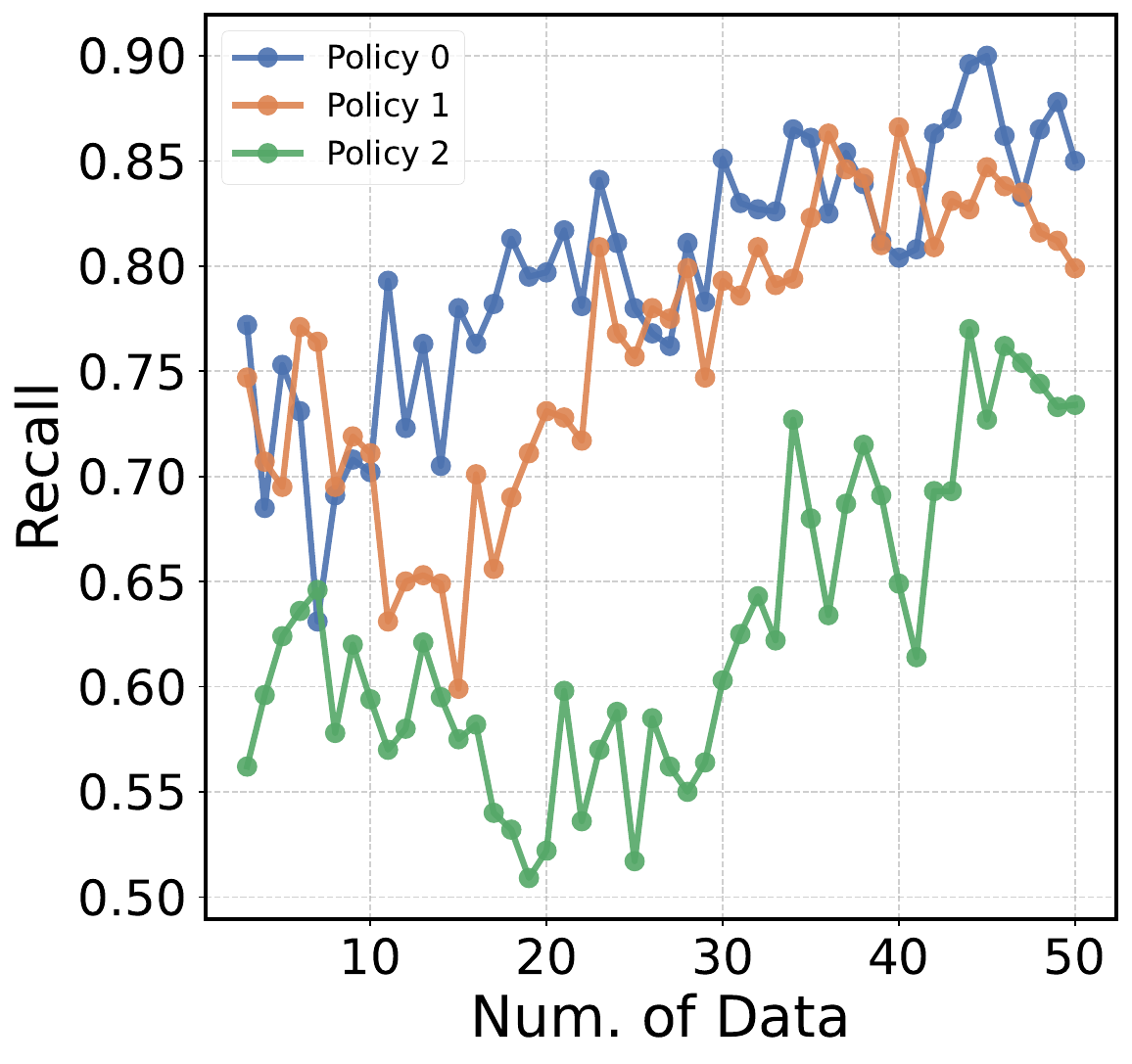}
    \raisebox{2mm}{\includegraphics[width=0.24\columnwidth]{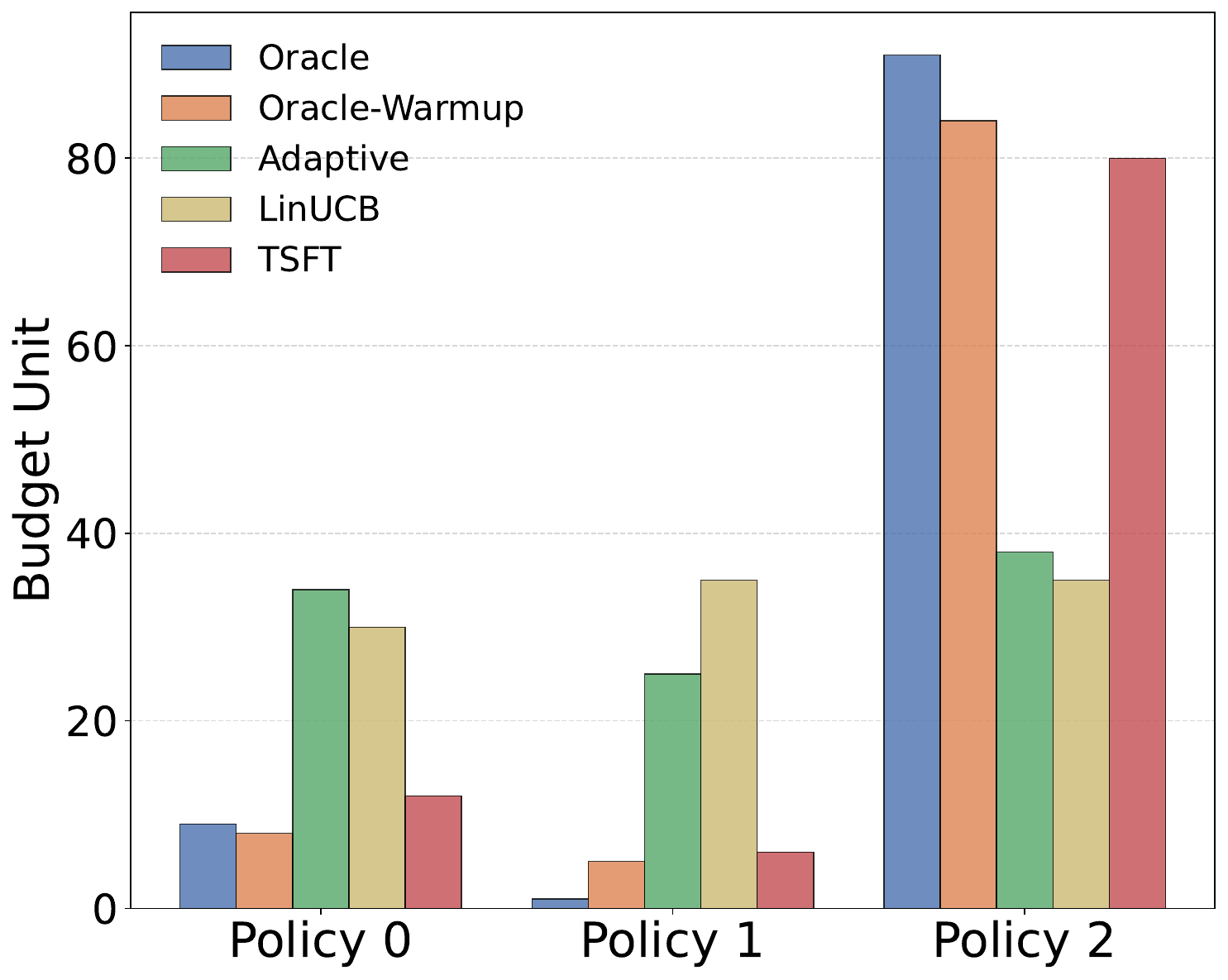}}
    \includegraphics[width=0.22\columnwidth]{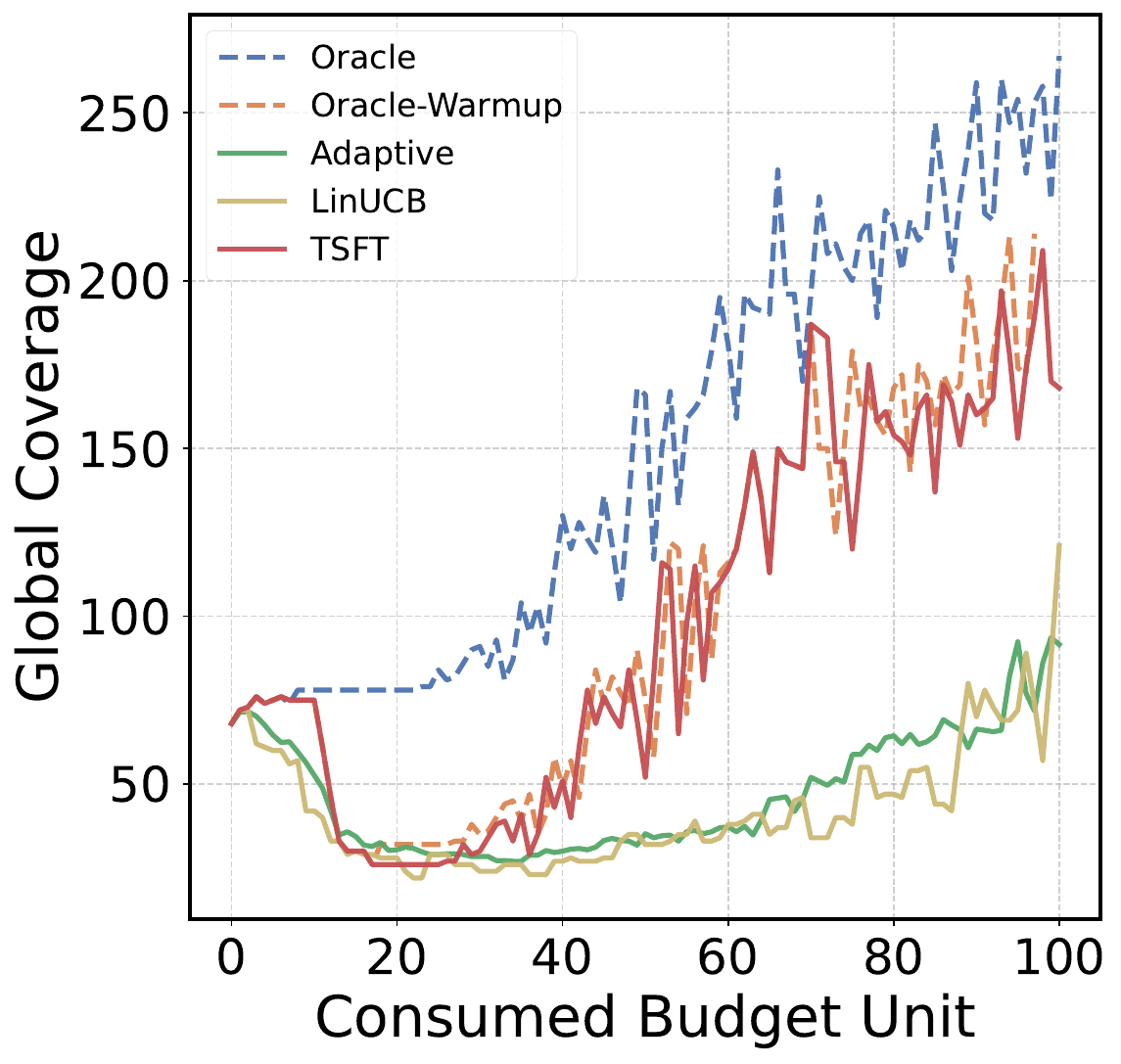}
    }
    \caption{\emph{From left to right:} Performance comparison under diverse budget settings; Average recall for predicting coverage sets over the future 50 budget units, given varying amounts of data for model fitting; An example run illustrating the allocation policies and the global coverage trajectories over the training process across different methods for the case with $N=3$ and $K=100$.
    }
    \label{analysis}
    \vskip -0.1in
\end{figure*}

\subsection{Analysis}
We further present detailed analyses using CVRP as a standard testbed. See Appendix~\ref{app_exp_result} for additional empirical results and analyses (e.g., full results, performance model studies, sensitivity analyses, low-cost evaluation, computational cost, and visualizations).

\textbf{Diverse Budget.} We provide a detailed comparison of performance across varying budget settings for the case of $N=3$ in Fig.~\ref{analysis}.
The results consistently demonstrate the superiority of TSFT in terms of global coverage across both low- and high-budget regimes. In contrast, LinUCB performs well only in high-budget settings, as it requires sufficient exploration to identify which bandit (policy) has the potential for improvement through trial-and-error, highlighting its relative sample inefficiency. 

\textbf{Computational Cost.}
TSFT introduces only modest overhead, as both surrogate performance modeling and ILP optimization are efficient. Thousands of surrogate models can be fitted in parallel within seconds, and an ILP with $N=10$ and $K=150$ can be solved to optimality in under one second. This cost is minor compared with policy training, which takes about one week for CVRP.

\textbf{Model Error.} For each policy-task pair $(n, x)$, TSFT collects one data point per budget unit, which is used to fit the predictive model $\mathcal{F}_x^n$ at each decision-making step. We report the recall rate of the coverage set estimated by the fitted model as more data becomes available. 
The results in Fig.~\ref{analysis} demonstrate the effectiveness of our simple parametric model and justify the design of online model re-estimation with newly collected data. More detailed results are provided in Appendix~\ref{app_model_study}.

\textbf{Sample Efficiency.} We compare TSFT with the MTL baseline, which performs further training over the entire context space without task specialization. As shown in Fig.~\ref{app_hypara}, MTL typically requires roughly $2\times$ more samples to achieve global coverage comparable to that of our method.

\textbf{Allocation Policy.}
In the right two panels of Fig.~\ref{analysis}, we present the allocation policies derived by each approach and the corresponding global coverage in the CVRP setting with $N=3$ and $K=100$. The coverage set may initially shrink due to the distribution shift between pretraining and fine-tuning. 

\textbf{Comparison with MTL.} Since TSFT fine-tunes multiple policies during the specialization stage, we provide a comprehensive comparison with the MTL baseline on CVRP under matched training compute. Let $\Phi$ denote the number of parameters in the pretrained model, and let $\hat{K}=200$ denote the total budget (pretraining + fine-tuning). We compare TSFT against two MTL settings: (1) MTL with a total compute budget of $\Phi \times \hat{K}$, corresponding to the standard setup; and (2) MTL with a matched compute budget of $5\Phi \times \frac{\hat{K}}{5}$, which uses a model with five times the parameters but one-fifth of the budget so that the overall number of parameter updates is comparable to TSFT.
These two MTL baselines achieve only 11.0\% and 7.8\% task coverage, respectively, despite being trained on a larger set of source tasks (i.e., the entire context space) than TSFT (see Fig.~\ref{app_fig_context_space}).

\section{Conclusion}
\label{conclusion}
This paper introduces Task Specialization Fine-Tuning (TSFT), a framework that intelligently guides task specialization in contextual reinforcement learning (CRL) by explicitly optimizing how much budget should be allocated to each task region under a constrained budget. Specifically, TSFT formulates the process as a model-based allocation problem using a simple parametric model, and solves it exactly via integer linear programming (ILP). We further embed the process into an online framework with periodic model re-estimation to mitigate modeling errors. Experiments on combinatorial optimization, continuous control, and large language model fine-tuning demonstrate that TSFT significantly improves task coverage, highlighting the promise of intelligent task specialization as a principled mechanism for sample-efficient policy training in complex CRL environments. 

The limitations of this work are fourfold.
First, the surrogate objective used for model-based allocation may be misaligned with the original objective. As a result, solutions that are optimal under the surrogate objective may not achieve strong performance under the original objective.
Second, exhaustive evaluation can become computationally prohibitive as the context space grows or the number of specialized policies increases, potentially limiting the scalability of TSFT.
Third, we assume a fixed task grouping and do not optimize the partitioning of the context space, which may significantly affect the final coverage, particularly in challenging domains such as Meta-World.
Finally, the exponential performance model is not universally applicable and may be inadequate for domains with strongly non-monotonic fine-tuning dynamics.
Developing better-aligned optimization objectives, more efficient evaluation strategies, joint optimization of task grouping and budget allocation, and more expressive performance models are promising directions for future work.


{
\small
\bibliographystyle{unsrtnat}
\bibliography{main}
}

\newpage
\appendix

\vbox{
\hsize\textwidth
\hrule height 4pt
\vskip 0.25in%
\vskip -\parskip%
\centering
{\LARGE\bf Appendix \par}
\vskip 0.29in
\vskip -\parskip
\hrule height 1pt
\vskip 0.3in
}

\section{Discussion}
\textbf{Relationship to Existing Literature.} We provide a brief discussion to better contextualize the studied problem and the proposed framework. 1) At the formulation level, Eqs.~\eqref{eq:ilp_obj}-\eqref{eq:ilp_binary} constitute a multiple-choice budgeted MCP \cite{khuller1999budgeted}. Specifically, the \(K+1\) allocation levels of each policy form a group of candidate sets: selecting level \(k\) consumes \(k\) budget units and induces a corresponding coverage set. The objective is to select exactly one level for each policy so as to maximize the cardinality of the union of the selected coverage sets, subject to the total budget constraint.
This structure resembles the multiple-choice knapsack problem (KP) \cite{sinha1979multiple}, as both problems involve selecting one item from each group under a shared budget.
The key distinction is that KP assumes additive item values, whereas our coverage objective is non-additive: overlapping coverage sets create dependencies among policies. If the policies' coverage sets were disjoint, or, more generally, their contributions were additive, the formulation would reduce to a multiple-choice KP.
2) At the learning-optimization interface, TSFT follows the predict-then-optimize paradigm \cite{elmachtoub2022smart}: it first predicts task-wise performance using parametric models, converts these predictions into estimated coverage sets, and then optimizes the resulting surrogate problem.
In contrast to decision-focused learning~\cite{mandi2024decision}, our performance models are trained with prediction-level losses rather than an end-to-end objective based on allocation quality or coverage regret. This distinction is particularly relevant for the thresholded coverage objective: prediction errors near the performance threshold $\epsilon$, or on tasks that distinguish between competing allocations, can have a much larger impact than errors elsewhere. Decision-focused learning may therefore improve objective alignment, although differentiating through the discrete thresholding operation and the ILP remains challenging.
3) At the framework level, TSFT resembles the feedback structure of model predictive control (MPC)~\cite{garcia1989model}: it estimates a surrogate model, optimizes an allocation, executes only a limited step, observes the resulting performance, and replans. In this view, the allocation vector plays the role of the planning state, while budget assignments serve as controls. Unlike conventional MPC, which typically controls a dynamical system with continuous states and actions, TSFT addresses a discrete allocation problem over policies and tasks. Accordingly, feedback is primarily used to correct uncertainty in the learned fine-tuning response, rather than to compensate for disturbances in physical dynamics.

\textbf{Assumption.} 1) \emph{Quality of the pretrained policy:} TSFT assumes a pretrained policy that provides a meaningful foundation for adaptation. This assumption is consistent with the intended practical setting, where fine-tuning is used to adapt an already capable pretrained model, such as a foundation model in many domains. If the pretrained policy lacks capabilities relevant to the target contexts, adaptation becomes closer to learning each target task from scratch and may require substantially more data and computation than the fixed fine-tuning budget allows. Such scenarios therefore fall outside the intended scope of TSFT. Importantly, however, this assumption does not require the pretrained policy to exhibit uniformly strong performance across all target contexts. As shown in Table~\ref{table_nco_control}, TSFT can substantially expand limited target-context coverage (e.g., from 6.8\% to 26.4\% on CVRP), provided that the pretrained policy offers a sufficiently meaningful basis for adaptation.
2) \emph{Efficient evaluation:} TSFT periodically evaluates policies across the context space to collect the data required for fitting the performance prediction models. Consequently, the framework assumes that these evaluations are sufficiently efficient so that they do not introduce prohibitive computational overhead during online optimization. This limitation can be mitigated by using cheaper evaluation strategies for model fitting, as demonstrated in Appendix~\ref{app:cheap_eval}. More advanced methods, such as approximating the evaluation results through selective evaluation, constitute an interesting direction for future research.
3) \emph{Structure of the context space:} By definition, a CMDP comprises a family of related, context-specific MDPs parameterized by a context vector drawn from a context space. We primarily consider well-structured context spaces, following conventions established in prior CRL work~\citep{benjaminscontextualize,cho2024model,zhou2026structure}. In contrast to these studies, we also consider a discrete context space in the LLM domain, where nine heterogeneous reasoning and coding benchmarks serve as contexts. In this setting, relationships among contexts are less explicit and less smoothly varying, allowing us to evaluate TSFT beyond conventional, strongly structured context spaces. When the context space lacks clear structure, methods that learn where to train through source-task grouping (e.g., based on task embeddings, or gradient-based task-affinity grouping methods) could complement our scope on how much to train and provide an orthogonal and important extension to TSFT in the future work.

\section{Methodology Detail}
\subsection{MDP Interpretation}
\label{app_mdp}
In this section, we present an alternative MDP interpretation of the studied problem. Under this view, the problem can be formulated as a model-based MDP and solved optimally via dynamic programming (DP). Although this approach can be incorporated into TSFT, it is less computationally efficient than the ILP-based formulation used in our main method, as demonstrated in Appendix~\ref{app_computation}.

\subsubsection{MDP Formulation}
We frame the budget allocation in Eq.~(\ref{eq_obj}) as a sequential decision-making process over the fine-tuning dynamics. Given a pretrained policy, a policy class, an RL training algorithm, a set of source task distributions $\{X_{\mathcal{S}_1},\ldots,X_{\mathcal{S}_N}\}$, and a total budget $K$, the goal is to decide \emph{how much} budget should be assigned to each policy. 
Each decision step allocates one fixed budget unit to one policy, e.g., by running the given RL algorithm for a fixed number of samples or epochs on the corresponding source task set. 
Given the total budget $K$, the allocation naturally induces a finite horizon of $K$ decision steps.
Although the final objective depends only on the resulting allocation vector rather than the action order, this sequential formulation provides a convenient way to search over feasible allocations under the budget constraint. 
Formally, we formulate this allocation process as a model-based MDP, specifically an undiscounted finite-horizon MDP with horizon $K$ and discount factor $\gamma=1$.

\textbf{State Space:} A state $s_t \in S$ at step $t$ is defined by the allocation vector across policies, $s_t=(k_1^t, k_2^t, ..., k_N^t)$, where $k_n^t$ denotes the cumulative number of budget units allocated to policy $n$ up to step $t$. Note that the state space is \emph{discrete}, and the initial state corresponds to a zero allocation vector.

\textbf{Action Space:} An action $a_t \in [1, 2, ..., N, \emptyset]$ specifies which policy receives an additional budget unit at step $t$. Choosing $\emptyset$ corresponds to termination (i.e., early stopping), capturing cases where allocating further budget does not improve the final objective. 

\textbf{Transition Dynamics and Model:} The state transition $P$ is \emph{deterministic} with respect to the allocation state. 
Specifically, selecting policy $n$ at step $t$ updates the allocation state by assigning one additional budget unit to that policy. 
For example, if the first policy is selected, the state transitions from $s_t=(k_1^t, k_2^t, ..., k_N^t)$ to 
$s_{t+1}=(k_1^t+1, k_2^t, ..., k_N^t)$. 
The size of the budget unit determines the decision and transition granularity.
This deterministic transition is defined over the abstract allocation state, not over the underlying stochastic training process. In practice, the parameter trajectory and realized performance of each policy after receiving additional training budget can be stochastic due to random initialization, data sampling, environment interaction, and the RL optimization procedure. We therefore use a model-based MDP: the allocation transition is deterministic, while the effect of training on task performance and coverage is estimated by a learned performance model.

Concretely, since the global coverage is not known a priori, we model the evolution of each policy's task-wise performance and thereby implicitly estimate its coverage set. 
For each policy-task pair $(n,x)$, we fit a parametric performance model of the form:
\begin{equation}
    \mathcal{F}(y) = L \pm B e^{-d y},
\end{equation}
where $L$, $B$, and $d$ are learnable parameters, and $y$ denotes the consumed training budget. 
The sign of $B$ controls the direction of the performance shift, capturing both improvement and degradation during fine-tuning. 
We fit $\mathcal{F}_{n}^{x}(y)$ via non-linear least squares (NLLS), which predicts the performance of policy $n$ on task $x$ after consuming a given amount of budget $y$. 
Empirically, this fitting process can be performed in parallel over thousands of tasks within seconds. 

\textbf{Reward Function:} The reward $r_t$ is defined as the marginal improvement in global coverage. Given that specialized policies may exhibit overlapping coverage across the context space, we define the \emph{global coverage} at state $s_t$ as the measure of the union of individual coverage sets:
\begin{equation}
    G(s_t) = \left| \bigcup_{n=1}^N \mathcal{C}(\theta_{X_{\mathcal{S}_n}}^{k_n^t}) \right|.
\end{equation}
The reward at step $t$ is then the marginal gain resulting from action $a_t$: $r_t = r(s_t, a_t) = G(s_{t+1}) - G(s_t)$. This formulation ensures that the reward is coupled across policies, naturally penalizing redundant optimization and incentivizing the expansion of the collective coverage boundary.

\subsubsection{MDP Solving}
\label{method_mdp}
The true optimal solution (i.e., Oracle) could, in principle, be obtained by training each policy to completion (using the full budget $K$) and solving Eq.~(\ref{eq_obj}) exactly via DP in a model-free MDP. However, in practice, $J(\theta_{X_{\mathcal{S}_n}}^{k_n}, x)$ is unknown during the planning phase. We therefore replace it with our model prediction $\mathcal{F}_n^x(\cdot)$, yielding a surrogate objective that we optimize within our framework. 
For clarity, we denote the resulting model-induced surrogate coverage objective by $\widehat G$. That is, $\widehat G(\cdot)$ is computed by replacing the unknown performance with the model prediction.

Based on the above model-based MDP formulation, we develop a discrete DP solver to derive its optimal allocation policy. To optimize the final objective, the solver applies \emph{value iteration} on the surrogate objective, searching for the allocation trajectory that maximizes the global coverage upon exhaustion of the total budget $K$.
Let $V^*(s_t)$ represent the optimal state value, defined as the maximum achievable global coverage from state $s_t$ with remaining budget $b_t$. Our DP solver recursively computes this value following the Bellman optimality principle:
\begin{equation}
    \label{eq_bellman}
    V^*(s_t) = \max \left(\widehat G(s_t), \max_{a_t} V^*(s_{t+1}) \right),
\end{equation}
where $s_{t+1}$ is the deterministic next state induced by taking action $a_t$ in state $s_t$. We omit a discount factor (i.e., $\gamma=1$), as the objective is to maximize the final global coverage over a finite horizon. The optimal policy is obtained via backward induction over the state space. An implementation code snippet is provided in Appendix \ref{app_dp_detail}.
Our DP solver achieves global optimality and permutation invariance, as detailed in Remark~\ref{remark_dp}. Note that this optimality is defined w.r.t. the model-based MDP under the surrogate objective, and does not necessarily carry over to the original objective in Eq.~(\ref{eq_obj}).

\begin{remark}[DP Property]
\label{remark_dp}
1) Global Optimality: The DP solver guarantees global optimality strictly for the full budget $K$. Intermediate allocations along the optimal trajectory do not necessarily constitute optimal solutions for smaller budget constraints.
2) Permutation Invariance: 
The final global coverage depends exclusively on the allocation vector $\mathcal{K}$. The specific sequence of actions taken to reach this distribution is commutative and irrelevant to the final objective.
\end{remark}

The computational complexity of the DP solver is dominated by the size of the reachable state space $|S|$. Due to the combinatorial structure of the state representation and the total budget constraint $K$, the number of reachable states is mathematically upper bounded by the number of weak compositions of at most $K$ units into $N$ policies, i.e., $|S|\leq \binom{K+N}{N}$. For fixed $N$, this bound scales as $\mathcal{O}(K^N)$.

\begin{remark}[Computational Complexity]
\label{remark_complexity}
The time and space complexities of the DP solver scale exponentially with the number of policies $N$, and polynomially with the total budget $K$. 
\end{remark}

\begin{wrapfigure}{r}{0.313\columnwidth}
    \vspace{-3mm}
    \begin{center}
    \includegraphics[width=0.34\columnwidth]{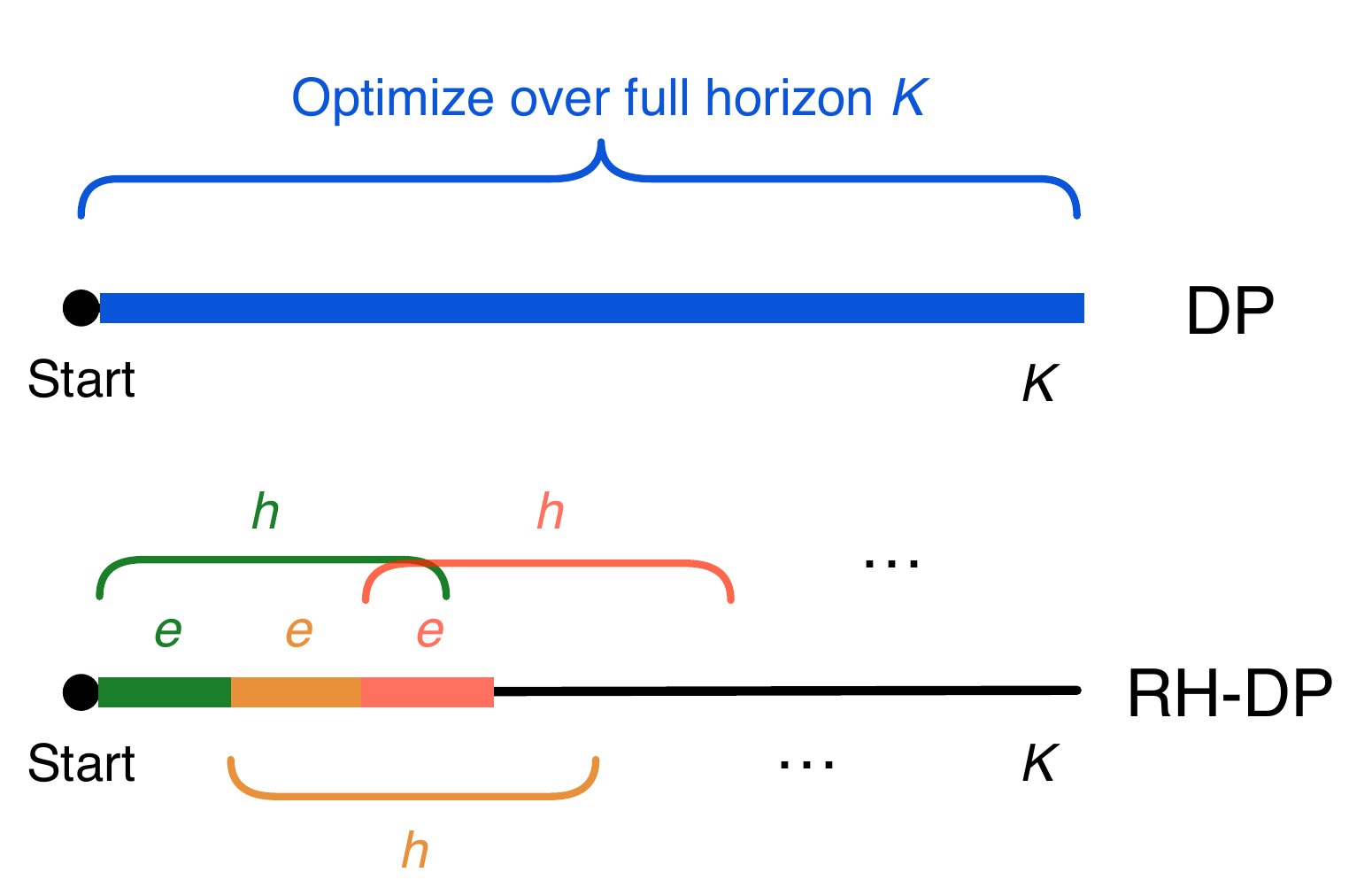}
    \caption{Illustration of RH-DP.}
    \label{dp_rhdp}
    \end{center}
\end{wrapfigure}
This curse of dimensionality motivates the exploration of more efficient strategies for solving the MDP. Inspired by the receding-horizon principle in model predictive control, we introduce \emph{receding-horizon DP} (RH-DP), an $h$-step lookahead strategy that trades global optimality for improved tractability. RH-DP is characterized by a planning horizon $h$ and an execution horizon $e$. Specifically, instead of solving the full-horizon DP over the entire remaining budget, RH-DP solves an $h$-step limited-horizon DP, executes the first $e \leq h$ allocation decisions, and then replans from the updated state. 
By bounding the complexity of each local planning phase to $\mathcal{O}(h^N)$, this approach reduces the overall computational burden while preserving adaptive replanning over the full budget.

\subsubsection{Dynamic Programming}
\label{app_dp_detail}
We provide a code snippet illustrating our DP implementation below.

\begin{python}
# An example of DP implementation
@lru_cache(maxsize=None)
def dp_value_iter(self, state, steps_remain):
    current_reward = self.get_coverage(state)
    if steps_remain == 0:
        return current_reward, []

    possible_actions = []
    for id, s in enumerate(state):
        if s < self.max_epochs:
            new_state = list(state)
            new_state[id] += 1
            val, path = self.dp_value_iter(tuple(new_state), steps_remain-1)
            possible_actions.append((val, id, path))

    # Optimization: Choose action with max future value
    best_future_val, best_action, best_path = max(possible_actions, key=lambda x: x[0])
    
    if best_future_val > current_reward:
        return best_future_val, [best_action] + best_path
    else:
        # No Action: early-stop as further training yields NO improvement
        return current_reward, []
\end{python}

\subsection{Alternative Formulation}
\label{app_formulation}
In addition to Eqs.~\eqref{eq:ilp_obj}-\eqref{eq:ilp_binary}, we present two alternative ILP formulations and a constraint-programming (CP) formulation. The original formulation uses \(K+1\) binary variables \(z_{n,k}\) to encode the allocation level of each policy. 
A natural way to potentially reduce the formulation size is to instead use a single integer variable for each policy. However, without additional structure (e.g., monotonicity), the coverage condition in Constraint~\eqref{eq:ilp_coverage} cannot be expressed linearly in these integer variables without introducing auxiliary variables.
We first consider the case in which coverage is monotone with respect to the allocated budget: $\mathcal{C}(\theta^{k}_{X_{S_n}}) \subseteq \mathcal{C}(\theta^{k+1}_{X_{S_n}}),\ \forall n, k\in\{0,\ldots,K-1\}$.
Under this assumption, we define the minimum budget required for policy \(n\) to cover task \(x\) as:
\begin{equation}
    T_{n,x}
    =
    \min\left\{
        k\in\{0,\ldots,K\}:
        x\in\mathcal{C}(\theta^{k}_{X_{S_n}})
    \right\},
\end{equation}
where \(T_{n,x}=+\infty\) if policy \(n\) cannot cover \(x\) within the available budget. In the model-based setting, \(\mathcal{C}\) and \(T_{n,x}\) are replaced by their surrogate counterparts induced by \(F_n^x\). Under monotone coverage, this leads to the following two ILP formulations.

\textbf{Threshold Formulation.} We directly optimize the allocation variables \(k_n\in\{0,\ldots,K\}\) and introduce \(c_{n,x}\in\{0,1\}\) to indicate whether policy \(n\) covers task \(x\):
\begin{align}
    \max_{\mathcal{K},\mathbf{c},\mathbf{q}}
    \quad & \sum_{x\in\mathcal{X}} q_x, \\
    \mathrm{s.t.}\quad
    & \sum_{n=1}^{N} k_n \le K, \\
    & T_{n,x}c_{n,x}
      \le k_n
      \le (T_{n,x}-1)+(K-T_{n,x}+1)c_{n,x},
      && \forall (n,x):T_{n,x}<+\infty, \\
    & q_x \le
      \sum_{n:T_{n,x}<+\infty} c_{n,x},
      && \forall x\in\mathcal{X}, \\
    & k_n\in\{0,\ldots,K\},\quad
      c_{n,x},q_x\in\{0,1\}.
\end{align}
The two-sided constraint enforces \(c_{n,x}=1\) if and only if \(k_n\ge T_{n,x}\). This formulation replaces the \(N(K+1)\) binary variables \(z_{n,k}\) with \(N\) integer variables. However, it introduces up to \(N|\mathcal{X}|\) policy-task variables and is exact only when coverage is monotone.

\textbf{Incremental Budget Formulation.} Let \(d_{n,k}=1\) indicate that policy \(n\) receives at least \(k\) budget units, for \(k\in\{1,\ldots,K\}\), and define \(d_{n,0}=1\). The allocation is then \(k_n=\sum_{k=1}^{K}d_{n,k}\), yielding:
\begin{align}
    \max_{\mathbf{d},\mathbf{q}}
    \quad & \sum_{x\in\mathcal{X}}q_x,\\
    \mathrm{s.t.}\quad
    & \sum_{n=1}^{N}\sum_{k=1}^{K}d_{n,k}\le K,\\
    & d_{n,k}\ge d_{n,k+1},
      && \forall n,\;k\in\{1,\ldots,K-1\},\\
    & q_x\le
      \sum_{n:T_{n,x}<+\infty}d_{n,T_{n,x}},
      && \forall x\in\mathcal{X},\\
    & d_{n,k},q_x\in\{0,1\}.
\end{align}
This cumulative encoding eliminates the policy-task variables \(c_{n,x}\) and naturally represents training as accumulated budget. Its main drawbacks are the \(NK\) binary variables and ordering constraints it introduces, as well as the same monotonicity requirement imposed by the threshold-based formulation.

\textbf{Constraint Programming.}
In addition to ILP, we formulate the problem using the \textsc{Element} constraint supported by OR-Tools CP-SAT. For every policy \(n\), allocation level \(k\), and task \(x\), define the constant $a_{n,k}^{x}=\mathbb{I}\!\left(x\in\mathcal{C}(\theta^{k}_{X_{S_n}})\right)$.
We introduce an integer variable \(k_n\in\{0,\ldots,K\}\) for the budget allocated to policy \(n\), and a binary variable \(c_{n,x}\) indicating whether policy \(n\) covers task \(x\) under its selected allocation. The formulation is as follows:
\begin{align}
    \max_{\mathcal{K},\mathbf{c},\mathbf{q}}
    \quad & \sum_{x\in\mathcal{X}}q_x,\\
    \mathrm{s.t.}\quad
    & \sum_{n=1}^{N}k_n\le K,\\
    & c_{n,x} =
      \operatorname{Element}
      \left(
          k_n;
          a_{n,0}^{x},\ldots,a_{n,K}^{x}
      \right),
      && \forall n\in\{1,\ldots,N\},\;x\in\mathcal{X},\\
    & q_x\le\sum_{n=1}^{N}c_{n,x},
      && \forall x\in\mathcal{X},\\
    & k_n\in\{0,\ldots,K\},\qquad
      c_{n,x},q_x\in\{0,1\}.
\end{align}
The \textsc{Element} constraint sets \(c_{n,x}\) to the \(k_n\)-th entry of the coverage table \((a_{n,0}^{x},\ldots,a_{n,K}^{x})\). Unlike the threshold-based formulations, this formulation supports arbitrary non-monotone coverage sets. 
Compared with the original ILP, it replaces the \(N(K+1)\) binary allocation variables with \(N\) integer variables, but introduces \(N|\mathcal{X}|\) binary coverage variables and element constraints. It may therefore be advantageous when \(K\) is large relative to \(|\mathcal{X}|\), but can become expensive for large context spaces, and its computational efficiency depends on CP-SAT's constraint propagation.

Overall, the threshold formulation can reduce the dependence on $K$ when the context space is small or the policy-task coverage relation is sparse, while the incremental formulation can yield sparser coverage constraints under monotonicity. In practice, however, monotone coverage is difficult to guarantee. Consequently, the original ILP and the CP formulation are more general. Among them, the original ILP offers a direct representation that benefits from mature mixed-integer optimization techniques and, in our empirical evaluation, exhibits greater computational efficiency.

\subsection{Alternative Model}
\label{app_model}
In addition to the parametric model, we further explore alternative modeling approaches.

\textbf{Gaussian Process (GP).} For each policy-task pair, we employ a GP model to implicitly represent a distribution over performance functions, providing a non-parametric alternative to the explicit exponential model. Specifically, for each policy-task pair $(n,x)$, given the observed performance trajectory $\mathcal{Z}_{n}^{x}=\{(y_i, r_i)\}_{i=1}^{m}$, where $y_i$ denotes the training step and $r_i$ denotes the observed performance, we assume $r_i = f_{n}^{x}(y_i) + \epsilon_i,\ \epsilon_i \sim \mathcal{N}(0,\sigma_\epsilon^2)$.
We place a GP prior over the latent function $f_{n}^{x}(\cdot) \sim \mathcal{GP}\big(0,k_\theta(\cdot,\cdot)\big)$,
where $k_\theta$ is the covariance kernel. Given $\mathcal{Z}_{n}^{x}$, the GP yields a posterior predictive distribution at any future training step $y$:
\begin{equation}
    p\big(f_{n}^{x}(y)\mid \mathcal{Z}_{n}^{x}\big) = \mathcal{N}\big(\mu_{n}^{x}(y),\sigma_{n}^{x}(y)^2\big).
\end{equation}
We use the posterior mean $\mu_{n}^{x}(y)$ as the predicted performance for policy $n$ on task $x$ at step $y$, thereby implicitly predicting the evolution of the corresponding coverage set.

\textbf{Parametric Function with GP (PFGP).} Although the parametric approach provides a smooth and interpretable estimate of the performance trajectory, fine-tuning, particularly in RL settings, often exhibits noise and instability. To account for this effect, we further combine the parametric model with a GP-based residual model. Specifically, the parametric function is used to model the base performance trend, while a GP is trained on the residuals to capture stochastic performance variations.
For each policy-task pair $(n,x)$, given the observed trajectory 
$\mathcal{Z}_{n}^{x}=\{(y_i,r_i)\}_{i=1}^{m}$, we first fit the parametric model $\mathcal{F}_{n}^{x}(y)$ and compute the residuals $\delta_i = r_i - \mathcal{F}_{n}^{x}(y_i)$.
We then fit a GP to the residual data $\{(y_i,\delta_i)\}_{i=1}^{m}$, which gives a posterior mean $\mu_{\delta,n}^{x}(y)$ for the residual at any future training step $y$. The final prediction is obtained by adding this residual correction to the parametric trend: $\widehat{\mathcal{F}}_{n}^{x}(y)=\mathcal{F}_{n}^{x}(y)+\mu_{\delta,n}^{x}(y)$.
In this way, PFGP preserves the smooth extrapolation structure of the parametric model, while using the GP residual component to capture local deviations and stochastic fluctuations around the fitted trend.

\textbf{Other Parametric Forms.} We additionally consider a power-law model, $\mathcal{F}(y)=L\pm B(y+1)^{-d}$ with $d>0$; a logarithmic model, $\mathcal{F}(y)=L\pm B\log(1+y)$; and polynomial models of degree $d$, $\mathcal{F}(y)=\sum_{j=0}^{d}c_jy^j$, for $d\in\{2,3,4\}$ (quadratic, cubic, and quartic). We also evaluate a piecewise-linear model consisting of two independently fitted linear segments, with the breakpoint selected using the Bayesian information criterion (BIC). The power-law and logarithmic models impose smooth monotonic trends, whereas the polynomial and piecewise-linear models offer greater flexibility for modeling non-monotonic trajectories. A comprehensive discussion is provided in Appendix~\ref{app_model_study}.

\newpage
\section{Theoretical Analysis}
\label{app_theory}
We provide a theoretical analysis of the allocation optimality gap introduced by our TSFT framework. Recall that the objective is to maximize the
global coverage under a total budget $K$:
\begin{equation}
    G(\mathcal{K})
    =
    \left|
    \bigcup_{n=1}^N
    \mathcal{C}\!\left(\theta_{\mathcal{X}_{S_n}}^{k_n}\right)
    \right|,
\end{equation}
where $\mathcal{K}=(k_1,\ldots,k_N)$ denotes the allocation vector and $k_n$ is the budget allocated to
the $n$-th policy. Since the true task performance trajectory $J(\theta,x)$ is unknown during
planning, TSFT relies on a predictive model $\mathcal{F}_n^x$ to construct a surrogate global
coverage objective, denoted by $\widehat{G}(\mathcal{K})$.

Let $\mathcal{K}^0$ denote the allocation after the warmup stage. For example, when the warmup
budget $W$ is uniformly allocated across $N$ policies and $W$ is divisible by $N$, we have
$\mathcal{K}^0=(W/N,\ldots,W/N)$. The post-warmup feasible allocation set is defined as
\begin{equation}
    \Omega_W
    =
    \left\{
    \mathcal{K}\in\mathbb{Z}_{\ge 0}^N:
    \mathcal{K}\succeq \mathcal{K}^0,\;
    \|\mathcal{K}\|_1\le K
    \right\},
\end{equation}
where $\mathcal{K}\succeq \mathcal{K}^0$ denotes element-wise inequality. Both TSFT and Oracle-Warmup are constrained to optimize over this same post-warmup feasible allocation set.

\begin{assumption}[Bounded Surrogate Coverage Error]
\label{assum:bounded_error}
There exists a constant $\delta_m\ge 0$ such that the surrogate coverage objective uniformly approximates the true coverage objective over the post-warmup feasible allocation set:
\begin{equation}
    \sup_{\mathcal{K}\in\Omega_W}
    \left|
    G(\mathcal{K})-\widehat{G}(\mathcal{K})
    \right|
    \le
    \delta_m .
\end{equation}
\end{assumption}

\begin{assumption}[Surrogate Planning Accuracy]
\label{assum:planning_accuracy}
Let $\widehat G$ be the fixed surrogate objective used to evaluate allocations over $\Omega_W$, and let
$\mathcal{K}_{\mathrm{TSFT}}\in\Omega_W$ be the allocation returned by TSFT. We assume that the executed allocation is $\eta_{\mathrm{alg}}$-optimal with respect to the surrogate objective:
\begin{equation}
    \widehat{G}(\mathcal{K}_{\mathrm{TSFT}})
    \ge
    \max_{\mathcal{K}\in\Omega_W}
    \widehat{G}(\mathcal{K})
    -
    \eta_{\mathrm{alg}} .
\end{equation}
\end{assumption}

\begin{theorem}[Performance Bound under Surrogate Model Error]
\label{thm:model_error_bound}
Let $\mathcal{K}_{\mathrm{OW}}^* \in \arg\max_{\mathcal{K}\in\Omega_W} G(\mathcal{K})$ be the Oracle-Warmup allocation, and let $\mathcal{K}_{\mathrm{TSFT}}$ be the allocation returned by TSFT. Under Assumptions~\ref{assum:bounded_error} and~\ref{assum:planning_accuracy}, we have
\begin{equation}
    G(\mathcal{K}_{\mathrm{OW}}^*)
    -
    G(\mathcal{K}_{\mathrm{TSFT}})
    \le
    2\delta_m+\eta_{\mathrm{alg}}.
\end{equation}
For a one-shot exact full-horizon planner that optimizes the same fixed surrogate objective and executes the resulting allocation exactly, $\eta_{\mathrm{alg}}=0$, and the post-warmup gap is bounded by $2\delta_m$.
\end{theorem}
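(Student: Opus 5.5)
The plan is the standard three-term telescoping argument for predict-then-optimize bounds. Write $\widehat{\mathcal{K}}^*\in\arg\max_{\mathcal{K}\in\Omega_W}\widehat G(\mathcal{K})$ for an exact maximizer of the fixed surrogate over the post-warmup feasible set. Such a maximizer exists because $\Omega_W$ is a finite subset of $\mathbb{Z}_{\ge0}^N$. Both $\mathcal{K}_{\mathrm{OW}}^*$ and $\mathcal{K}_{\mathrm{TSFT}}$ lie in $\Omega_W$, so Assumption~\ref{assum:bounded_error} applies to each of them. The gap is then split as
\begin{equation*}
G(\mathcal{K}_{\mathrm{OW}}^*)-G(\mathcal{K}_{\mathrm{TSFT}})
=\bigl[G(\mathcal{K}_{\mathrm{OW}}^*)-\widehat G(\mathcal{K}_{\mathrm{OW}}^*)\bigr]
+\bigl[\widehat G(\mathcal{K}_{\mathrm{OW}}^*)-\widehat G(\mathcal{K}_{\mathrm{TSFT}})\bigr]
+\bigl[\widehat G(\mathcal{K}_{\mathrm{TSFT}})-G(\mathcal{K}_{\mathrm{TSFT}})\bigr].
\end{equation*}

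The three brackets are bounded one at a time.
\begin{itemize}
\item The first and third brackets are each at most $\delta_m$ by the uniform bound in Assumption~\ref{assum:bounded_error}, since both allocations lie in $\Omega_W$.
\item For the middle bracket, first use $\widehat G(\mathcal{K}_{\mathrm{OW}}^*)\le \max_{\mathcal{K}\in\Omega_W}\widehat G(\mathcal{K})=\widehat G(\widehat{\mathcal{K}}^*)$, which holds because $\mathcal{K}_{\mathrm{OW}}^*$ is feasible. Then apply Assumption~\ref{assum:planning_accuracy}, which gives $\widehat G(\widehat{\mathcal{K}}^*)-\widehat G(\mathcal{K}_{\mathrm{TSFT}})\le\eta_{\mathrm{alg}}$.
\end{itemize}
Summing the three bounds yields $2\delta_m+\eta_{\mathrm{alg}}$.

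For the second claim, consider a one-shot exact full-horizon planner. It solves the ILP of Eqs.~\eqref{eq:ilp_obj}--\eqref{eq:ilp_binary} with estimated coverage sets $\widehat{\mathcal{C}}$, restricted to $\Omega_W$, and executes the result exactly. The returned allocation is then itself an element of $\arg\max_{\Omega_W}\widehat G$. For this I only need the ILP restricted to $\Omega_W$ to be an exact reformulation of $\max_{\Omega_W}\widehat G$, as the paper argues for the unrestricted version. The restriction itself is imposed by fixing $z_{n,k}=0$ for $k<k_n^0$. Assumption~\ref{assum:planning_accuracy} therefore holds with $\eta_{\mathrm{alg}}=0$.

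The one step needing care is checking that the ILP optimum equals $\max\widehat G$. Given $z$, the optimal choice of $q$ sets $q_x=1$ exactly when some selected level covers $x$, so the objective equals the surrogate union size.

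Theorem~\ref{thm:oracle_gap} then follows immediately: add and subtract $G(\mathcal{K}_{\mathrm{OW}}^*)$ in $G(\mathcal{K}^*)-G(\mathcal{K}_{\mathrm{TSFT}})$ and apply the bound just obtained. No real obstacle is expected. The only subtlety is interpretive: $\widehat G$ must be a single fixed surrogate, even though TSFT refits its models online. This is exactly what Assumption~\ref{assum:planning_accuracy} stipulates, and I would remark on it rather than try to prove it.
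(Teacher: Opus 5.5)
Your proposal is correct and matches the paper's proof: it uses the same add-and-subtract of $\widehat G$ at $\mathcal{K}_{\mathrm{OW}}^*$ and $\mathcal{K}_{\mathrm{TSFT}}$, with Assumption~\ref{assum:bounded_error} bounding the two outer terms by $\delta_m$ each, and the middle term bounded by $\eta_{\mathrm{alg}}$ via feasibility of $\mathcal{K}_{\mathrm{OW}}^*$ in $\Omega_W$ together with Assumption~\ref{assum:planning_accuracy}. Your extra check that the ILP restricted to $\Omega_W$ (fixing $z_{n,k}=0$ for $k<k_n^0$) exactly maximizes $\widehat G$ justifies the $\eta_{\mathrm{alg}}=0$ remark, which the paper only asserts.
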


\begin{proof}
By adding and subtracting the surrogate objective, we obtain
\begin{align}
    G(\mathcal{K}_{\mathrm{OW}}^*) - G(\mathcal{K}_{\mathrm{TSFT}})
    &=
    \left[
    G(\mathcal{K}_{\mathrm{OW}}^*)-\widehat{G}(\mathcal{K}_{\mathrm{OW}}^*)
    \right]
    +
    \left[
    \widehat{G}(\mathcal{K}_{\mathrm{OW}}^*)
    -
    \widehat{G}(\mathcal{K}_{\mathrm{TSFT}})
    \right] \notag\\
    &\quad+
    \left[
    \widehat{G}(\mathcal{K}_{\mathrm{TSFT}})
    -
    G(\mathcal{K}_{\mathrm{TSFT}})
    \right].
\end{align}
By Assumption~\ref{assum:bounded_error}, the first and third terms are each upper bounded by
$\delta_m$. Moreover, by Assumption~\ref{assum:planning_accuracy},
\begin{equation}
    \widehat{G}(\mathcal{K}_{\mathrm{OW}}^*)
    -
    \widehat{G}(\mathcal{K}_{\mathrm{TSFT}})
    \le
    \max_{\mathcal{K}\in\Omega_W}\widehat{G}(\mathcal{K})
    -
    \widehat{G}(\mathcal{K}_{\mathrm{TSFT}})
    \le
    \eta_{\mathrm{alg}} .
\end{equation}
Combining the three inequalities gives
\begin{equation}
    G(\mathcal{K}_{\mathrm{OW}}^*)
    -
    G(\mathcal{K}_{\mathrm{TSFT}})
    \le
    2\delta_m+\eta_{\mathrm{alg}} .
\end{equation}
\end{proof}

\textbf{Remark on Coverage Error Assumption.}
Assumption~\ref{assum:bounded_error} is stated directly at the level of the coverage objective. This
is because $G$ is a thresholded objective: small prediction errors in task performance can change the
coverage indicator for tasks whose true performance lies close to the coverage threshold. Therefore,
$\delta_m$ should be interpreted as the induced coverage-level error of the surrogate model. In
practice, online model re-estimation can reduce this error as more evaluation data are collected,
although a monotonic decrease of $\delta_m$ is not guaranteed without additional assumptions.

Next, we decompose the gap between TSFT and the full Oracle. Assume $\|\mathcal K^0\|_1\le K$, so that $\Omega_W\subseteq\Omega_0$. Let the full feasible allocation set be
\begin{equation}
    \Omega_0
    =
    \left\{
    \mathcal{K}\in\mathbb{Z}_{\ge 0}^N:
    \|\mathcal{K}\|_1\le K
    \right\}.
\end{equation}
The Oracle optimizes over $\Omega_0$, whereas Oracle-Warmup and TSFT optimize over the
restricted post-warmup feasible set $\Omega_W\subseteq\Omega_0$. Let $\mathcal{K}^*\in \arg\max_{\mathcal{K}\in\Omega_0} G(\mathcal{K})$
be the Oracle allocation.

\begin{theorem}[Oracle Gap Decomposition]
\label{thm:oracle_decomposition}
Define the warmup error as
\begin{equation}
    \mathcal{E}_{\mathrm{warmup}}
    =
    G(\mathcal{K}^*)-G(\mathcal{K}_{\mathrm{OW}}^*) .
\end{equation}
Then the total optimality gap of TSFT with respect to the full Oracle satisfies
\begin{equation}
    G(\mathcal{K}^*)-G(\mathcal{K}_{\mathrm{TSFT}})
    \le
    \mathcal{E}_{\mathrm{warmup}}
    +
    2\delta_m
    +
    \eta_{\mathrm{alg}} .
\end{equation}
\end{theorem}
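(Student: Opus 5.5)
The proof telescopes the gap through the Oracle-Warmup allocation and then invokes Theorem~\ref{thm:model_error_bound}. We write
\begin{equation*}
    G(\mathcal{K}^*)-G(\mathcal{K}_{\mathrm{TSFT}})
    =
    \bigl[G(\mathcal{K}^*)-G(\mathcal{K}_{\mathrm{OW}}^*)\bigr]
    +
    \bigl[G(\mathcal{K}_{\mathrm{OW}}^*)-G(\mathcal{K}_{\mathrm{TSFT}})\bigr].
\end{equation*}
This is an identity: adding and subtracting $G(\mathcal{K}_{\mathrm{OW}}^*)$ requires no assumptions. By definition, the first bracket is exactly $\mathcal{E}_{\mathrm{warmup}}$, so it is left as is.

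For the second bracket, we apply Theorem~\ref{thm:model_error_bound} directly. Its hypotheses are Assumptions~\ref{assum:bounded_error} and~\ref{assum:planning_accuracy}, together with $\mathcal{K}_{\mathrm{OW}}^*\in\arg\max_{\Omega_W}G$ and $\mathcal{K}_{\mathrm{TSFT}}\in\Omega_W$. Under these, it gives $G(\mathcal{K}_{\mathrm{OW}}^*)-G(\mathcal{K}_{\mathrm{TSFT}})\le 2\delta_m+\eta_{\mathrm{alg}}$. Adding this to the identity yields the claimed bound.

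The only point needing care is that the three allocations are well defined and comparable. The standing assumption $\|\mathcal{K}^0\|_1\le K$ makes $\Omega_W$ nonempty, since it contains $\mathcal{K}^0$. The set $\Omega_W$ is also finite, so the argmax defining $\mathcal{K}_{\mathrm{OW}}^*$ exists. The same reasoning applies to $\Omega_0\supseteq\Omega_W$, so $\mathcal{K}^*$ exists as well. We also note that $\Omega_W\subseteq\Omega_0$ implies $\mathcal{E}_{\mathrm{warmup}}\ge 0$. This sign is not needed for the inequality, but it supports reading the term as a nonnegative warmup penalty.

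The main (mild) obstacle is making sure Assumption~\ref{assum:planning_accuracy} is read as stated. It is a property of the final executed allocation with respect to a single fixed surrogate $\widehat G$, not of each online ILP solve. With that reading, no argument about the iterative re-fitting in Alg.~\ref{alg} is required, and the proof reduces to the triangle-type decomposition above.
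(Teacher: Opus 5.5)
Your proposal is correct and matches the paper's proof. Both add and subtract $G(\mathcal{K}_{\mathrm{OW}}^*)$, take the first bracket as $\mathcal{E}_{\mathrm{warmup}}$ by definition, and bound the second bracket by Theorem~\ref{thm:model_error_bound}; your remarks on the existence of maximizers over the finite sets $\Omega_W\subseteq\Omega_0$ and on $\mathcal{E}_{\mathrm{warmup}}\ge 0$ are sound details the paper leaves implicit.
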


\begin{proof}
By adding and subtracting the performance of Oracle-Warmup, we obtain the exact decomposition
\begin{align}
    G(\mathcal{K}^*)-G(\mathcal{K}_{\mathrm{TSFT}})
    &=
    \underbrace{
    \left[
    G(\mathcal{K}^*)-G(\mathcal{K}_{\mathrm{OW}}^*)
    \right]
    }_{\text{warmup error}}
    +
    \underbrace{
    \left[
    G(\mathcal{K}_{\mathrm{OW}}^*)-G(\mathcal{K}_{\mathrm{TSFT}})
    \right]
    }_{\text{post-warmup planning error}} .
\end{align}
The first term is $\mathcal{E}_{\mathrm{warmup}}$ by definition. The second term is bounded by
Theorem~\ref{thm:model_error_bound}. Therefore,
\begin{equation}
    G(\mathcal{K}^*)-G(\mathcal{K}_{\mathrm{TSFT}})
    \le
    \mathcal{E}_{\mathrm{warmup}}
    +
    2\delta_m
    +
    \eta_{\mathrm{alg}} .
\end{equation}
\end{proof}

\textbf{Interpretation.}
Theorem~\ref{thm:oracle_decomposition} decomposes the total gap to the full Oracle into two sources. The first term, $\mathcal{E}_{\mathrm{warmup}}$, captures the loss induced by committing the initial warmup budget before model-based planning begins.
Note that this term can be zero if the post-warmup state lies on an optimal trajectory (i.e., $\mathcal{K}^0\preceq \mathcal{K}^*$). We further ablate its effect on global coverage when it introduces non-zero error (see Fig.~\ref{app_hypara1} for the warmup sensitivity analysis).
The second term, bounded by $2\delta_m+\eta_{\mathrm{alg}}$, captures the post-warmup loss caused by surrogate model error and algorithmic approximation. Thus, TSFT approaches Oracle-Warmup when the surrogate coverage error is small and the planning procedure is accurate. It approaches the full Oracle when, in addition, the warmup allocation does not substantially restrict the optimal final allocation.

\textbf{Effect of Surrogate Error.}
Surrogate prediction error does not necessarily induce allocation error. If the surrogate objective preserves the ranking of high-quality feasible allocations, TSFT can still recover a strong post-warmup allocation even when $\widehat G\neq G$. Conversely, small prediction errors can affect the selected allocation when multiple feasible allocations have similar true coverage or when many tasks lie close to the threshold $\epsilon$.

\newpage
\section{Experiment Detail}
\label{app_exp}
We conduct experiments on a machine equipped with NVIDIA GeForce RTX 4090 GPUs and an AMD Ryzen Threadripper PRO 7975WX CPU for combinatorial optimization, on a machine with NVIDIA V100 GPUs and an Intel Xeon E5-2670 CPU for continuous control, and on a cluster with 16 NVIDIA H200 GPUs for LLM fine-tuning.

\subsection{Baseline}
\label{app_baseline}
Here, we detail the baseline implementations of Adaptive and LinUCB.
1) \emph{Adaptive:} We maintain a weight $w_n$ for each policy $n$, initialized uniformly. At each decision-making step, the weights are converted into allocation probabilities using a temperature-scaled softmax, and the execution budget $E$ is allocated across policies by sampling from the resulting multinomial distribution. After training, we evaluate the updated checkpoints and update each weight based on the coverage improvement per allocated budget unit. In this way, Adaptive prioritizes policies that have recently expanded their coverage set more efficiently. We set the temperature to 0.05 in all experiments.
2) \emph{LinUCB:} We formulate the budget allocation as a contextual multi-armed bandit problem \cite{li2010contextual,chu2011contextual}, where each policy corresponds to an arm. For each policy $n$, we maintain a ridge-regression estimator with a design matrix $A_n$ and a response vector $b_n$. At each decision-making step, LinUCB constructs a 5-dimensional context vector $z_n$ for each policy, which includes a bias term, normalized training progress, its squared value, the current coverage ratio, and recent coverage momentum. The estimated parameter for policy $n$ is given by $\hat{\delta}_n = A_n^{-1}b_n$. LinUCB then computes the upper-confidence score
\begin{equation}
    s_{n} = \hat{\delta}_n^\top z_{n} + \alpha \sqrt{z_{n}^\top A_n^{-1} z_{n}},
\end{equation}
where $\alpha=1.0$ controls the exploration strength. The policy $n^*$ with the largest score is selected and allocated one budget unit. After training, we evaluate the updated checkpoint and compute the reward as the monotonic coverage improvement over the historical best coverage of the selected policy, normalized by the allocated budget. This reward design avoids penalizing temporary performance drops during fine-tuning. Finally, the statistics $A_{n^*}$ and $b_{n^*}$ of the selected policy $n^*$ are updated using the observed context and reward.

\subsection{Policy Training}
\label{app_training}
\textbf{Combinatorial Optimization.} We adopt POMO~\cite{kwon2020pomo} as the policy network, a strong attention-based neural solver for routing problems. We largely follow the training setup in~\cite{kwon2020pomo}. Specifically, we use the Adam optimizer with a learning rate of $1\times 10^{-4}$, a weight decay of $1\times 10^{-6}$, and a batch size of $64$. The policy is pretrained for $5{,}000$ epochs over the entire context space, with each epoch containing $10{,}000$ training instances. Both the problem size and the POMO size are set to $100$.

\textbf{Continuous Control.} For CartPole and Ant experiments, we used PPO~\cite{schulman2017proximal} implemented by Stable Baseline3~\cite{stable-baselines3} and used the default hyperparameters, including a learning rate of $3\times 10^{-4}$, \texttt{n\_steps}$=2048$, batch size $64$, discount factor $0.99$, GAE parameter $0.95$, clipping parameter $0.2$, entropy coefficient $0$, and a value function loss coefficient $0.5$.
The policy and value networks share an MLP backbone whose hidden sizes are environment-specific: $[64,64]$ for CartPole 3D and $[256,256]$ for Ant 2D (the latter to accommodate Ant's $111$-dimensional observation). The pretraining run targets $5$M environment steps over the CartPole grid and $1$M over the Ant grid. Fine-tuning on each source region uses the same algorithm and hyperparameters across all $N$ regions; only the source task subset and the budget allocation $\mathcal{K}$ vary across runs.
For Meta-World experiments, we used MOORE~\cite{hendawy2023multi} as the base multi-task RL algorithm. We kept all MOORE hyperparameters identical to those used in the original implementation. We evaluate on the Meta-World MT50 benchmark, which contains $50$ manipulation tasks. The model is first pretrained for $50$M environment steps on all $50$ tasks, and then fine-tuned using a total budget of $50$M environment steps. During fine-tuning, the same algorithm and hyperparameter configuration are used across all task groups; only the task grouping and the budget allocation $\mathcal{K}$ vary across runs.

\textbf{LLM Fine-Tuning.} 
We perform reinforcement fine-tuning on Qwen3-4B-Base using GRPO~\cite{shao2024deepseekmath}, implemented in VeRL~\cite{sheng2025hybridflow} with a vLLM-based rollout backend \cite{kwon2023efficientmemorymanagementlarge}. All four specialized policies (DAPO-17K, MATH, GSM8K, CodeContests+) share an identical training configuration, only the training corpus changes across runs. The full hyperparameter list is given in Table~\ref{tab:llm_hparams}. After every GRPO update, the current checkpoint is evaluated on each benchmark in the context space (capped at 100 problems per benchmark to keep online evaluation tractable).

\begin{table*}[h]
\centering
\caption{GRPO hyperparameters used for LLM fine-tuning.}
\label{tab:llm_hparams}
\vspace{2pt}
\begin{small}
\renewcommand\arraystretch{0.9}
\begin{tabular}{lc}
\toprule
\textbf{Hyperparameter} & \textbf{Value} \\
\midrule
Parameter precision                   & BF16 (mixed precision) \\
\midrule
Group size (rollouts per prompt)      & 8 \\
KL loss coefficient                   & 0 (KL loss disabled) \\
KL penalty in reward                  & Disabled \\
Advantage normalization by group std  & Disabled \\
Entropy coefficient                   & 0 \\
\midrule
Optimizer                             & AdamW \\
Learning rate                         & $2\!\times\!10^{-6}$ \\
Train batch size                      & 128 \\
PPO mini-batch size                   & 64 \\
PPO micro-batch size per GPU          & 16 \\
\midrule
Max prompt length                     & 2{,}048 tokens \\
Max response length                   & 8{,}192 tokens \\
\bottomrule
\end{tabular}
\end{small}
\end{table*}

\subsection{Context Space}
\label{app_context_space}
Here, we detail the data generation, as well as the settings of the context space and source task sets.

\begin{figure*}[ht]
    \centering
    \centerline{
    \includegraphics[width=0.195\columnwidth]{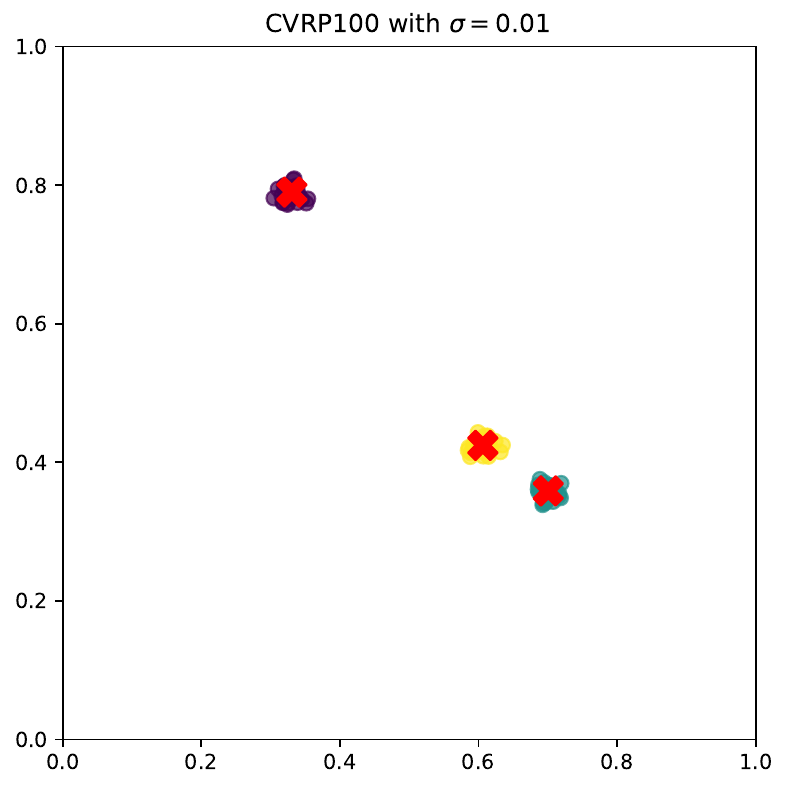}
    \includegraphics[width=0.195\columnwidth]{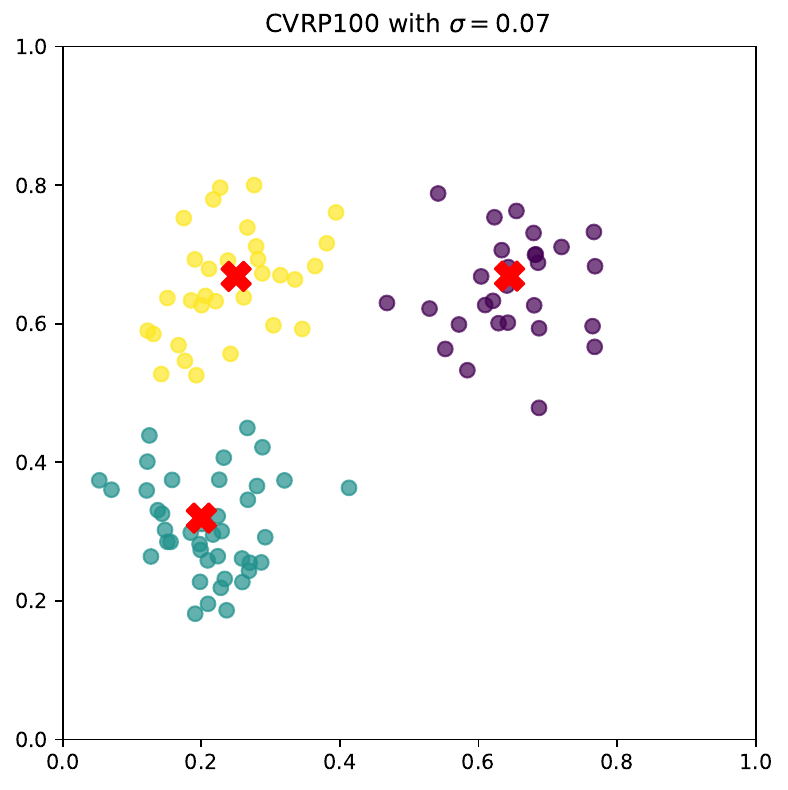}
    \includegraphics[width=0.195\columnwidth]{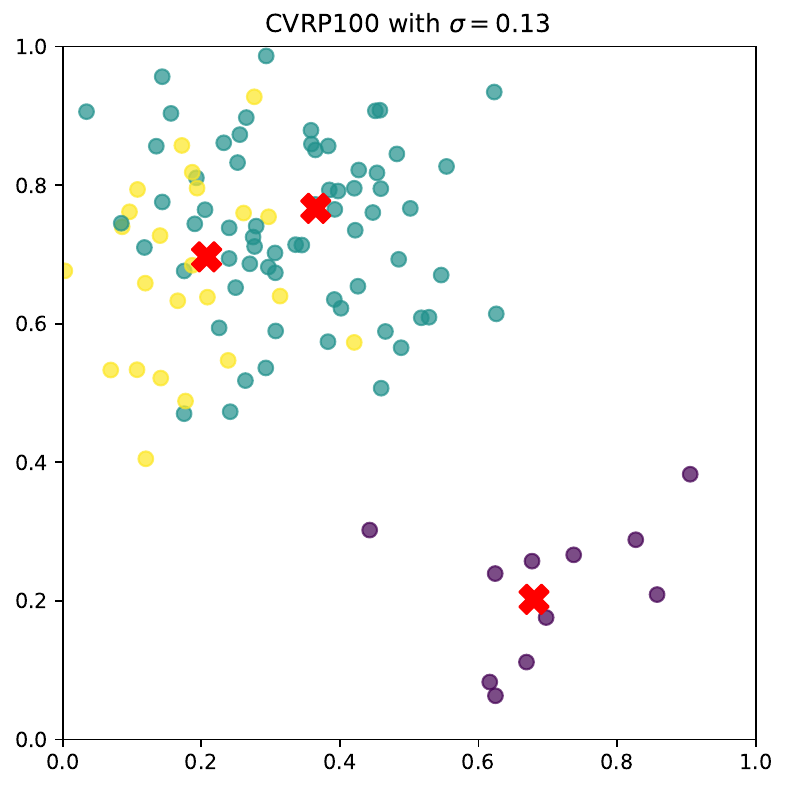}
    \includegraphics[width=0.195\columnwidth]{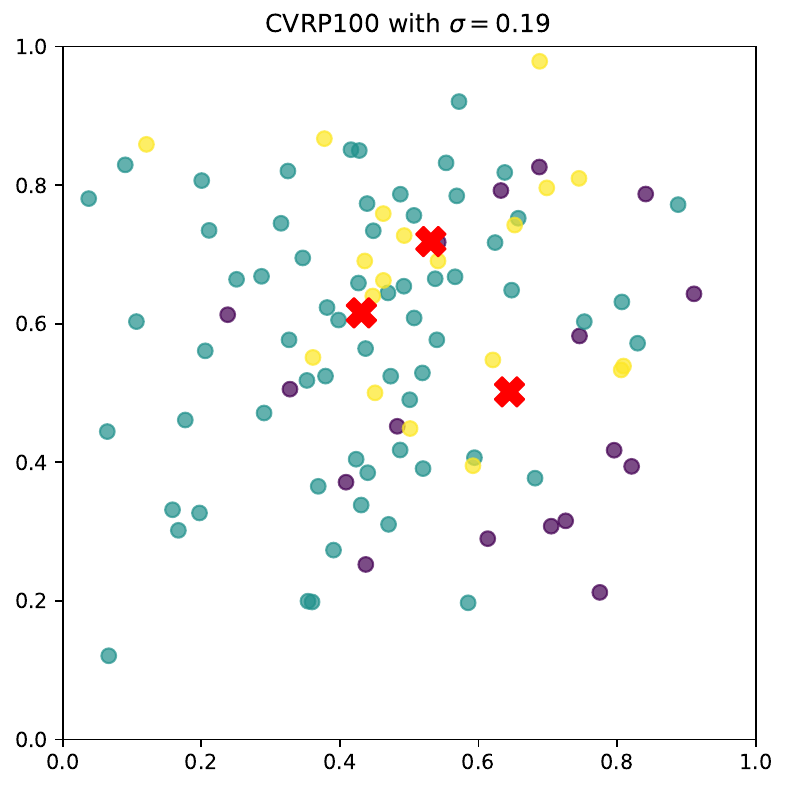}
    \includegraphics[width=0.195\columnwidth]{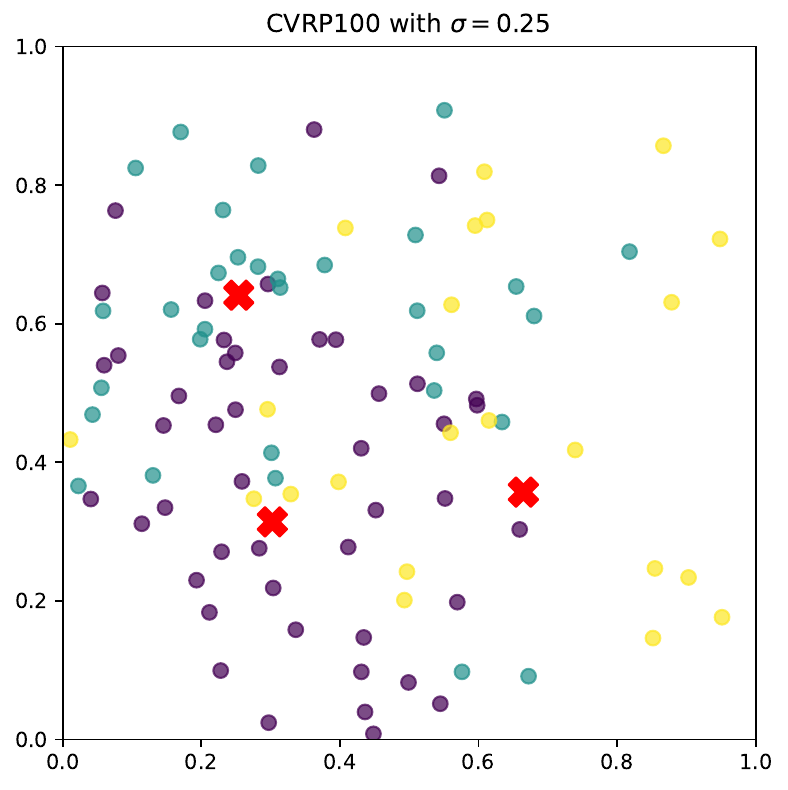}
    }
    \caption{Examples of node distributions obtained by varying the covariance $\sigma$ of a GMM.
    }
    \label{app_fig_cvrp}
\end{figure*}

\textbf{Combinatorial Optimization.} For CVRP, we follow the common settings in the literature~\cite{kool2018attention,kwon2020pomo}. The demand of each customer node is randomly sampled from a discrete uniform distribution over $\{1,2,\dots,9\}$. Before being fed into the network, each demand is normalized by the vehicle capacity. The context space is defined by two dimensions: vehicle capacity and node distribution. The vehicle capacity ranges from 10 to 400 with a step size of 10. We control the node distribution through the covariance of a Gaussian mixture model (GMM). Specifically, node locations are generated from a GMM with three clusters, where the cluster centers are sampled uniformly from $[0.2,0.8]^2$. Each cluster uses a diagonal covariance matrix $\Sigma=\operatorname{diag}(\sigma^2,\sigma^2)$, and we vary $\sigma$ to control the spatial dispersion of customer nodes along the two dimensions. Nodes sampled outside the unit square $[0,1]^2$ are rejected and resampled. A visualization example is provided in Fig.~\ref{app_fig_cvrp}. 
We construct five source task sets, each containing 49 tasks centered around a reference task (see Fig.~\ref{app_fig_context_space}). The reference tasks for $X_{S_1}, X_{S_2}, X_{S_3}, X_{S_4}, X_{S_5}$ are $(0.07,100)$, $(0.07,300)$, $(0.13,200)$, $(0.19,100)$, and $(0.19,300)$, respectively. We use $\{X_{S_1}, X_{S_3}, X_{S_5}\}$ and $\{X_{S_1}, X_{S_2}, X_{S_4}, X_{S_5}\}$ as the source task sets for the $N=3$ and $N=4$ cases in Table~\ref{table_nco_control}, respectively.

For CVRPTW, we extend CVRP by introducing time-window constraints. Specifically, each customer node $v_i$ is associated with a time window $[f_i, g_i]$ and a service time $m_i$. A vehicle must start serving customer $v_i$ within the interval $[f_i, g_i]$. If the vehicle arrives earlier than $f_i$, it must wait until $f_i$ before service can begin. All vehicles must return to the depot $v_0$ no later than $g_0$. We set the depot time window to $[f_0, g_0]=[0,4]$ and assign zero service time to the depot. The time window for each customer node $v_i$ is then generated as follows: 1) sample the time-window center $\gamma_i \sim U(f_0 + d_{0i}, g_0 - d_{i0} - m_i)$, where $d_{0i}=d_{i0}$ denotes the distance, or equivalently the travel time, between $v_0$ and $v_i$; 2) sample the time-window half-width $w_i$ uniformly at random from $[m_i/2, g_0/3]$; and 3) set the customer time window as $[f_i, g_i]=[\max(f_0, \gamma_i-w_i), \min(g_0, \gamma_i+w_i)]$. 
The context space is defined along two dimensions: vehicle capacity and time window (TW) tightness. Vehicle capacity ranges from 10 to 400 with a step size of 10. We control time window tightness by varying the service time $m_i$. Specifically, $m_i$ is varied from 0.04 to 1.00 with a step size of 0.04. Larger service times implicitly correspond to tighter time windows.
Similar to CVRP, we construct five source task sets, each containing 49 tasks centered around a reference task (see Fig. \ref{app_fig_context_space}). The reference tasks for $X_{S_1}, X_{S_2}, X_{S_3}, X_{S_4}, X_{S_5}$ are $(0.28,100)$, $(0.28,300)$, $(0.52,200)$, $(0.76,100)$, and $(0.76,300)$, respectively. We use $\{X_{S_1}, X_{S_3}, X_{S_5}\}$ and $\{X_{S_1}, X_{S_2}, X_{S_4}, X_{S_5}\}$ as the source task sets for the $N=3$ and $N=4$ cases in Table~\ref{table_nco_control}, respectively.

\textbf{Continuous Control.} 
For CartPole, we construct a $3$-dimensional context space along (pole length, cart mass, pole mass). Each axis is discretized into $10$ values evenly spaced over $[0.1, 10]\times$ the CARL defaults $(\ell_0,m_{c,0},m_{p,0})=(0.5,1.0,0.1)$, giving the explicit grids $\{0.05, 0.6, 1.15, 1.7, \dots, 5.0\}$ for pole length, $\{0.1, 1.2, 2.3, \dots, 10.0\}$ for cart mass, and $\{0.01, 0.12, 0.23, \dots, 1.0\}$ for pole mass. Source task sets correspond to $N=5$ axis-aligned $3^3 = 27$-context regions: four extremal corners and a geometric center. These regions are mutually disjoint and together cover $5\times 27 = 135$ of the $1{,}000$ context grid points; the remaining $865$ contexts serve as held-out targets that the fine-tuned policies must generalize to.

For Ant, we construct a $2$-dimensional context space along (gravity, friction). The two axes are discretized into $25$ values for gravity ($\{1.96, 2.69, \dots, 19.6\}\,\text{m/s}^2$, i.e.\ $[0.2,2.0]\times g_0$) and $40$ values for friction ($\{0.2, 0.246, \dots, 2.0\}$, i.e.\ $[0.2,2.0]\times \mu_0$), giving a $25\!\times\!40=1{,}000$-context grid.
Source task sets correspond to $N=5$ axis-aligned $5\!\times\!4=20$-context rectangles: four corners and a center. These five regions span $100$ of the $1{,}000$ grid points; the remaining $900$ contexts are held-out targets. The square and source regions are visualized in Fig.~\ref{app_fig_context_space}.

Meta-World MT50 contains 50 robotic manipulation tasks, each corresponding to a distinct task context. In MOORE, each task is initially represented by a one-hot encoding. We instead use the (K)-dimensional task-specific expert weights from the pretrained policy as the task context, since they provide a learned continuous representation of how MOORE combines experts for each task. We project the K-dimensional expert weights of the 50 tasks into a 2D space using PCA, and annotate each point with its task description, as shown in Fig. \ref{app_fig_context_space}.


\textbf{LLM Fine-Tuning.} We conduct experiments on a discrete context space comprising nine benchmarks across math reasoning, code generation, and general reasoning: DAPO-17K, MATH-500, GSM8K, CodeContests+, AIME 2024, AIME 2025, Minerva Math, MBPP, and BigCodeBench. We use benchmark-specific performance thresholds $\epsilon$, set to [0.75, 0.10, 0.10, 0.27, 0.36, 0.30, 0.70, 0.20, 0.90] for these benchmarks, respectively.

\begin{figure*}[ht]
    \centering
    \includegraphics[width=0.3\columnwidth]{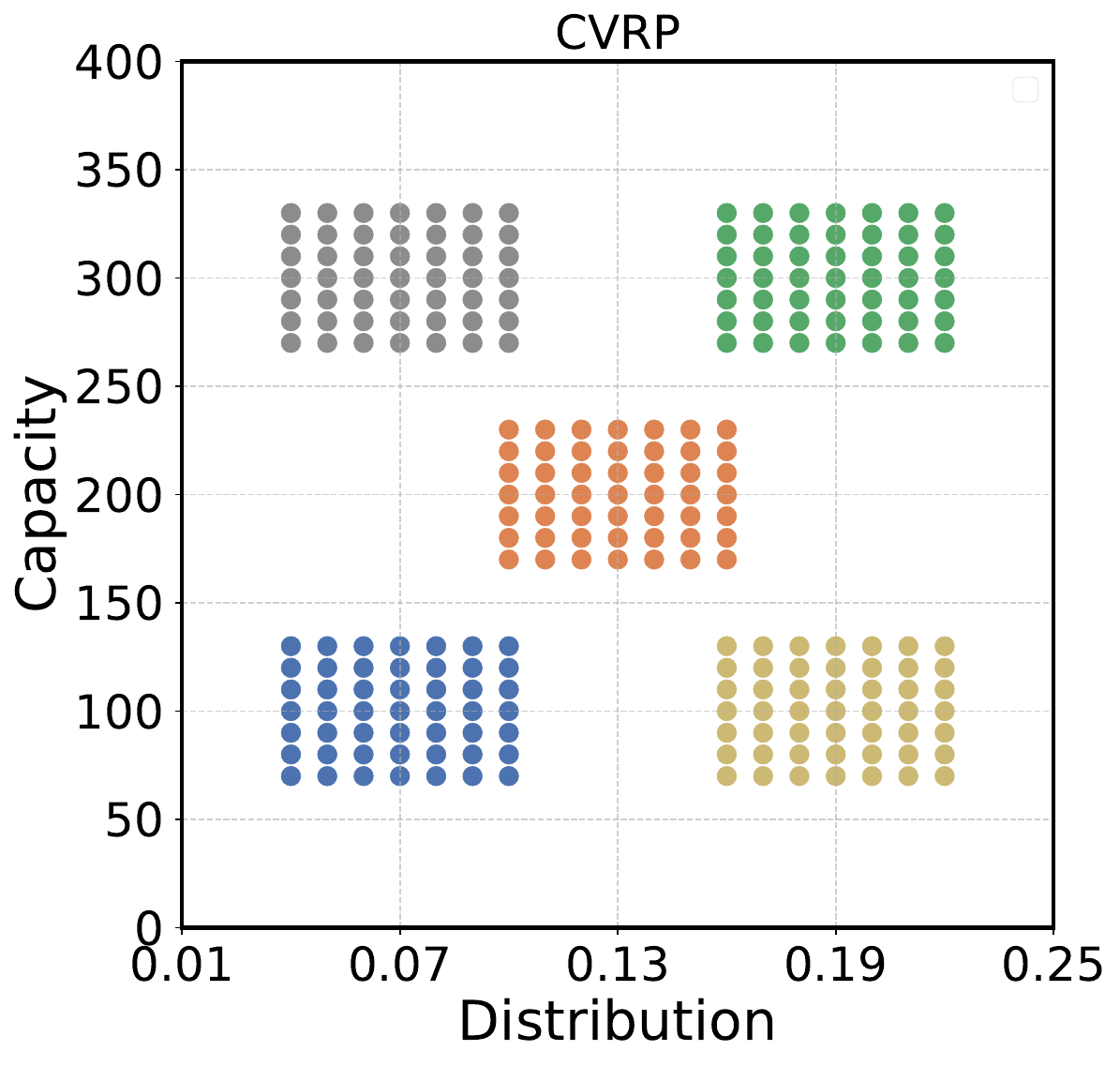}
    \includegraphics[width=0.3\columnwidth]{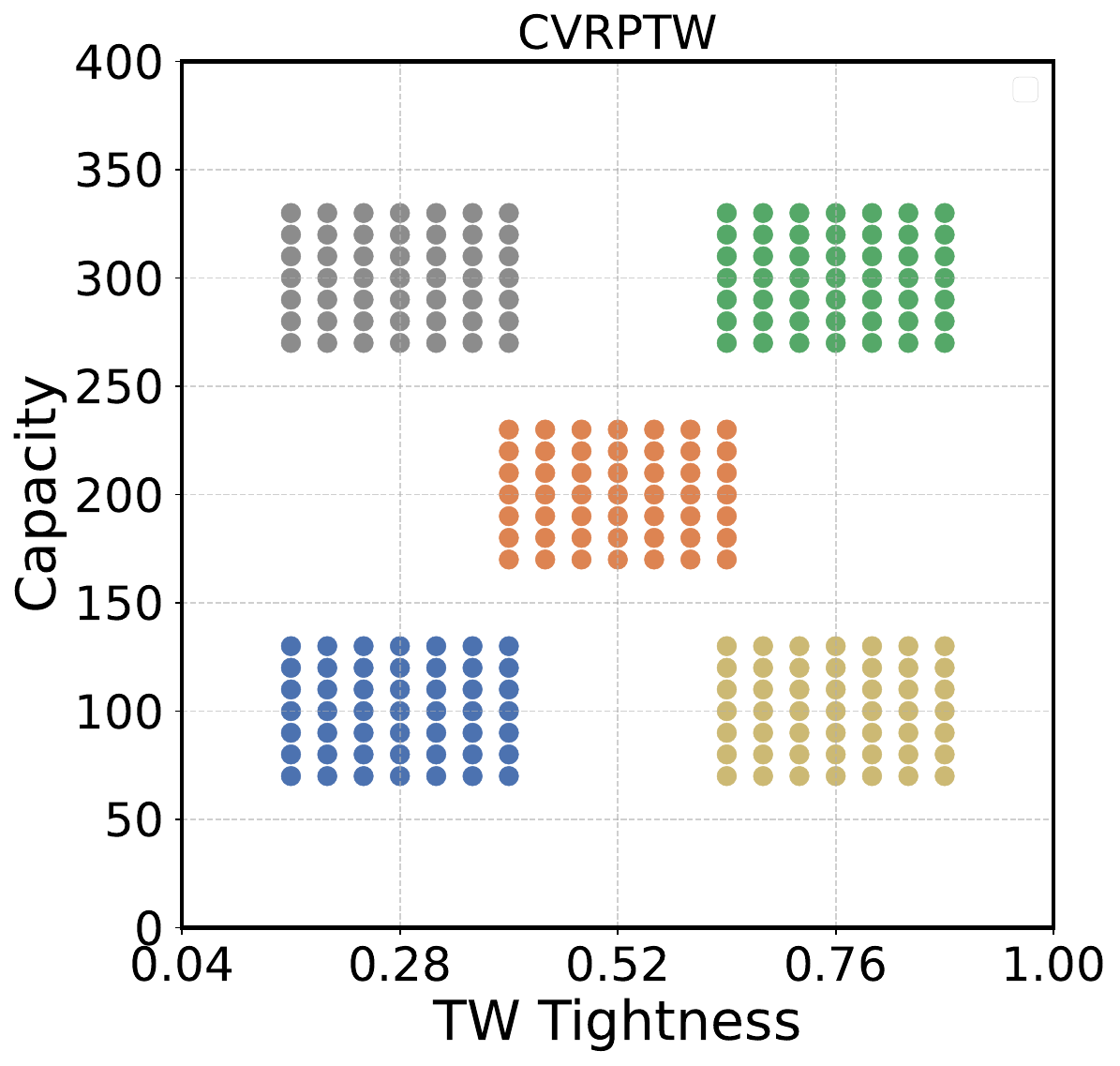}
    \includegraphics[width=0.3\columnwidth]{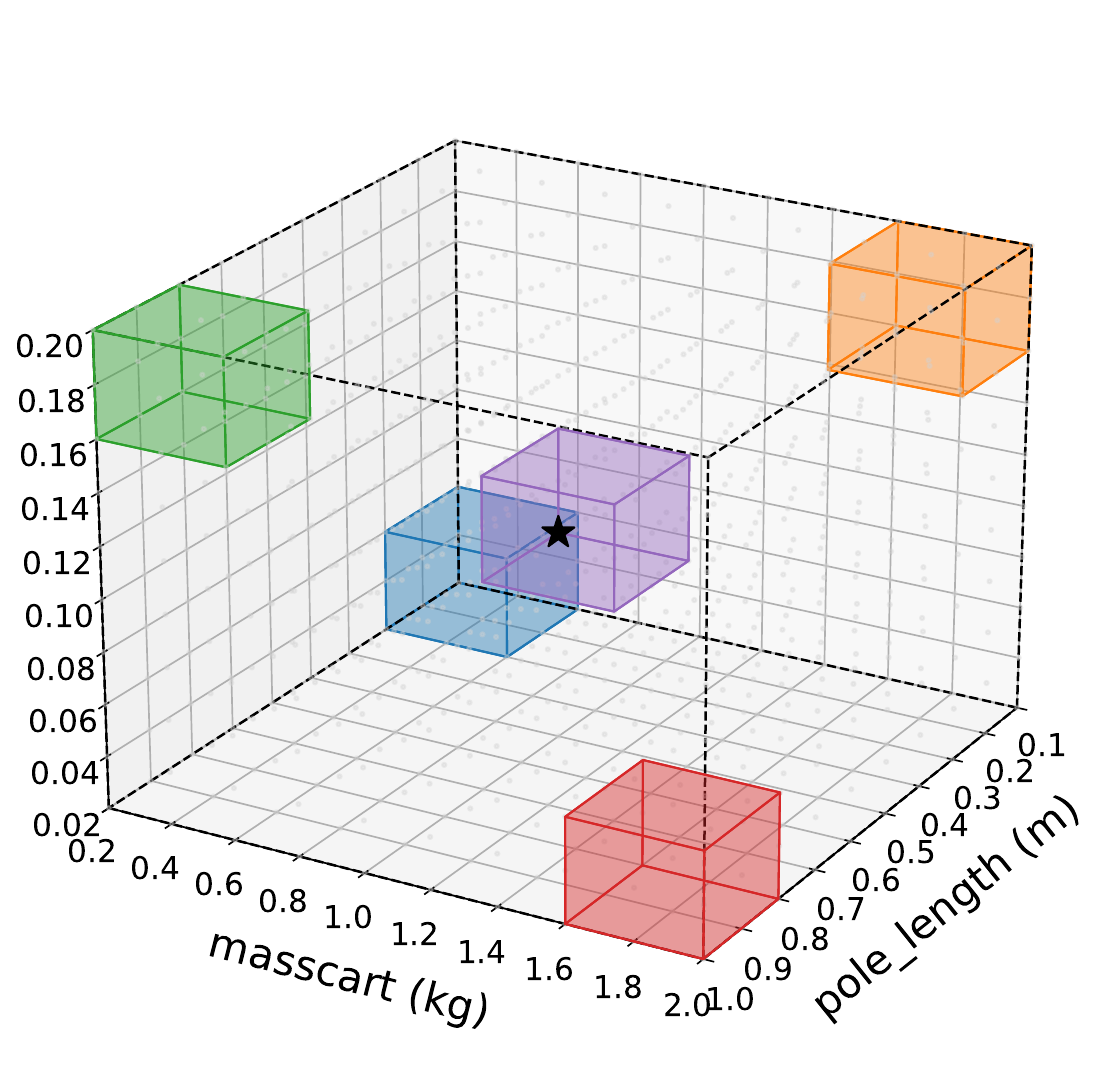}\\
    \includegraphics[width=0.32\columnwidth]{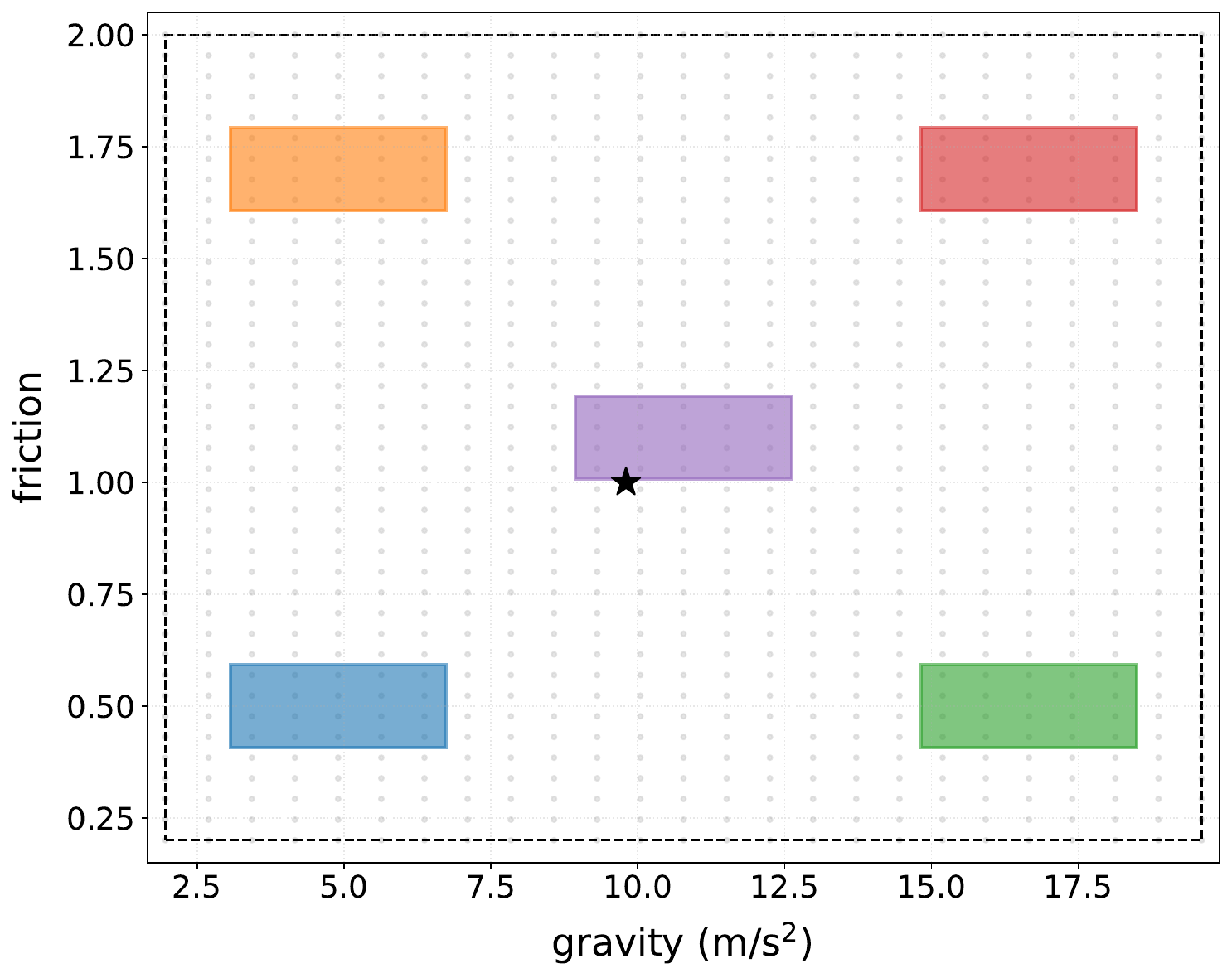}
    \raisebox{1.5mm}{\includegraphics[width=0.45\columnwidth]{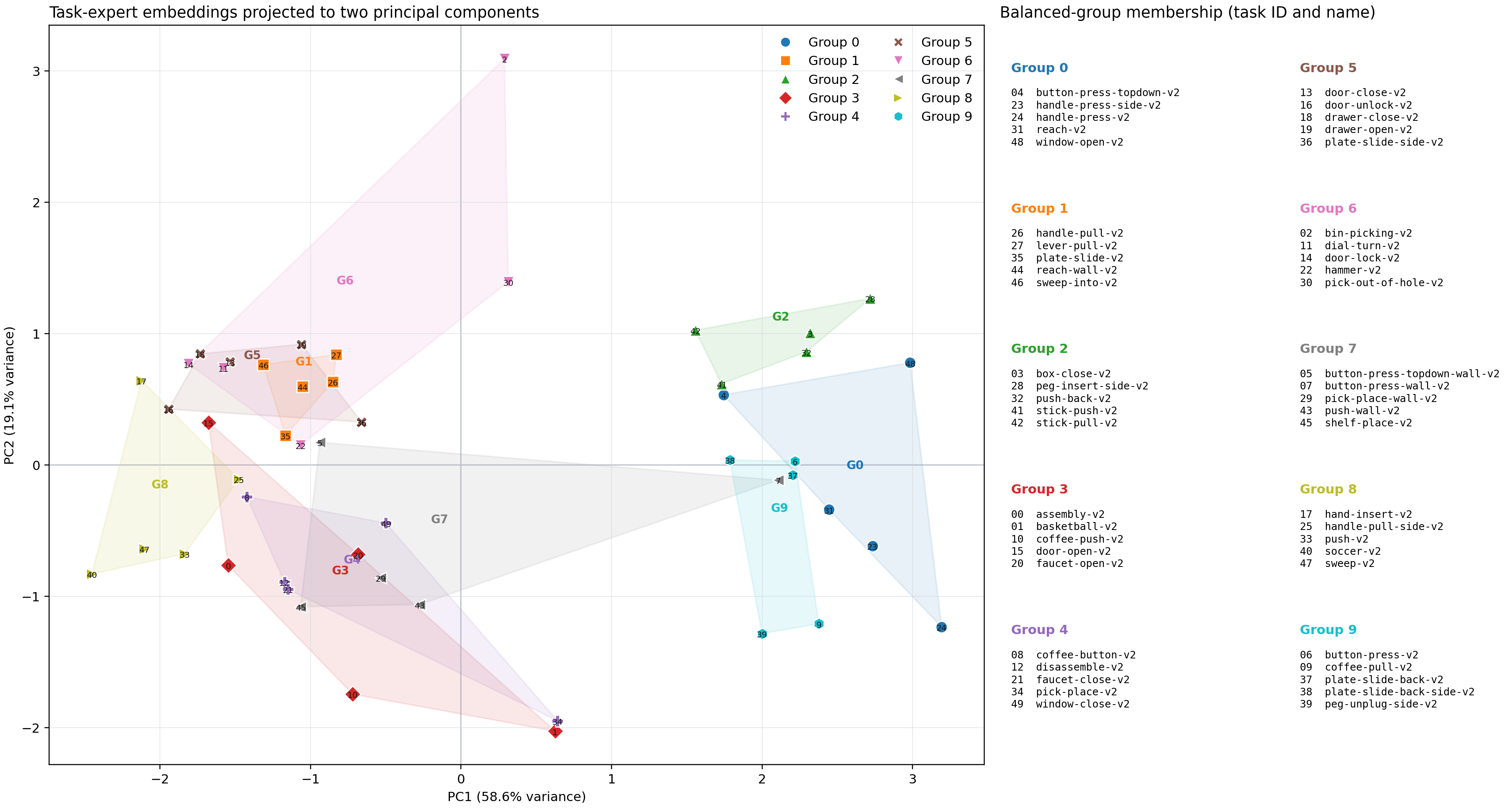}}
    \caption{Visualization of the context spaces and corresponding source task sets for CVRP, CVRPTW, CartPole, Ant, and Meta-World, respectively. The task embeddings in Meta-World are the $K$-dimensional expert weights associated with each task from the pretrained model. We use PCA to reduce them to two dimensions for visualization.
    }
    \label{app_fig_context_space}
    \vskip -0.1in
\end{figure*}

\clearpage
\newpage
\section{Additional Result}
\label{app_exp_result}
\subsection{Full Result}
We conduct experiments with 5 seeds for CartPole, and 3 seeds for Ant, with the full results reported in Tables \ref{table_control_full} and \ref{table_control_full_ant_metaworld}. For combinatorial optimization and LLM fine-tuning, we follow the common convention of reporting results from a single seed~\cite{kool2018attention,kwon2020pomo,yu2025dapo,limozin2026sft}, due to their relatively stable performance or high computational cost. 
For Meta-World, we use a single seed in the main experiments and additionally evaluate robustness by running 5 random seeds on a smaller setting with $N=5$ and $K=25$.
We summarize the results in Fig.~\ref{app_figure_bar}, where error bars denote the standard deviation across seeds. We observe that MTL can be unstable when trained over a large context space, unless equipped with sufficient model capacity and state-of-the-art MTL algorithms. For example, in Ant, allocating additional budgets to MTL does not necessarily improve task coverage. In contrast, TSFT generally performs robustly across these diverse settings. 
Moreover, we note that the large discrepancy between 93.4\% and 0.0\% coverage across Ant trials is primarily an artifact of the hard coverage threshold ($\epsilon = 6.9$). Specifically, the pretrained policies from the three seeds achieve mean returns of 6.917, 6.825, and 6.858, respectively. Therefore, 0\% coverage does not indicate a failure to learn an effective policy, but rather that the policy narrowly misses the predefined threshold.

\begin{figure*}[h]
    \centering
    \includegraphics[width=0.325\columnwidth]{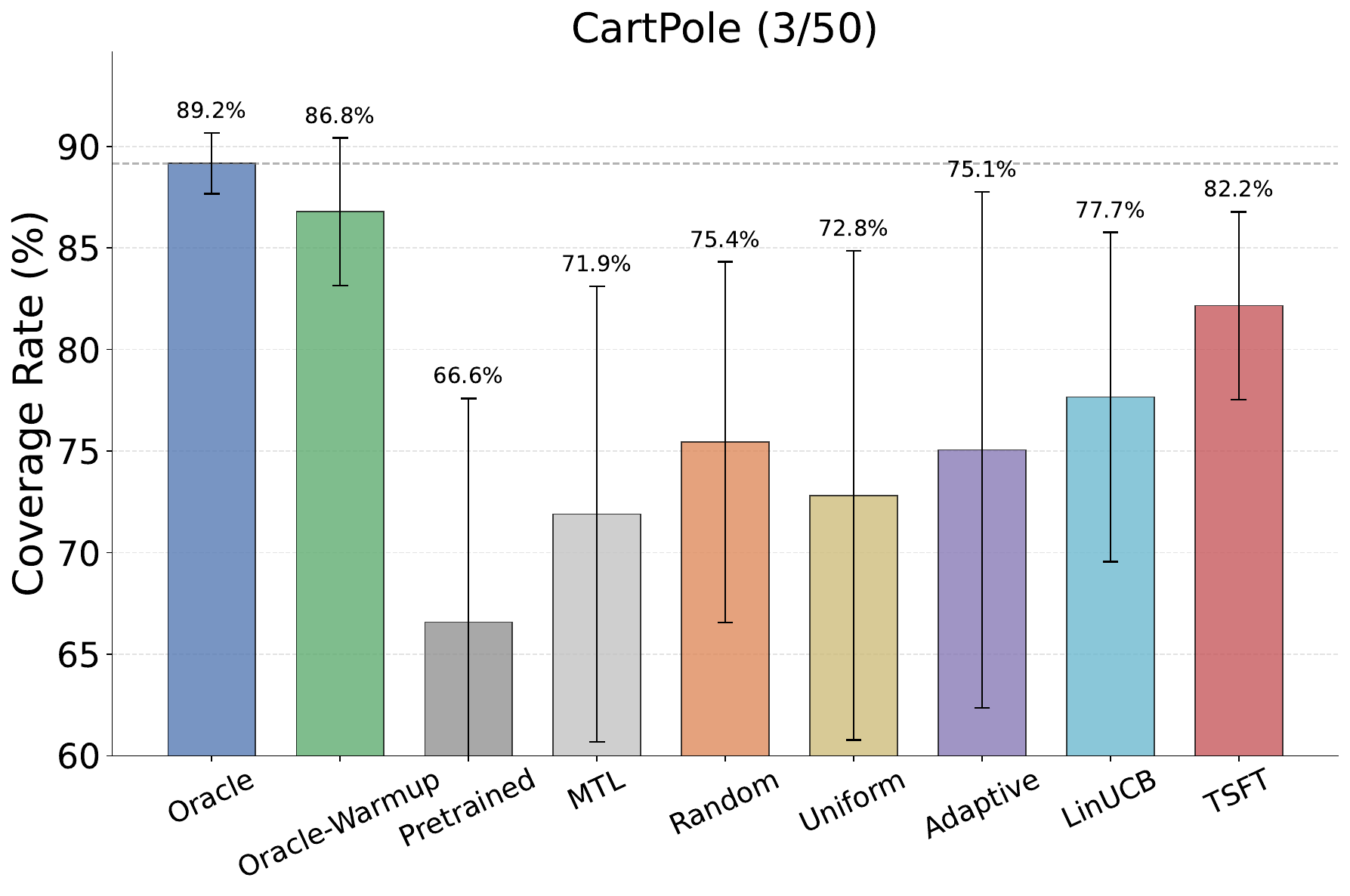}
    \includegraphics[width=0.325\columnwidth]{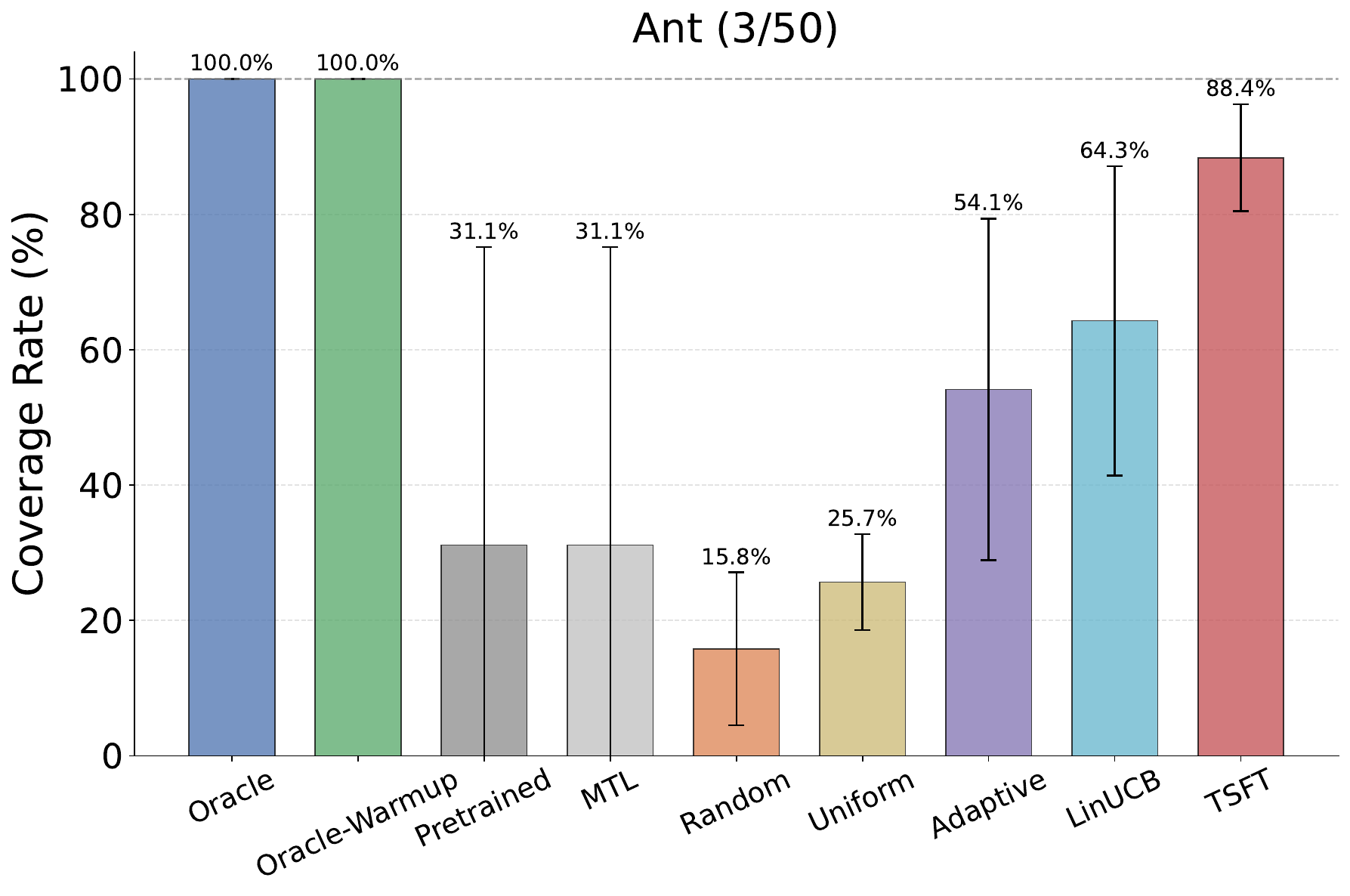}
    \includegraphics[width=0.325\columnwidth]{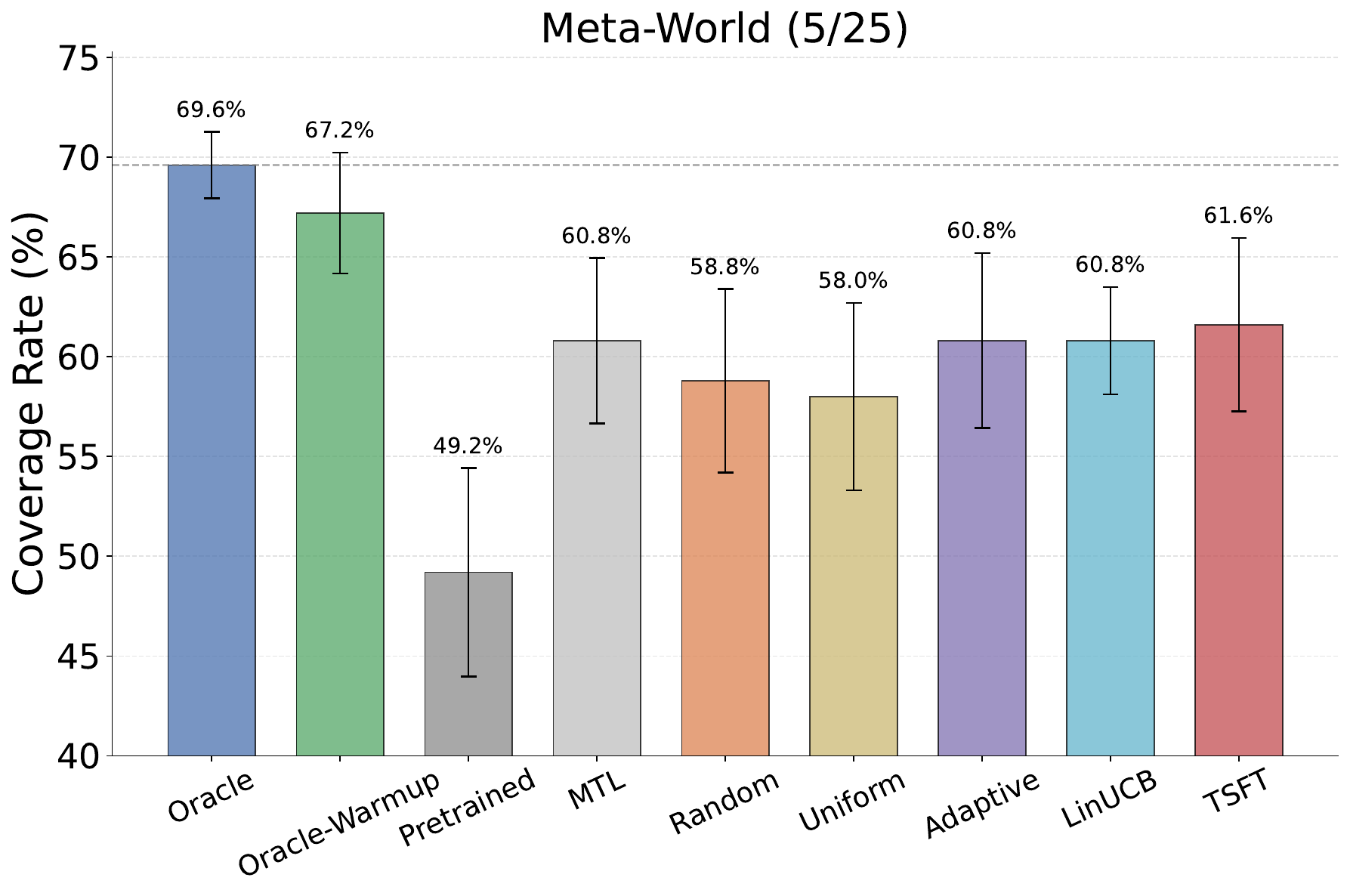}
    \caption{Coverage rate comparison on continuous control. Bars report mean coverage, and error bars denote standard deviation.}
    \label{app_figure_bar}
\end{figure*}

\begin{table*}[h]
  \caption{Full Results for CartPole.}
  \label{table_control_full}
  \begin{center}
  \begin{small}
  \renewcommand\arraystretch{1.25}  
  \resizebox{0.99\textwidth}{!}{ 
  \begin{tabular}{l|ccc|ccc|ccc|ccc|ccc}
    \toprule
     & \multicolumn{3}{c|}{\textbf{Trial 1}} 
     & \multicolumn{3}{c|}{\textbf{Trial 2}} 
     & \multicolumn{3}{c|}{\textbf{Trial 3}} 
     & \multicolumn{3}{c|}{\textbf{Trial 4}} 
     & \multicolumn{3}{c}{\textbf{Trial 5}} \\
     & 3/50 & 3/100 & 4/100 
     & 3/50 & 3/100 & 4/100 
     & 3/50 & 3/100 & 4/100 
     & 3/50 & 3/100 & 4/100 
     & 3/50 & 3/100 & 4/100 \\
    \midrule
    Oracle & 86.5\% & 86.8\% & 90.0\% & 89.7\% & 91.2\% & 91.5\% & 89.9\% & 90.5\% & 90.6\% & 90.0\% & 90.0\% & 90.2\% & 89.7\% & 90.0\% & 90.1\% \\
    Oracle-Warmup & 82.1\% & 82.4\% & 89.7\% & 83.7\% & 85.5\% & 91.5\% & 89.9\% & 90.5\% & 90.6\% & 88.5\% & 89.0\% & 90.0\% & 89.7\% & 89.9\% & 90.0\% \\
    Pretrained & 53.5\% & 53.5\% & 53.5\% & 83.1\% & 83.1\% & 83.1\% & 68.2\% & 68.2\% & 68.2\% & 60.5\% & 60.5\% & 60.5\% & 67.6\% & 67.6\% & 67.6\% \\
    \midrule
    MTL & 63.7\% & 73.0\% & 73.0\% & 83.1\% & 83.1\% & 83.1\% & 84.6\% & 88.4\% & 88.4\% & 60.5\% & 60.5\% & 60.5\% & 67.6\% & 67.6\% & 67.6\% \\
    Random & 60.2\% & 65.7\% & 79.5\% & 78.4\% & 77.5\% & \textbf{90.2\%} & 82.0\% & 87.6\% & 77.7\% & 75.6\% & 64.8\% & 76.9\% & 81.0\% & 79.4\% & 87.5\% \\
    Uniform & 55.9\% & 62.7\% & 80.4\% & 78.3\% & 63.9\% & 78.6\% & \textbf{86.4\%} & 87.4\% & \textbf{89.6\%} & 65.6\% & 40.0\% & 76.4\% & 77.9\% & 73.1\% & \textbf{88.9\%} \\
    Adaptive & 52.9\% & 59.8\% & 83.7\% & 78.4\% & 78.4\% & 89.0\% & 81.4\% & 83.5\% & 88.5\% & 77.8\% & 77.8\% & 86.6\% & 84.8\% & 84.8\% & 88.5\% \\
    LinUCB & 68.4\% & 70.3\% & 70.3\% & \textbf{88.8\%} & \textbf{89.2\%} & 89.2\% & 74.0\% & 81.6\% & 81.6\% & 74.2\% & \textbf{87.8\%} & \textbf{87.8\%} & 82.9\% & 84.2\% & 84.2\% \\
    \midrule
    TSFT & \textbf{74.3\%} & \textbf{74.5\%} & \textbf{86.2\%} & 83.3\% & 83.3\% & 89.7\% & 82.2\% & \textbf{89.8\%} & 85.3\% & \textbf{85.3\%} & 85.3\% & 87.7\% & \textbf{85.7\%} & \textbf{87.0\%} & \textbf{88.9\%} \\
    \bottomrule
  \end{tabular}}
  \end{small}
  \end{center}
\end{table*}

\begin{table*}[h]
  \caption{Full Results for Ant and Meta-World.}
  \label{table_control_full_ant_metaworld}
  \begin{center}
  \begin{small}
  \renewcommand\arraystretch{1.25}
  \resizebox{0.99\textwidth}{!}{
  \begin{tabular}{l|ccc|ccc|ccc|ccccc}
    \toprule
     & \multicolumn{3}{c|}{\textbf{Ant Trial 1}}
     & \multicolumn{3}{c|}{\textbf{Ant Trial 2}}
     & \multicolumn{3}{c|}{\textbf{Ant Trial 3}}
     & \multicolumn{5}{c}{\textbf{Meta-World 5/25}} \\
     & 3/50 & 3/100 & 4/100
     & 3/50 & 3/100 & 4/100
     & 3/50 & 3/100 & 4/100
     & Trial 1 & Trial 2 & Trial 3 & Trial 4 & Trial 5 \\
    \midrule
    Oracle & 100.0\% & 100.0\% & 100.0\% & 100.0\% & 100.0\% & 100.0\% & 100.0\% & 100.0\% & 100.0\% & 68.0\% & 72.0\% & 70.0\% & 70.0\% & 68.0\% \\
    Oracle-Warmup & 100.0\% & 100.0\% & 100.0\% & 100.0\% & 100.0\% & 100.0\% & 100.0\% & 100.0\% & 100.0\% & 62.0\% & 70.0\% & 68.0\% & 68.0\% & 68.0\% \\
    Pretrained & 93.4\% & 93.4\% & 93.4\% & 0.0\% & 0.0\% & 0.0\% & 0.0\% & 0.0\% & 0.0\% & 48.0\% & 48.0\% & 52.0\% & 56.0\% & 42.0\% \\
    \midrule
    MTL & \textbf{93.4\%} & 93.4\% & 93.4\% & 0.0\% & 0.0\% & 0.0\% & 0.0\% & 0.0\% & 0.0\% & \textbf{62.0\%} & 60.0\% & \textbf{64.0\%} & 64.0\% & 54.0\% \\
    Random & 21.6\% & 33.1\% & 60.2\% & 0.0\% & 74.2\% & 73.4\% & 25.8\% & 20.3\% & 3.5\% & 54.0\% & 58.0\% & 60.0\% & \textbf{66.0\%} & 56.0\% \\
    Uniform & 21.1\% & 28.0\% & 51.3\% & 20.2\% & \textbf{100.0\%} & 84.2\% & 35.7\% & 30.9\% & 58.7\% & 50.0\% & 58.0\% & 60.0\% & 60.0\% & \textbf{62.0\%} \\
    Adaptive & 33.1\% & 86.9\% & 94.6\% & 89.6\% & 89.6\% & \textbf{100.0\%} & 39.7\% & 87.4\% & 81.2\% & 54.0\% & 62.0\% & 60.0\% & \textbf{66.0\%} & \textbf{62.0\%} \\
    LinUCB & \textbf{93.4\%} & \textbf{96.7\%} & 96.7\% & 61.8\% & 71.9\% & 68.7\% & 37.6\% & 96.0\% & 65.4\% & 58.0\% & 58.0\% & 62.0\% & 64.0\% & \textbf{62.0\%} \\
    \midrule
    TSFT (DP) & 77.5\% & 93.0\% & \textbf{98.5\%} & \textbf{91.6\%} & \textbf{100.0\%} & \textbf{100.0\%} & \textbf{96.0\%} & \textbf{100.0\%} & \textbf{100.0\%} & 58.0\% & \textbf{68.0\%} & 60.0\% & 64.0\% & 58.0\% \\
    \bottomrule
  \end{tabular}}
  \end{small}
  \end{center}
\end{table*}

\subsection{A Comprehensive Study of Performance Model}
\label{app_model_study}
TSFT employs an exponential model to predict performance, which implicitly assumes that fine-tuning trajectories are approximately monotonic. While RL fine-tuning is inherently stochastic and no surrogate model can perfectly predict the entire training trajectory, effective budget allocation in TSFT does not require highly accurate performance prediction. Instead, it is often sufficient to capture the coarse trend of the learning curve. This observation motivates our monotonicity assumption and the use of a simple parametric model that emphasizes the global trajectory rather than short-term fluctuations. Below, we provide empirical evidence and discussion supporting this design choice.

\begin{wraptable}{r}{0.49\textwidth}
  \caption{Approximate Monotonicity Rate.}
  \label{table_monotonicity}
  \begin{center}
  \vspace{-5pt}
  \renewcommand\arraystretch{1.0}  
  \resizebox{0.49\textwidth}{!}{ 
  \begin{tabular}{l|c|c|c|c}
  \toprule
     & CVRP & CVRPTW & CartPole & Ant \\
    \midrule
    5 Points & 30.2\% & 30.7\% & 28.2\% & 3.3\% \\
    10 Points & 1.9\% & 0.8\% & 8.3\% & 0.0\% \\
    \bottomrule
  \end{tabular}}
  \end{center}
  \vskip -0.1in
\end{wraptable}

First, \emph{the monotonic model remains effective even when the actual training trajectories are noisy and locally non-monotonic}. Across evaluated domains, the observed RL learning curves exhibit substantial fluctuations. We quantify this phenomenon by uniformly sampling either 5 or 10 points from each training trajectory and measuring its monotonicity (Table~\ref{table_monotonicity}). With 10 sampled points, nearly all trajectories exhibit local non-monotonic behavior. With only 5 sampled points, the measured monotonicity increases because coarse sampling filters out high-frequency noise. Despite these local deviations, TSFT consistently achieves strong improvements, indicating that a simple monotonic model is sufficient to capture the broader trend required for effective budget allocation.

Second, \emph{monotonicity provides a simple and robust inductive bias  that reduces overfitting under limited observations}. We compare three monotonic models (Exponential, Power-Law, and Logarithmic) with four non-monotonic models (Quadratic, Cubic, Quartic, and Piecewise) on the relatively monotonic CVRP domain and the more challenging, non-monotonic Ant domain. We evaluate each model in terms of fitting error on the observed data, prediction error on future training points, and the resulting coverage achieved by TSFT (Table~\ref{table_model_error}). Overall, the monotonic models perform consistently well across both domains. While some non-monotonic models remain competitive on Ant, they perform noticeably worse on CVRP. Moreover, although the more flexible models often achieve lower in-distribution (ID) fitting error, they generally incur substantially larger out-of-distribution (OOD) prediction error, suggesting that they overfit the observed noise and produce unstable extrapolations.

\begin{table*}[h]
  \caption{Comparison of different parametric models. We report the TSFT coverage and the average in-distribution (ID) and out-of-distribution (OOD) mean squared errors (MSEs).}
  \label{table_model_error}
  \centering
  \begin{small}
  \renewcommand{\arraystretch}{1.0}
  \setlength{\tabcolsep}{5.5pt}
  \resizebox{0.99\textwidth}{!}{
  \begin{tabular}{ll|ccc|ll|ccc}
    \toprule
    \multicolumn{5}{c|}{\textbf{CVRP 3/100}}
    & \multicolumn{5}{c}{\textbf{Ant 3/50}} \\
    \cmidrule(lr){1-5}\cmidrule(lr){6-10}
    \textbf{Type} & \textbf{Model}
    & \textbf{Coverage}
    & \textbf{Avg. ID MSE}
    & \textbf{Avg. OOD MSE}
    & \textbf{Type} & \textbf{Model}
    & \textbf{Coverage}
    & \textbf{Avg. ID MSE}
    & \textbf{Avg. OOD MSE} \\
    \midrule

    \multirow{3}{*}{Monotonic}
    & Exponential
    & 18.7\% & 0.027682 & 0.103628
    & \multirow{3}{*}{Monotonic}
    & Exponential
    & 88.4\% & 0.000694 & 0.001309 \\

    & Power-Law
    & 19.7\% & 0.065287 & 0.192049
    &
    & Power-Law
    & 88.4\% & 0.000702 & 0.001048 \\

    & Logarithmic
    & 19.0\% & 0.063002 & 0.179515
    &
    & Logarithmic
    & 90.7\% & 0.000760 & 0.001076 \\
    \midrule

    \multirow{4}{*}{Non-monotonic}
    & Quadratic
    & 8.1\% & 0.027072 & 0.565207
    & \multirow{4}{*}{Non-monotonic}
    & Quadratic
    & 85.2\% & 0.000625 & 0.012584 \\

    & Cubic
    & 7.1\% & 0.015860 & 0.585428
    &
    & Cubic
    & 89.7\% & 0.000628 & 1.969833 \\

    & Quartic
    & 7.4\% & 0.015324 & 11.138788
    &
    & Quartic
    & 81.1\% & 0.000475 & 11.286251 \\

    & Piecewise
    & 15.3\% & 0.015488 & 0.147162
    &
    & Piecewise
    & 70.5\% & 0.000514 & 0.010115 \\
    \bottomrule
  \end{tabular}}
  \end{small}
\end{table*}

Third, \emph{explicitly modeling high-frequency fluctuations  provides limited practical benefit}. Beyond purely parametric models, we also evaluate a hybrid parametric Gaussian process model (PFGP), in which the parametric component captures the global trend, while a non-parametric Gaussian process models residual fluctuations, such as sudden performance jumps.
We conduct experiments on the 3/100 CVRP setting by fitting models to data collected from policies trained on the source task set $X_{S_1}$. As shown in Fig.~\ref{fig_model}, our parametric model consistently achieves higher recall than GP across future budget units, indicating that the simple exponential form provides more reliable extrapolation of fine-tuning trajectories. In contrast, the GP baseline exhibits unstable long-horizon predictions and large uncertainty when extrapolating beyond the observed budget range, which leads to more false negatives in the predicted coverage set. 
PFGP improves over GP by using the parametric function as a global trend and modeling only the residual variation, and its recall is often close to our parametric model. However, it does not provide a clear advantage over the parametric model, while incurring a significantly higher computational cost due to GP fitting. Therefore, we do not adopt PFGP in our main experiments. The bottom panels further confirm our observation: our model recovers a substantially larger portion of the ground-truth coverage set than GP, whereas PFGP achieves comparable recall without delivering a meaningful improvement.

Based on the above evidence, TSFT adopts a simple parametric model to capture the global trend of fine-tuning. Nevertheless, we acknowledge that the exponential model is not universally applicable. When the underlying fine-tuning trajectory exhibits strongly non-monotonic global behavior, the model may produce suboptimal budget allocations. Promising directions for future work include uncertainty-aware exploration strategies and more expressive surrogate models (e.g., neural networks) with stronger extrapolation capabilities.

\begin{figure*}[t]
    \centering
    \includegraphics[width=0.322\columnwidth]{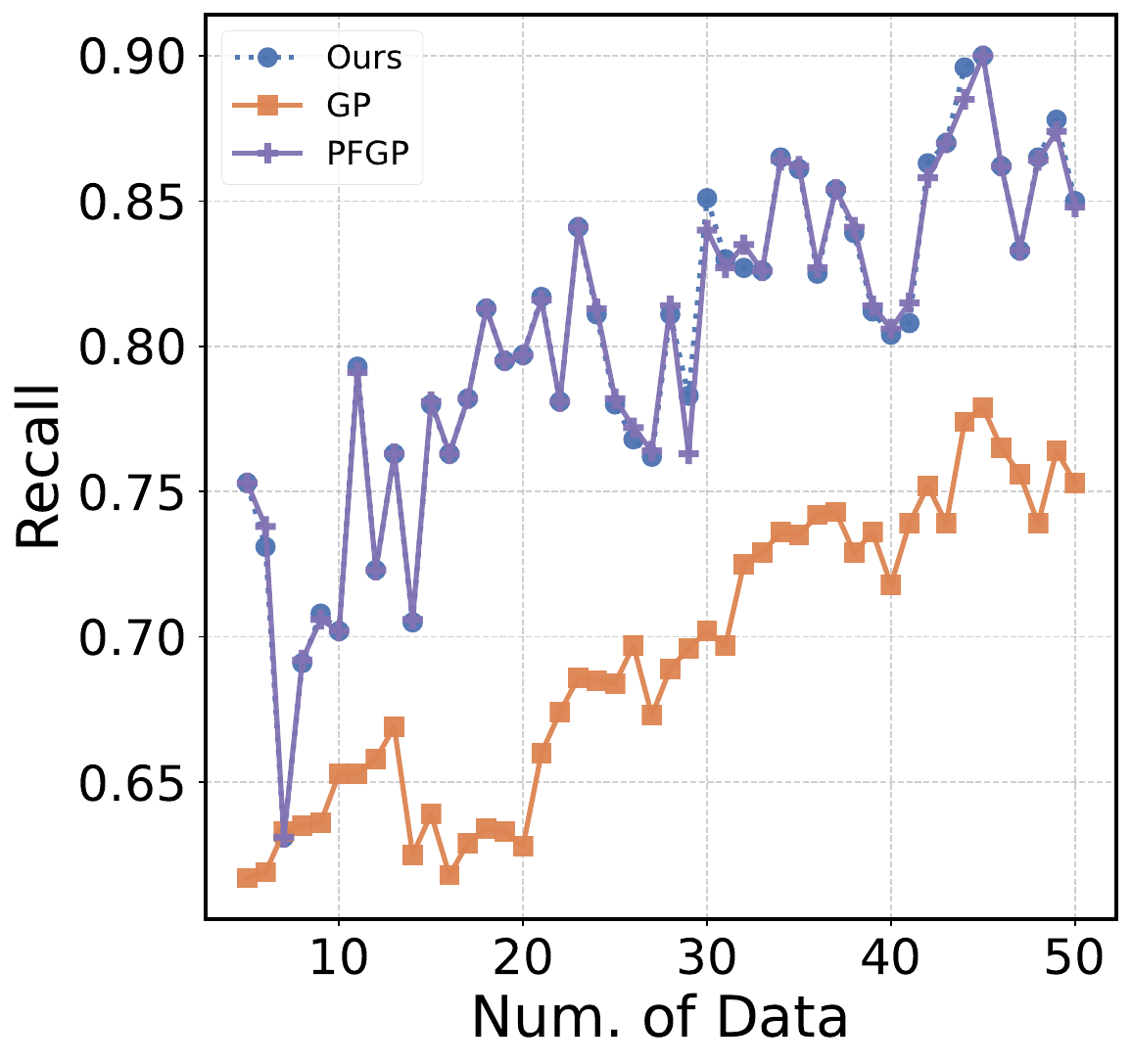}
    \includegraphics[width=0.3\columnwidth]{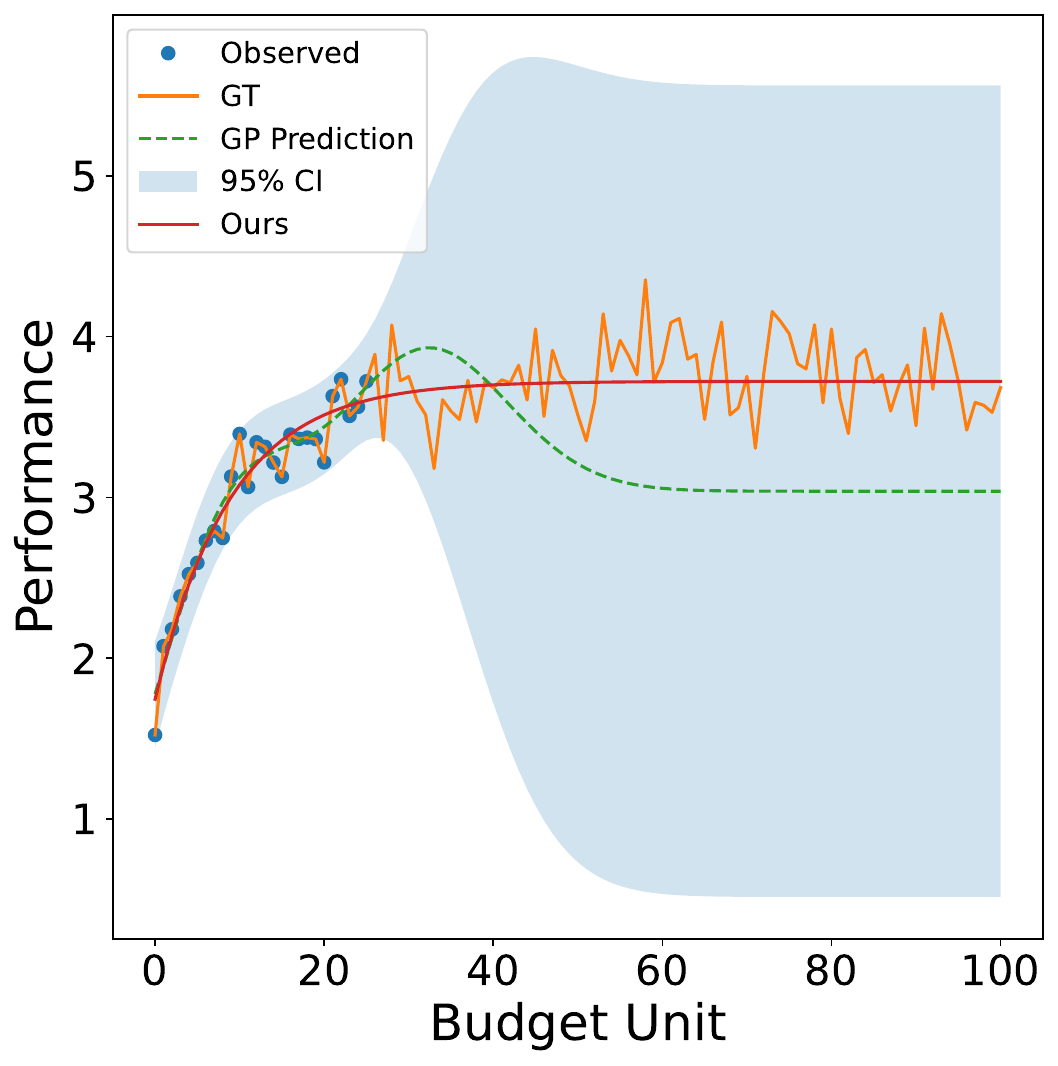}
    \includegraphics[width=0.3\columnwidth]{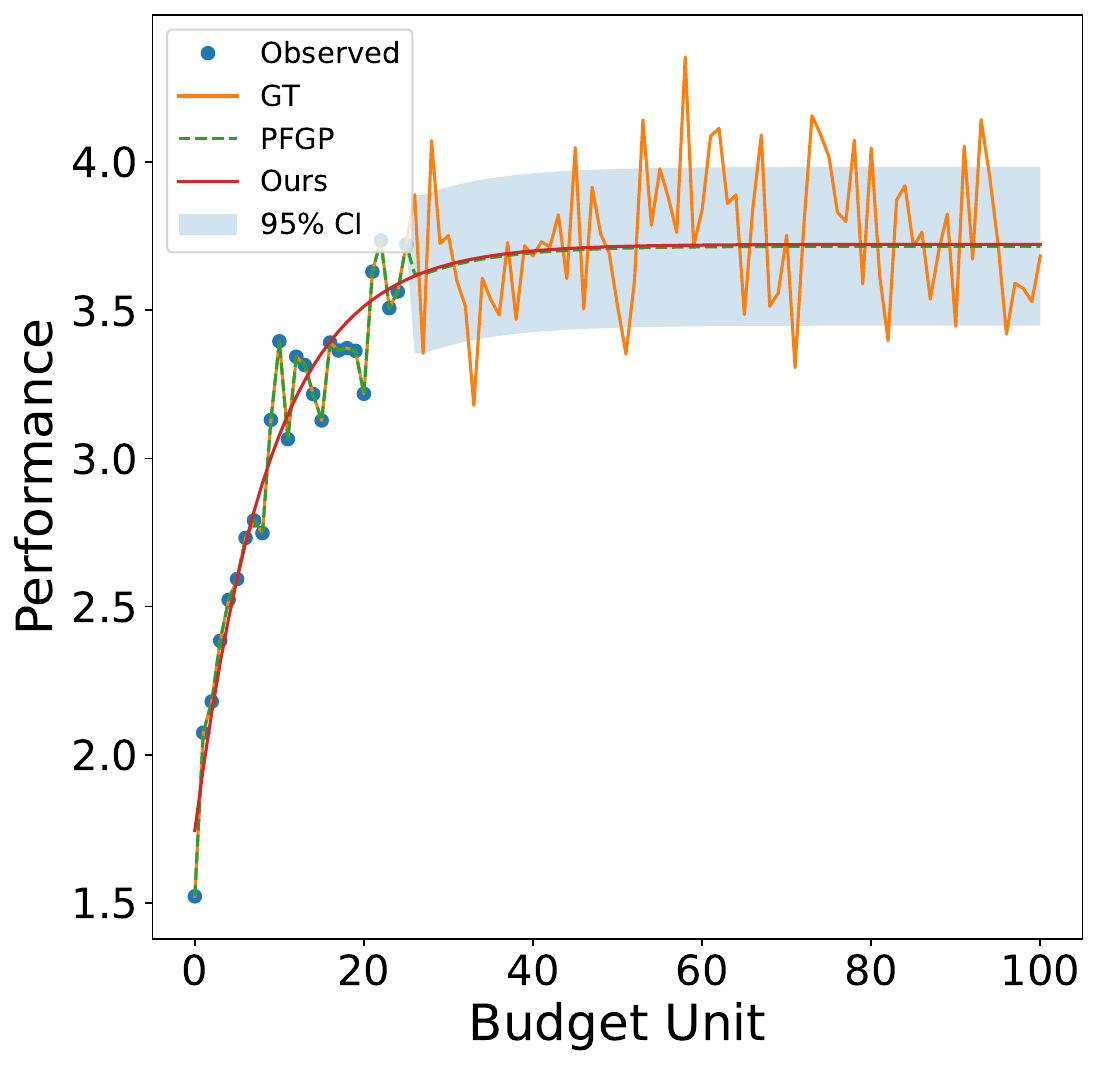}\\
    \includegraphics[width=0.3\columnwidth]{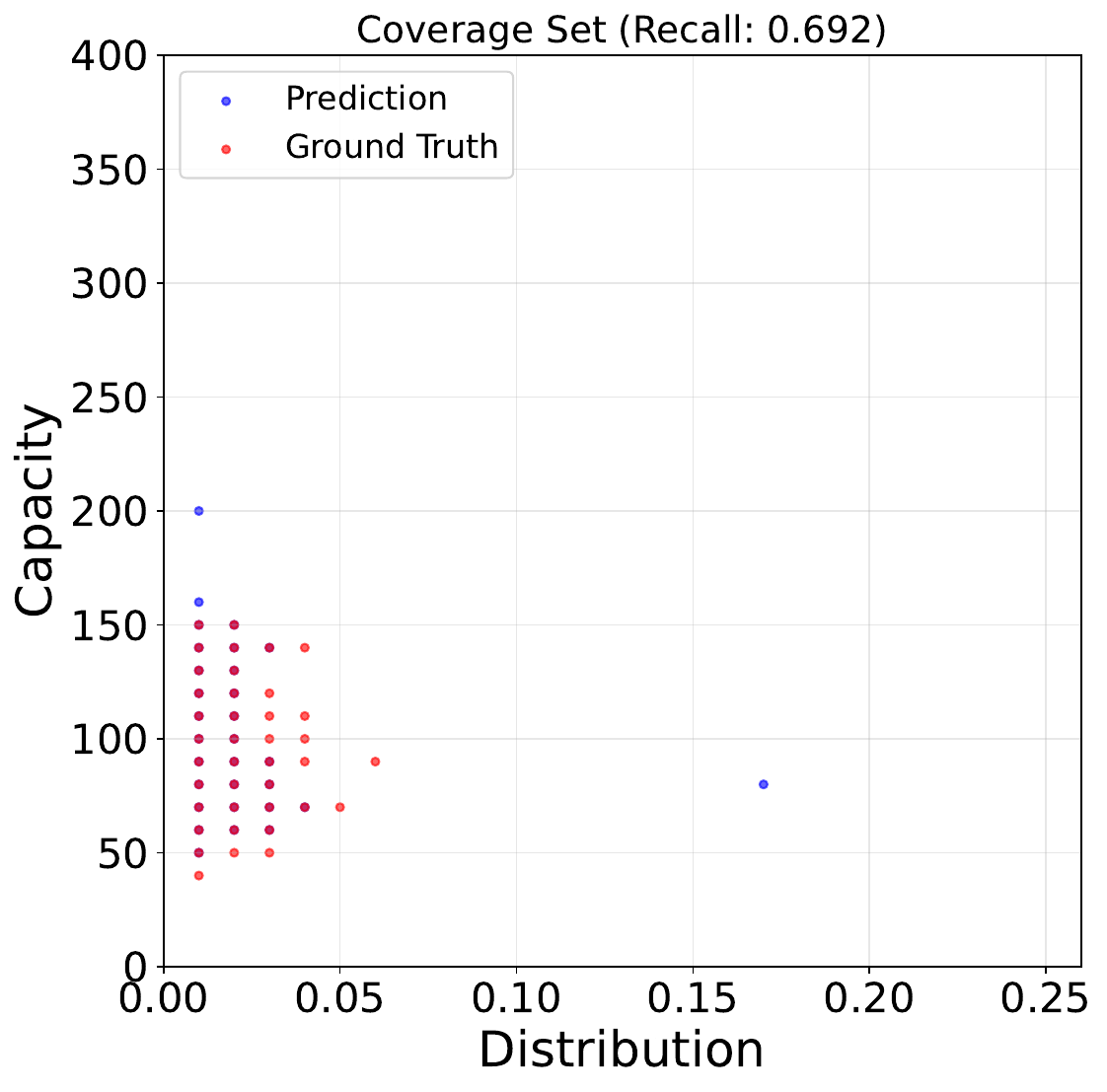}
    \includegraphics[width=0.3\columnwidth]{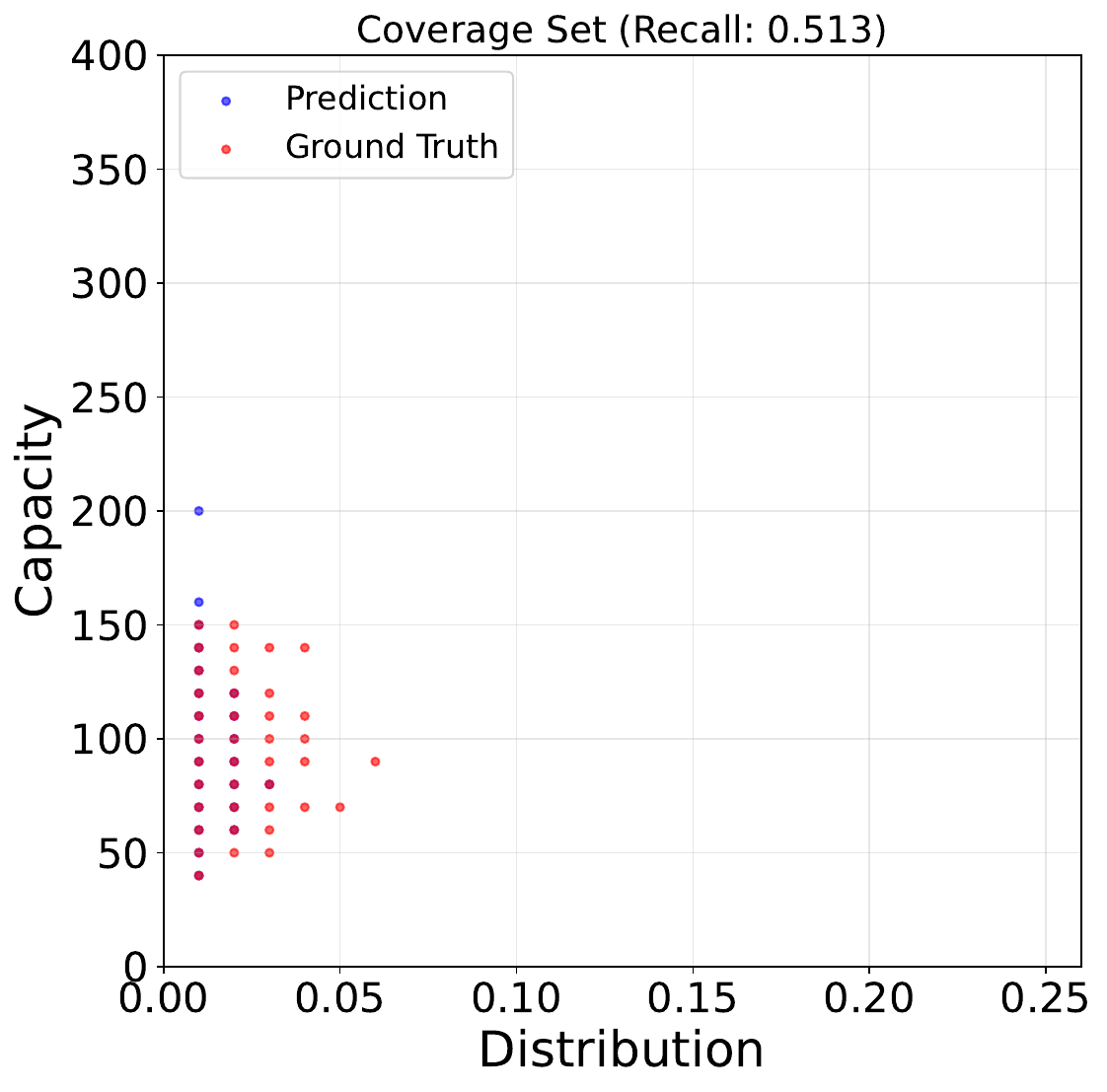}
    \includegraphics[width=0.3\columnwidth]{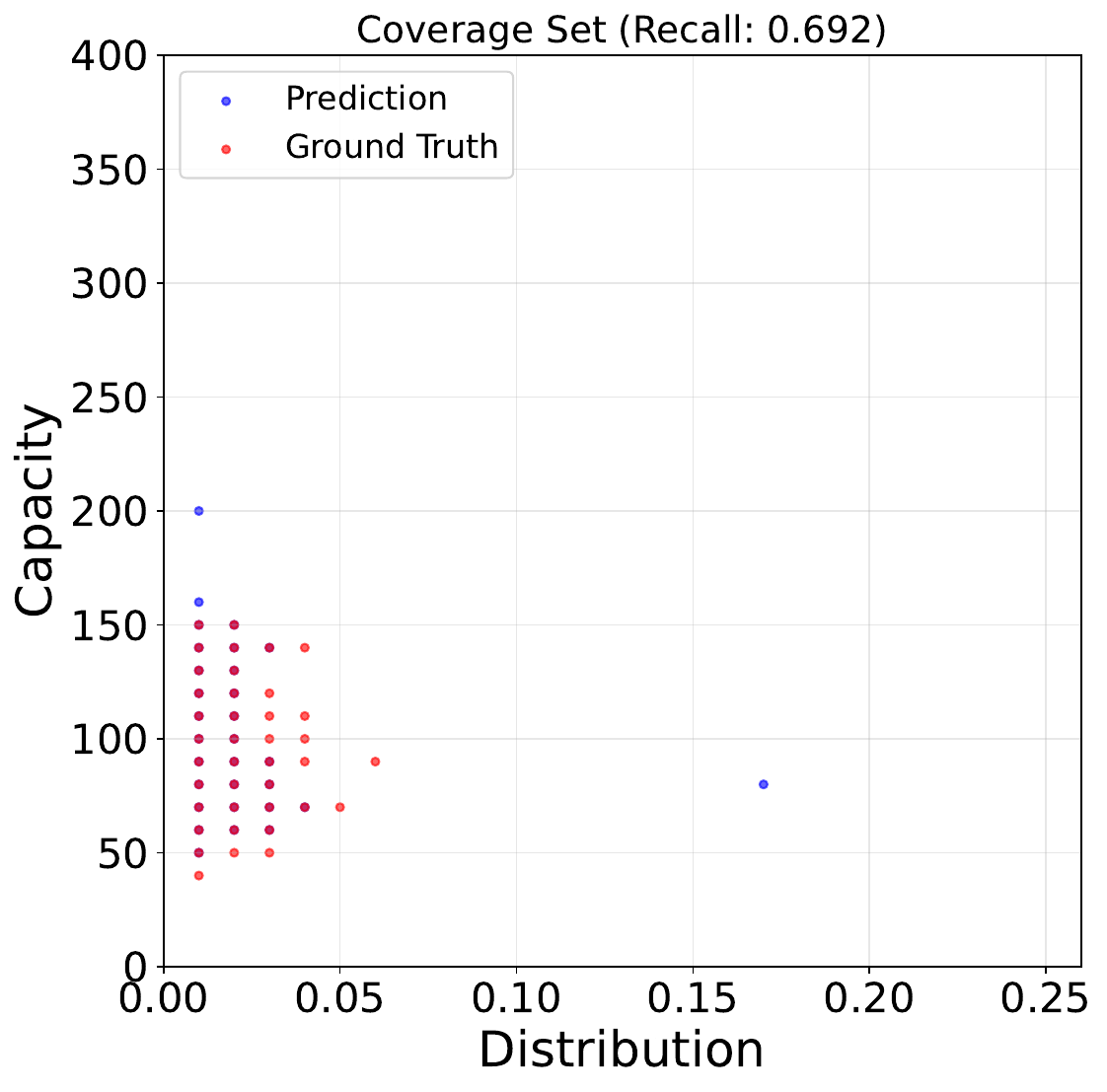}
    \caption{\emph{Top left panel:} Comparison of recall rates of coverage sets.
    \emph{Top right panels:} Comparison of fitted models on a single task.
    \emph{Bottom panels:} Predicted coverage sets from three modeling approaches (i.e., Ours, GP, and PFGP) after 50 additional budget units. \colorb{\textbf{Blue}}, \colorr{\textbf{red}}, and \textcolor[rgb]{0.55,0.0,0.0}{\textbf{dark red}} points denote false positives (FP), false negatives (FN), and true positives (TP), respectively.
    }
    \label{fig_model}
    \vskip -0.1in
\end{figure*}

\subsection{Sensitivity Analysis}
We tune several key hyperparameters, including the performance threshold $\epsilon$, the number of policies $N$, the warmup budget $W$, and the execution budget $E$. The tuning procedure is based on a grid search over predefined ranges selected according to computational feasibility and empirical robustness.



\begin{figure*}[!ht]
    \centering
    \includegraphics[width=0.4\columnwidth]{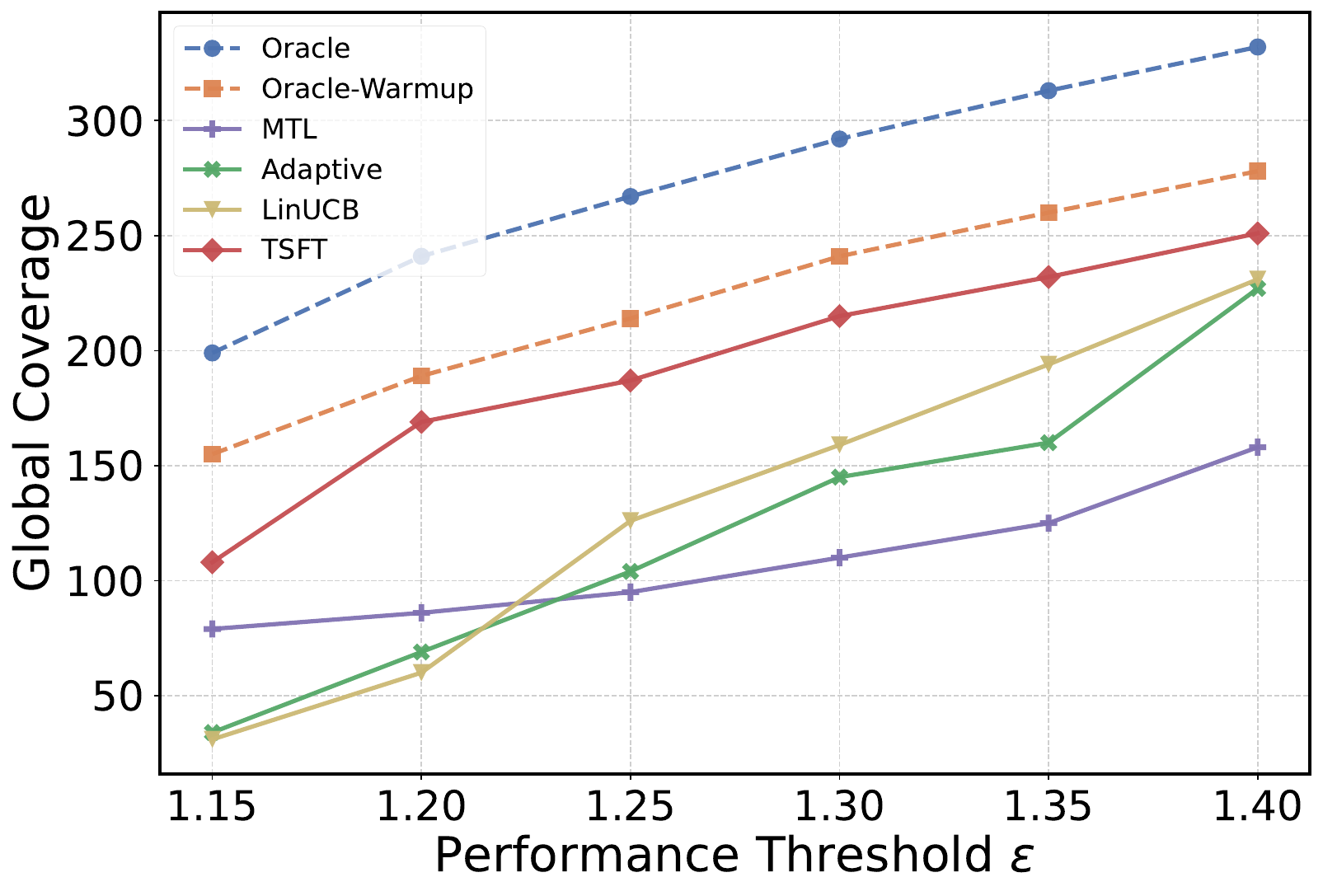}
    \includegraphics[width=0.27\columnwidth]{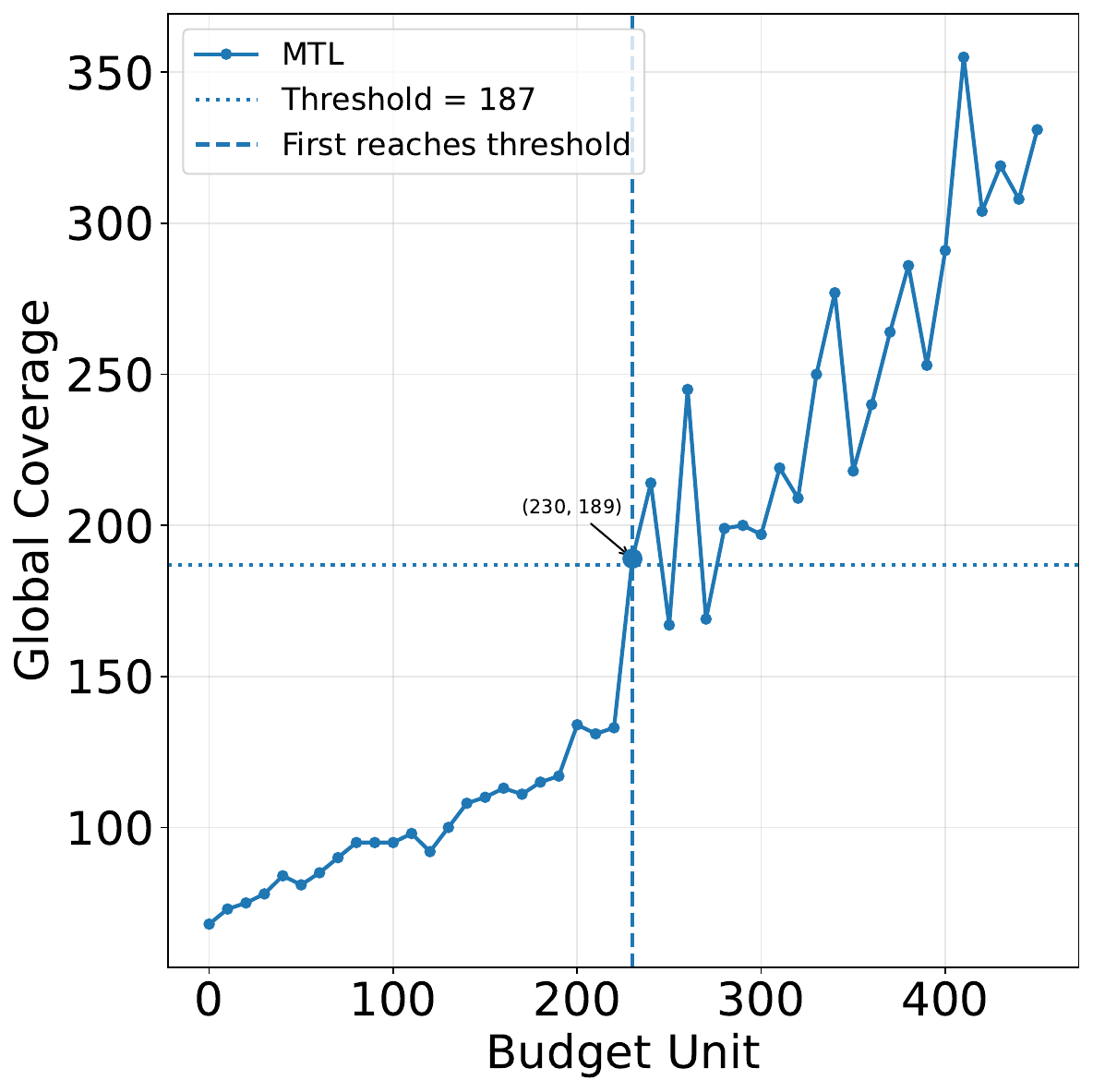}
    \includegraphics[width=0.27\columnwidth]{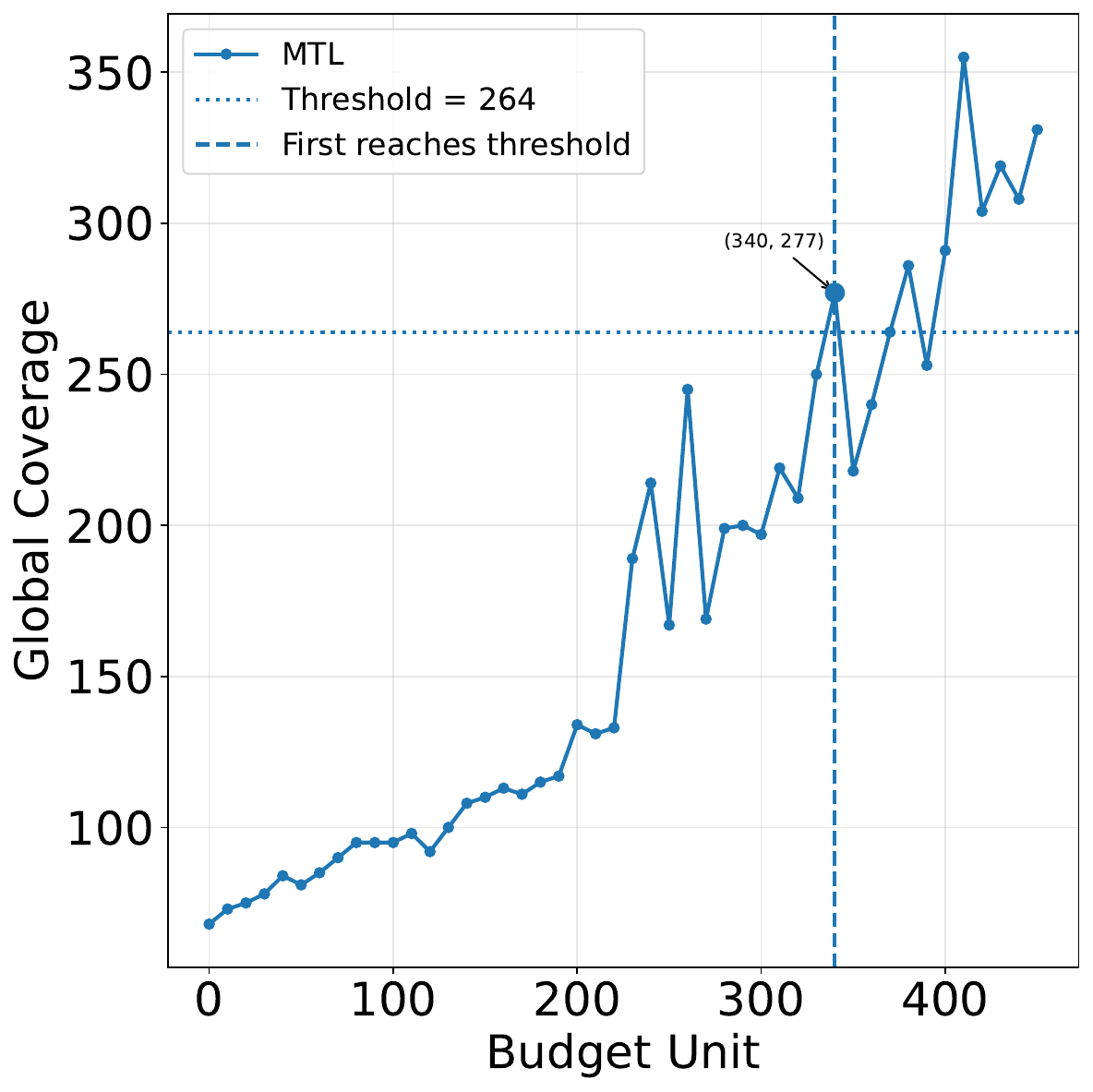}
    \caption{\emph{Left panel:} Sensitivity test of performance threshold $\epsilon$.
    \emph{Right panels:} Sample efficiency comparison between TSFT and MTL on CVRP. We use the best TSFT results reported in Table~\ref{table_nco_control} as thresholds, namely 187 and 264 for budgets of 100 and 150 units, respectively.}
    \label{app_hypara}
    \vskip -0.1in
\end{figure*}

\textbf{Performance Threshold.} We conduct a sensitivity analysis on the performance threshold $\epsilon$ in the 3/100 CVRP setting, varying $\epsilon$ from $1.15\%$ to $1.4\%$ with increments of $0.05\%$. As shown in Fig.~\ref{app_hypara}, the global coverage of all methods increases as $\epsilon$ becomes larger, since a looser threshold allows more tasks to be counted as covered. 
TSFT achieves competitive and stable performance over a wide range of $\epsilon$, consistently outperforming standard baselines.

\textbf{Number of Policies.} We conduct a scalability test on the CVRP setting with a fixed budget of $K=150$, while varying the number of policies from $N \in \{3,4,5,6,7\}$. The results in Fig.~\ref{app_hypara1} show that TSFT scales substantially better than the baseline methods as the number of policies increases. Most baselines exhibit a clear performance drop as $N$ becomes larger, suggesting that they struggle to effectively allocate the fixed budget across an increasing number of policies. Although LinUCB is competitive when $N$ is small, its performance deteriorates sharply for larger $N$. 
Overall, these results demonstrate that TSFT is more effective at managing the increased allocation complexity induced by a larger policy portfolio. At the same time, they also reveal a fundamental trade-off: when the total budget is fixed, an excessively large $N$ eventually reduces performance because each policy receives a smaller specialization budget (i.e., budget dilution). Consequently, TSFT would be expected to achieve higher coverage if the per-policy specialization budget were held constant.

\textbf{Warmup Budget.} We conduct a sensitivity analysis on the warmup budget $W$ in the 3/100 and 3/150 CVRP settings, varying the warmup budget per policy from 3 to 15. As shown in Fig.~\ref{app_hypara1}, TSFT generally achieves stronger performance with relatively small warmup budgets, whereas an excessively large warmup budget may reduce the budget available for subsequent adaptive specialization, potentially leading to degraded global coverage. Note that the warmup budget introduces suboptimality only when it allocates more budget units to some policy than required by the optimal allocation under the true objective.

\begin{figure*}[ht]
    \centering
    \includegraphics[width=0.32\columnwidth]{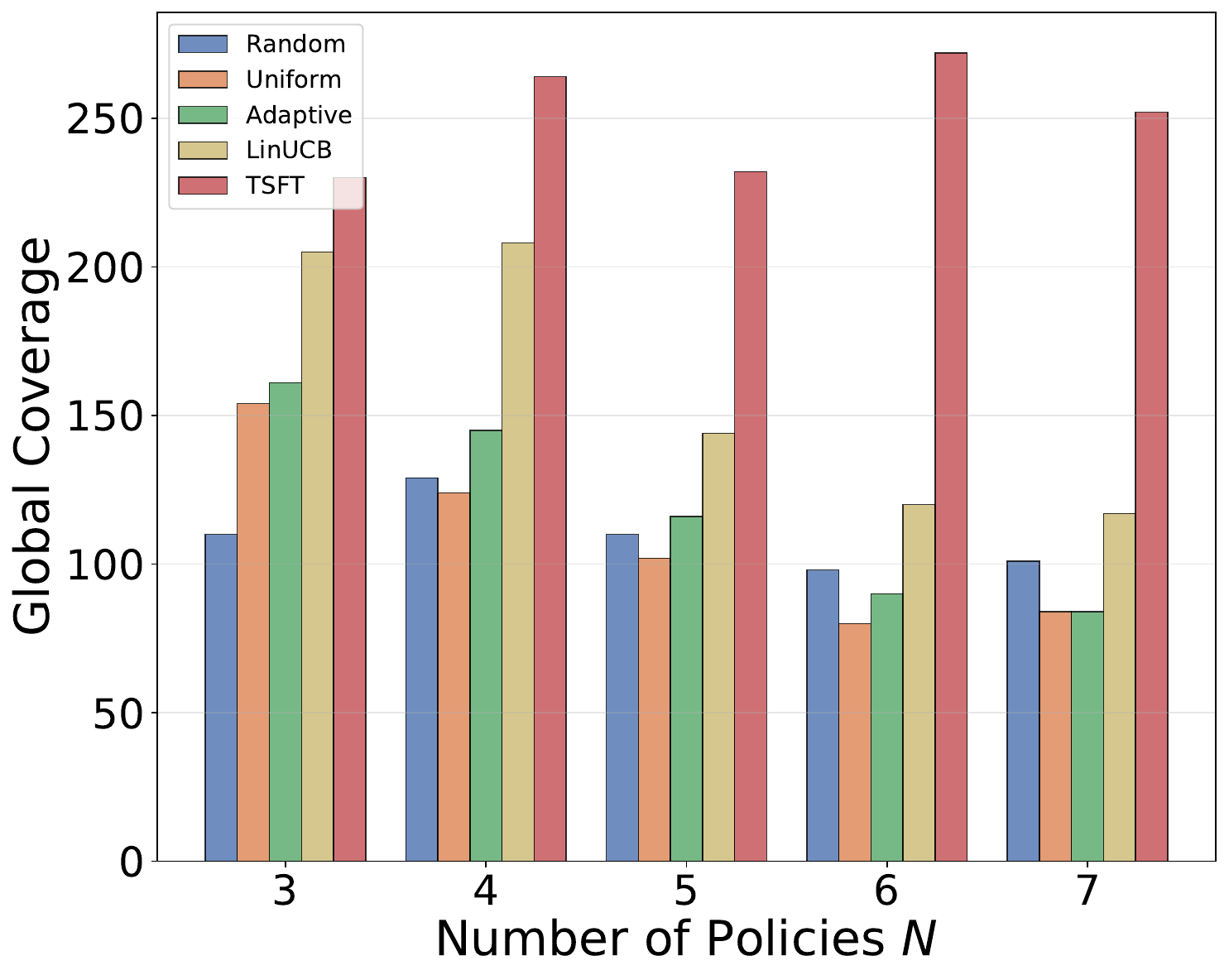}
    \includegraphics[width=0.32\columnwidth]{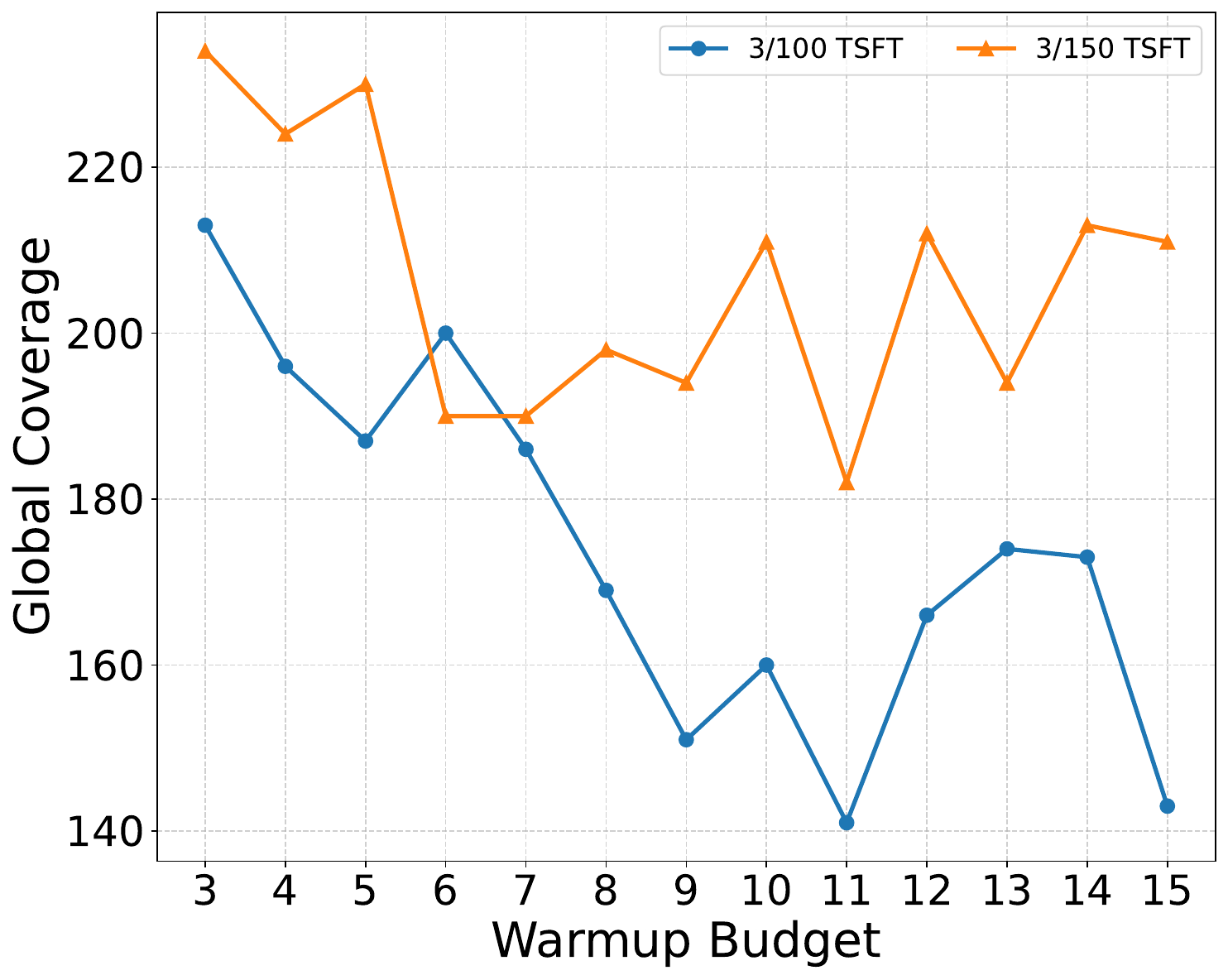}
    \includegraphics[width=0.32\columnwidth]{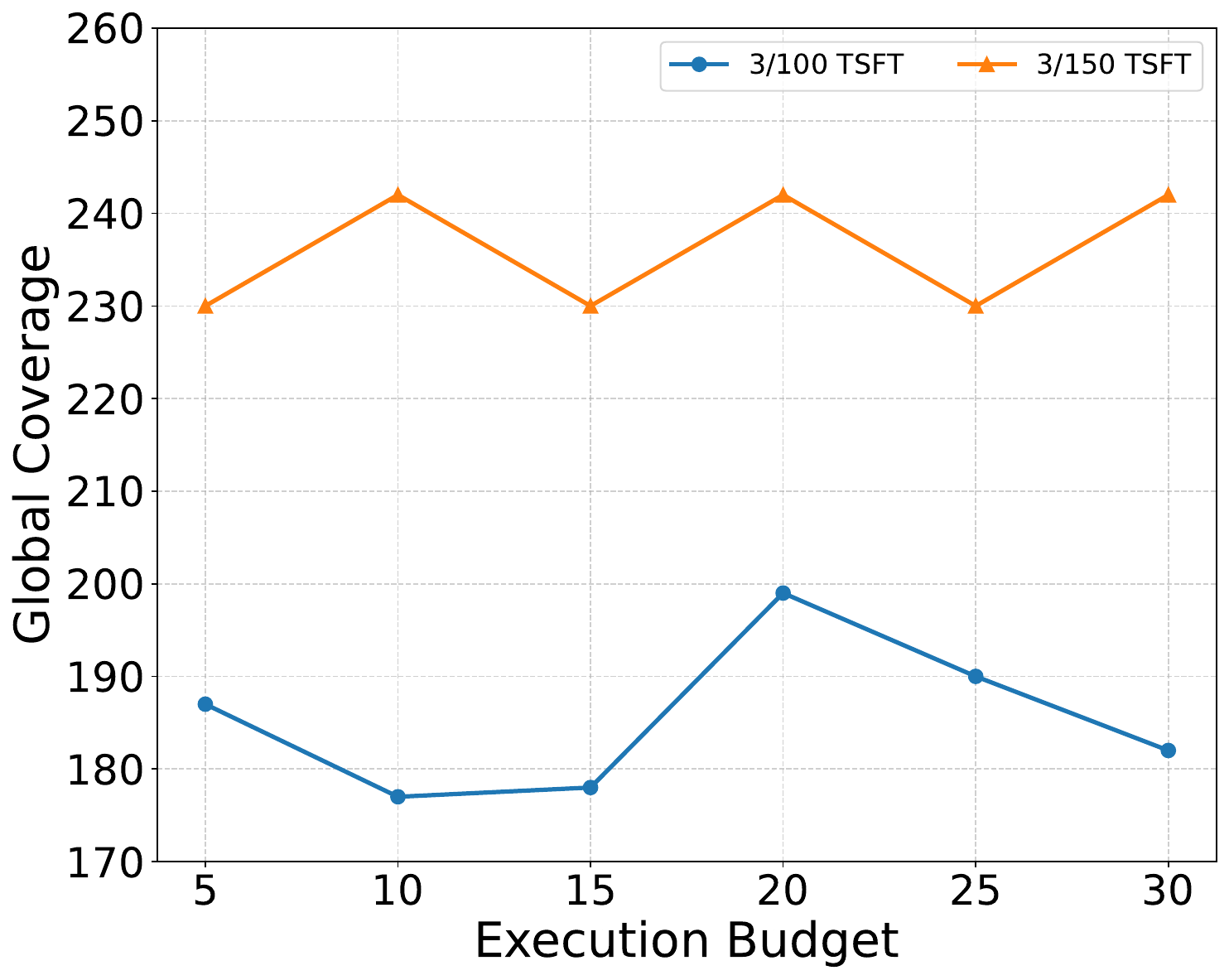}
    \caption{\emph{Left panel:} Performance comparison under different number of policies.
    \emph{Middle panel:} Sensitivity test of warmup budget.
    \emph{Right panel:} Sensitivity test of execution budget.
    }
    \label{app_hypara1}
\end{figure*}

\textbf{Execution Budget.} We conduct a sensitivity analysis on the execution budget $E$ in the 3/100 and 3/150 CVRP settings, varying the execution budget unit from 5 to 30. As shown in Fig.~\ref{app_hypara1}, TSFT remains relatively stable across different execution budget settings, indicating that the method is not overly sensitive once a reasonable execution budget is used. This also suggests that TSFT may further robustly benefit from adaptive budget allocation without requiring extensive tuning of $E$.

\begin{wraptable}{r}{0.52\textwidth}
  \caption{Sensitivity Test of Source Task Set.}
  \label{table_source_task}
  \begin{center}
  \vspace{-5pt}
  \renewcommand\arraystretch{1.0}  
  \resizebox{0.52\textwidth}{!}{ 
  \begin{tabular}{l|c|c|c}
  \toprule
    CVRP 3/100 & Adaptive & LinUCB & TSFT (DP) \\
    \midrule
    $(X_{S_1}, X_{S_3}, X_{S_5})$ & 10.4\% & 12.6\% & \textbf{18.7\%} \\
    $(X_{S_1}, X_{S_2}, X_{S_3})$ & 7.1\% & 8.1\% & \textbf{11.4\%} \\
    $(X_{S_2}, X_{S_3}, X_{S_4})$ & 6.4\% & 6.1\% & \textbf{12.3\%} \\
    $(X_{S_3}, X_{S_4}, X_{S_5})$ & 9.0\% & 5.7\% & \textbf{16.9\%} \\
    $(X_{S_1}, X_{S_2}, X_{S_5})$ & 10.5\% & 11.7\% & \textbf{21.8\%} \\
    $(X_{S_1}, X_{S_4}, X_{S_5})$ & 7.4\% & 7.3\% & \textbf{18.4\%} \\
    $(X_{S_2}, X_{S_3}, X_{S_5})$ & 6.3\% & 9.4\% & \textbf{17.0\%} \\
    $(X_{S_2}, X_{S_4}, X_{S_5})$ & 6.9\% & 9.8\% & \textbf{20.1\%} \\
    \bottomrule
  \end{tabular}}
  \end{center}
  \vskip -0.1in
\end{wraptable}
\textbf{Source Task Set.} A substantial body of prior CRL research focuses on where to train (e.g., how to select representative source tasks). Our work addresses a complementary question of how much to fine-tune each given region. Nevertheless, We conduct additional experiments in the 3/100 CVRP setting using different source-task configurations to investigate TSFT's sensitivity to the choice of source-task set. Specifically, we define the five candidate regions illustrated in Fig.~\ref{app_fig_context_space} and evaluate different selections of three regions. As shown in Table~\ref{table_source_task}, TSFT remains effective across the tested configurations and consistently identifies high-quality budget allocations, demonstrating that our TSFT is largely robust to different choices of source tasks.

\subsection{Computational Cost}
\label{app_computation}
We provide a detailed comparison of the computational costs of the ILP- and DP-based methods in Table~\ref{tab:computational_cost}.
The exact DP solver scales poorly with the number of policies N, quickly becoming impractical as N increases. Although RH-DP substantially reduces both runtime and memory consumption, it remains computationally prohibitive for large-scale planning over larger policy sets.
In contrast, ILP solves the problem exactly while remaining computationally efficient. Unlike DP, which explicitly enumerates and caches nearly all reachable allocation states, ILP provides a compact formulation that enables the solver to exploit LP relaxations and branch-and-bound pruning to eliminate large portions of the search space. 
Note that the slight non-monotonicity in ILP runtime arises because branch-and-bound complexity depends on instance-specific structure and pruning effectiveness rather than solely on \(N\) and \(K\).
Consequently, ILP is often faster in practice, though both formulations remain exponential in the worst case.

\begin{table*}[h]
\centering
\caption{Computational cost comparison of TSFT under different numbers of policies $N$ and total budgets $K$. Each entry reports runtime and peak memory usage per decision-making step.}
\label{tab:computational_cost}
\begin{small}
\renewcommand{\arraystretch}{1.5}
\setlength{\tabcolsep}{4pt}
\resizebox{\textwidth}{!}{
\begin{tabular}{llcccccc}
\toprule
Setting & Method & \multicolumn{6}{c}{Runtime / Memory} \\
\midrule
\multicolumn{8}{c}{\textbf{Varying the number of policies $N$ with fixed budget $K=100$}} \\
\midrule
$K=100$ &  & $N=2$ & $N=3$ & $N=4$ & $N=5$ & $N=6$ & $N=7$ \\
\cmidrule(lr){3-8}
 & TSFT (DP)    
 & 0.04s / 2.5MB 
 & 2.1s / 47.8MB 
 & 47.0s / 771.8MB 
 & 642.3s / 9476.2MB 
 & -- 
 & -- \\
 & TSFT (RH-DP) 
 & 0.02s / 1.0MB 
 & 0.4s / 8.4MB 
 & 3.3s / 65.2MB 
 & 25.2s / 406.4MB 
 & 136.8s / 2186.9MB
 & 717.2s / 10418.9MB \\
 & TSFT (ILP)
 & 0.08s / 2.2MB
 & 0.2s / 6.1MB
 & 0.5s / 8.7MB
 & 0.5s / 9.0MB
 & 0.5s / 10.3MB 
 & 0.4s / 11.0MB \\
\midrule
\multicolumn{8}{c}{\textbf{Varying the total budget $K$ with fixed number of policies $N=3$}} \\
\midrule
$N=3$ &  & $K=50$ & $K=100$ & $K=150$ & $K=200$ & $K=250$ & $K=300$ \\
\cmidrule(lr){3-8}
 & TSFT (DP)    
 & 0.11s / 2.8MB 
 & 2.1s / 47.8MB 
 & 11.1s / 228.3MB 
 & 34.3s / 673.1MB 
 & 78.7s / 1580.2MB 
 & 165.1s / 3197.5MB \\
 & TSFT (RH-DP) 
 & 0.08s / 2.3MB 
 & 0.4s / 8.4MB 
 & 0.6s / 13.7MB 
 & 0.9s / 20.0MB 
 & 1.2s / 25.0MB 
 & 1.5s / 32.6MB \\
 & TSFT (ILP)
 & 0.09s / 2.2MB
 & 0.2s / 6.1MB
 & 0.2s / 11.4MB
 & 0.5s / 11.4MB
 & 0.2s / 11.4MB
 & 0.2s / 11.4MB \\
\bottomrule
\end{tabular}
}
\end{small}
\vskip -0.1in
\end{table*}

\subsection{Cheap Evaluation}
\label{app:cheap_eval}
\begin{wraptable}{r}{0.52\textwidth}
  \caption{Results for Cheap Evaluation.}
  \label{table_cheap_eval}
  \begin{center}
  \vspace{-5pt}
  \renewcommand\arraystretch{1.0}  
  \resizebox{0.52\textwidth}{!}{ 
  \begin{tabular}{l|c|c}
  \toprule
    TSFT (DP) & CVRP 3/100 & CVRP 3/150 \\
    \midrule
    Original & 18.7\% & 23.0\% \\
    Reduced Evaluation (10x) & 18.9\% & 22.3\% \\
    \bottomrule
  \end{tabular}}
  \end{center}
\end{wraptable}

TSFT periodically evaluates policies across the context space. In our experiments, the total evaluation time is on the order of hours, which is substantially smaller than the days required for policy training. Nevertheless, exhaustive evaluation may be impractical in domains with expensive evaluations. To mitigate this, we may reduce the online evaluation cost required for model fitting by using fewer validation samples or evaluating less frequently with a larger execution interval. As shown in Table \ref{table_cheap_eval}, using one-tenth as many samples for online evaluation yields comparable coverage to the original setting. Final global coverage is still calculated using the full dataset. More advanced methods, such as approximating the evaluation results through selective evaluation, constitute an interesting direction for future research.

\subsection{Visualization}
\label{app_visual}
We visualize the performance heatmap of each policy to illustrate task specialization. Specifically, we consider the final policies in the 5/150 CVRP setting, where specialization is guided by TSFT.
Since TSFT allocates different budgets across policies, some remain at early stages of fine-tuning, whereas others (e.g., the last policy) progress to later stages.

As shown in Fig.~\ref{app_visua}, each policy achieves low optimality gaps primarily around the region associated with its corresponding source task set, while its performance generally degrades when moving farther away from that region. This indicates that TSFT encourages different policies to specialize in distinct subregions of the context space rather than forcing a single policy to perform uniformly well across all tasks. More importantly, the specialized regions are complementary across policies, suggesting that the final policy portfolio can provide broader global coverage through task-wise policy selection.
This effect is further illustrated in the bottom row of Fig.~\ref{app_visua}. Although the pretrained and MTL policies achieve reasonably good average performance across the context space, their global coverage remains limited because they lack sufficient specialization.

\begin{figure*}[!h]
    \centering
    \includegraphics[width=0.19\columnwidth]{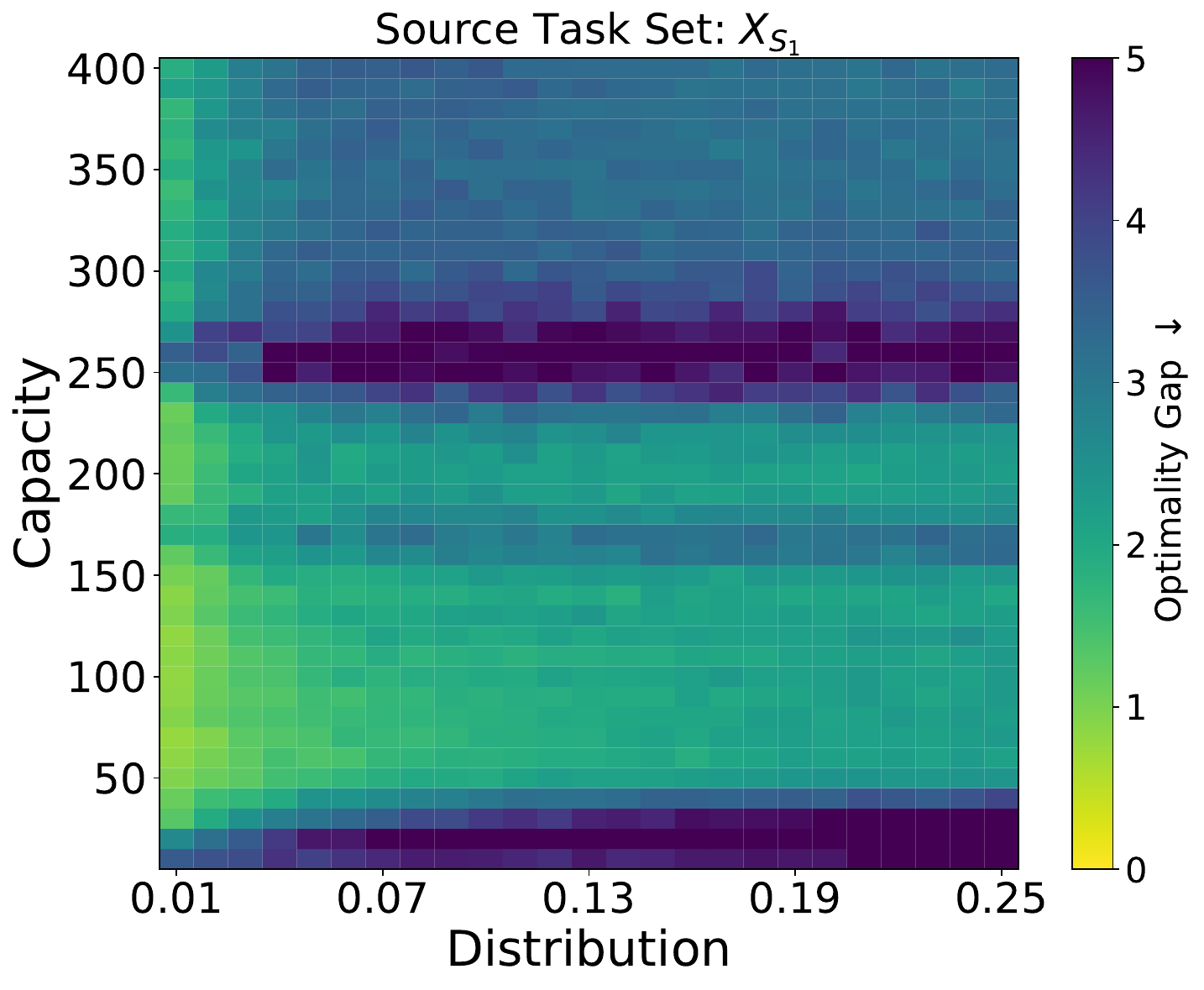}
    \includegraphics[width=0.19\columnwidth]{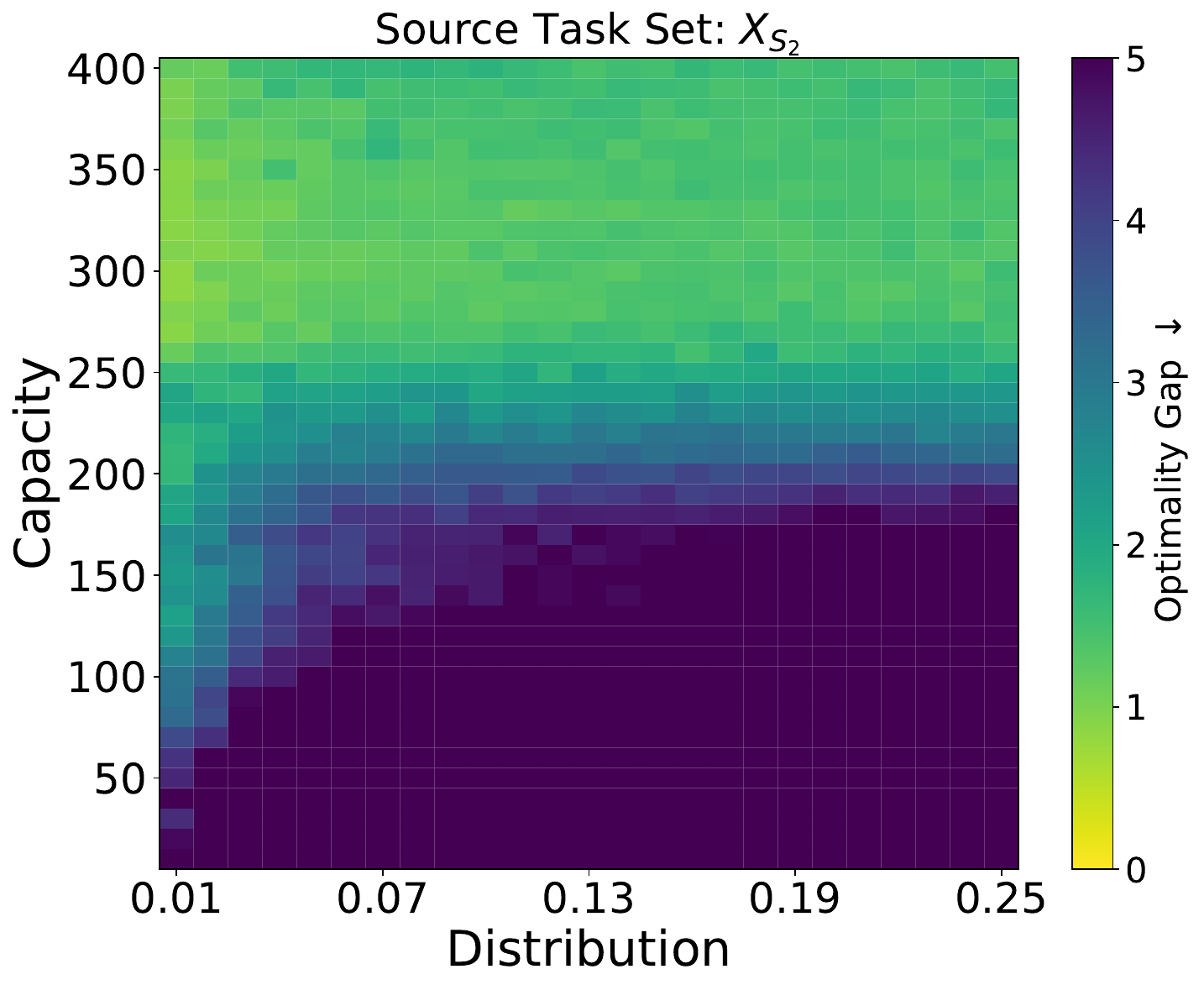}
    \includegraphics[width=0.19\columnwidth]{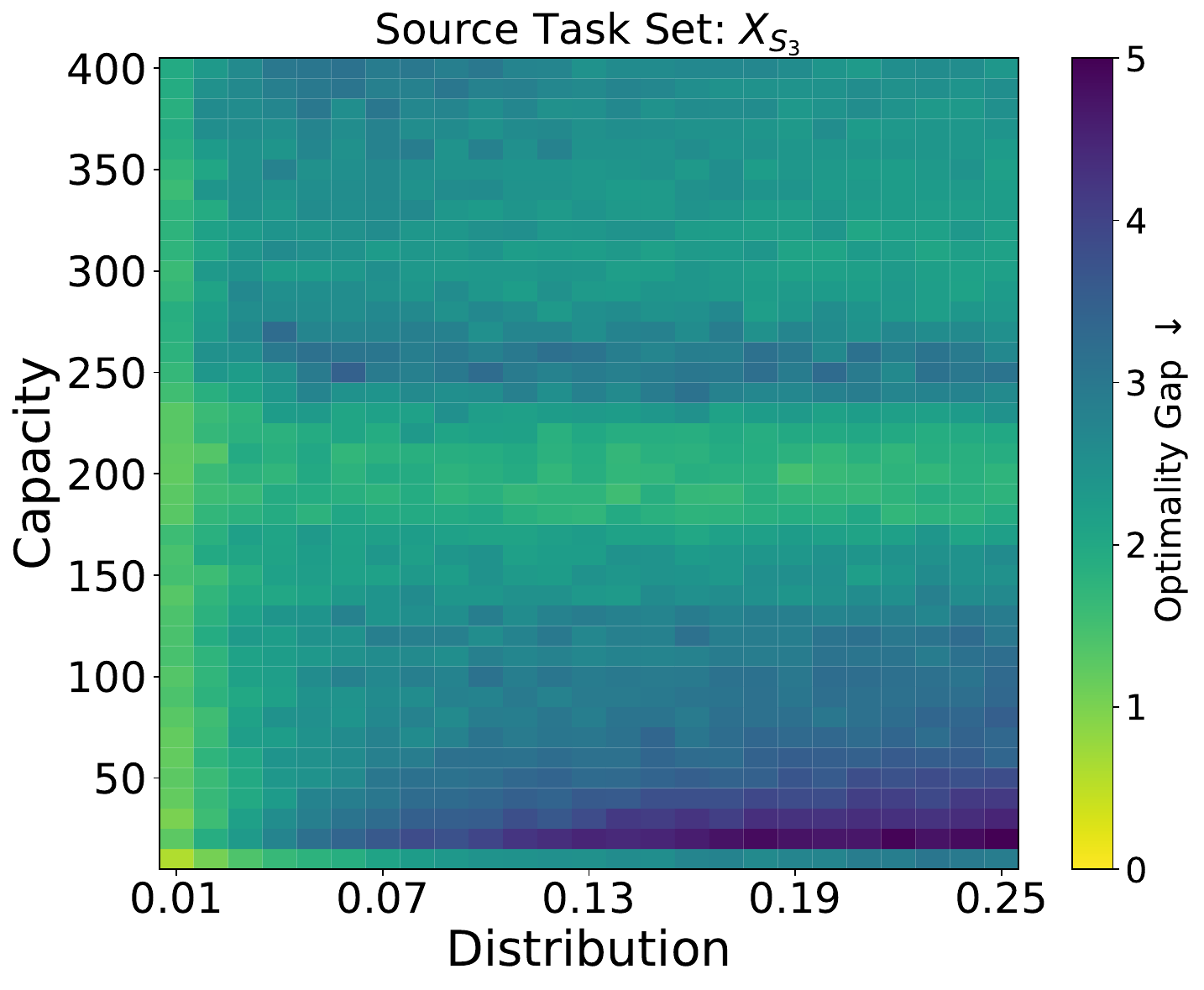}
    \includegraphics[width=0.19\columnwidth]{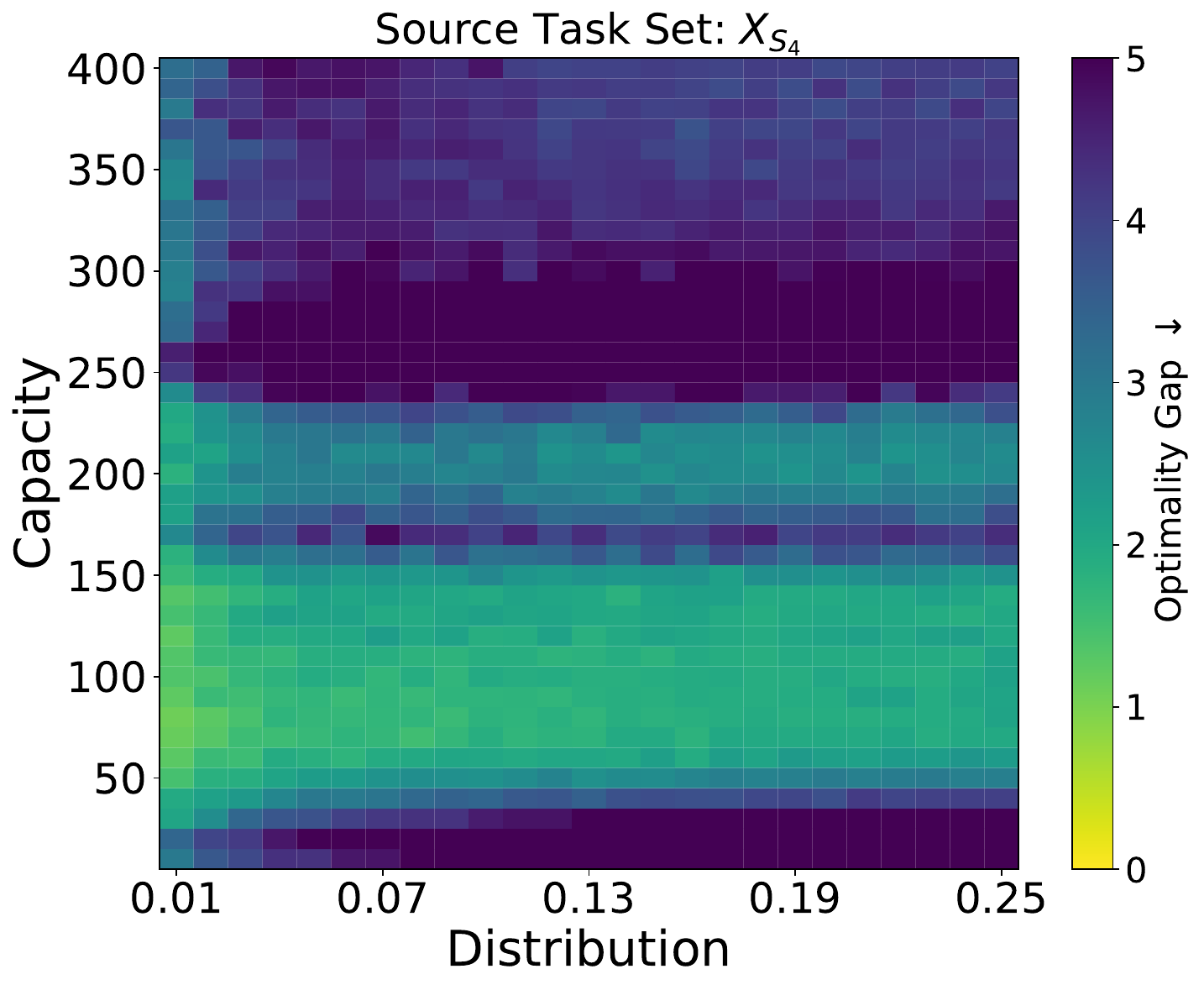}
    \includegraphics[width=0.19\columnwidth]{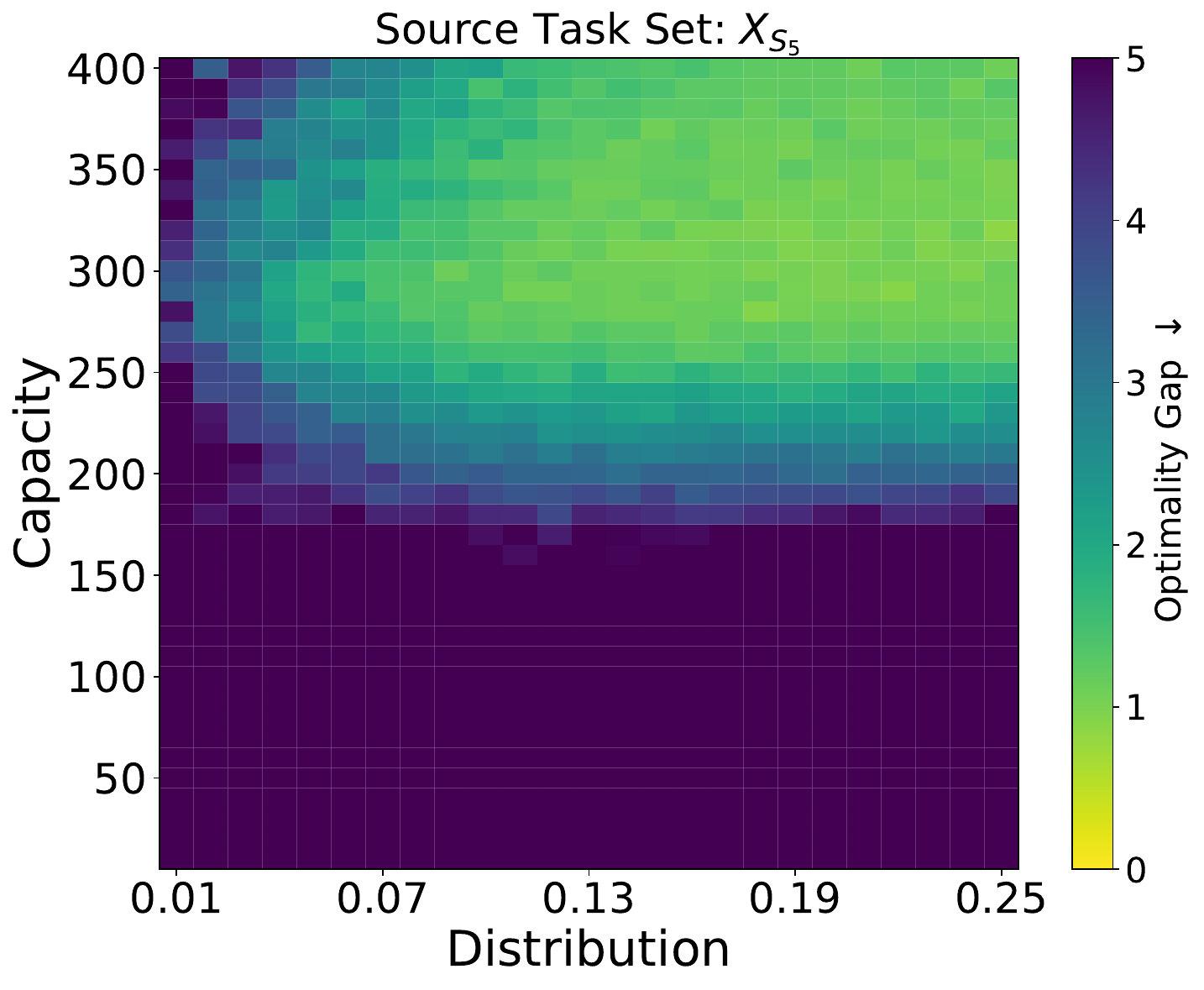} \\
    \includegraphics[width=0.19\columnwidth]{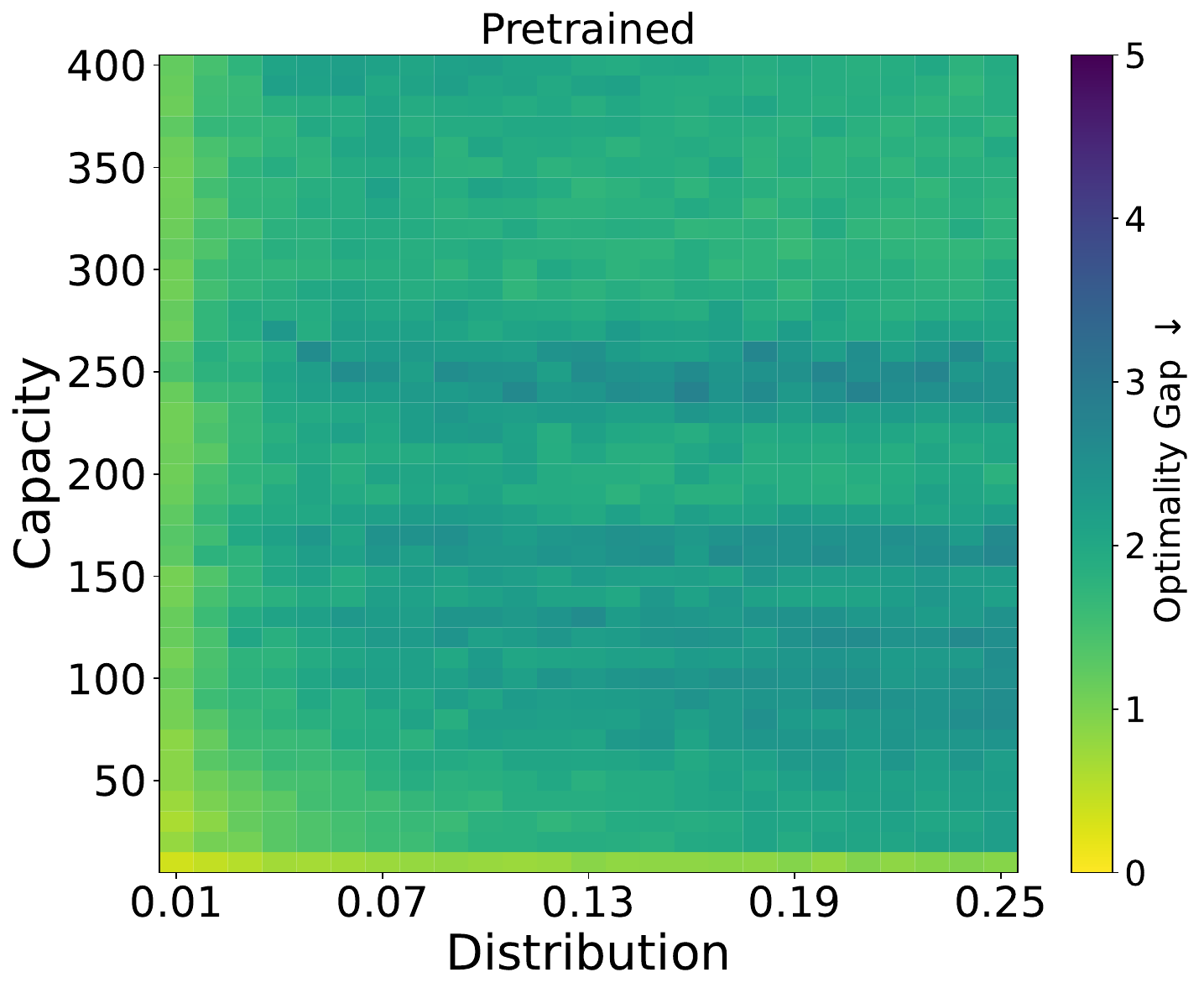}
    \includegraphics[width=0.19\columnwidth]{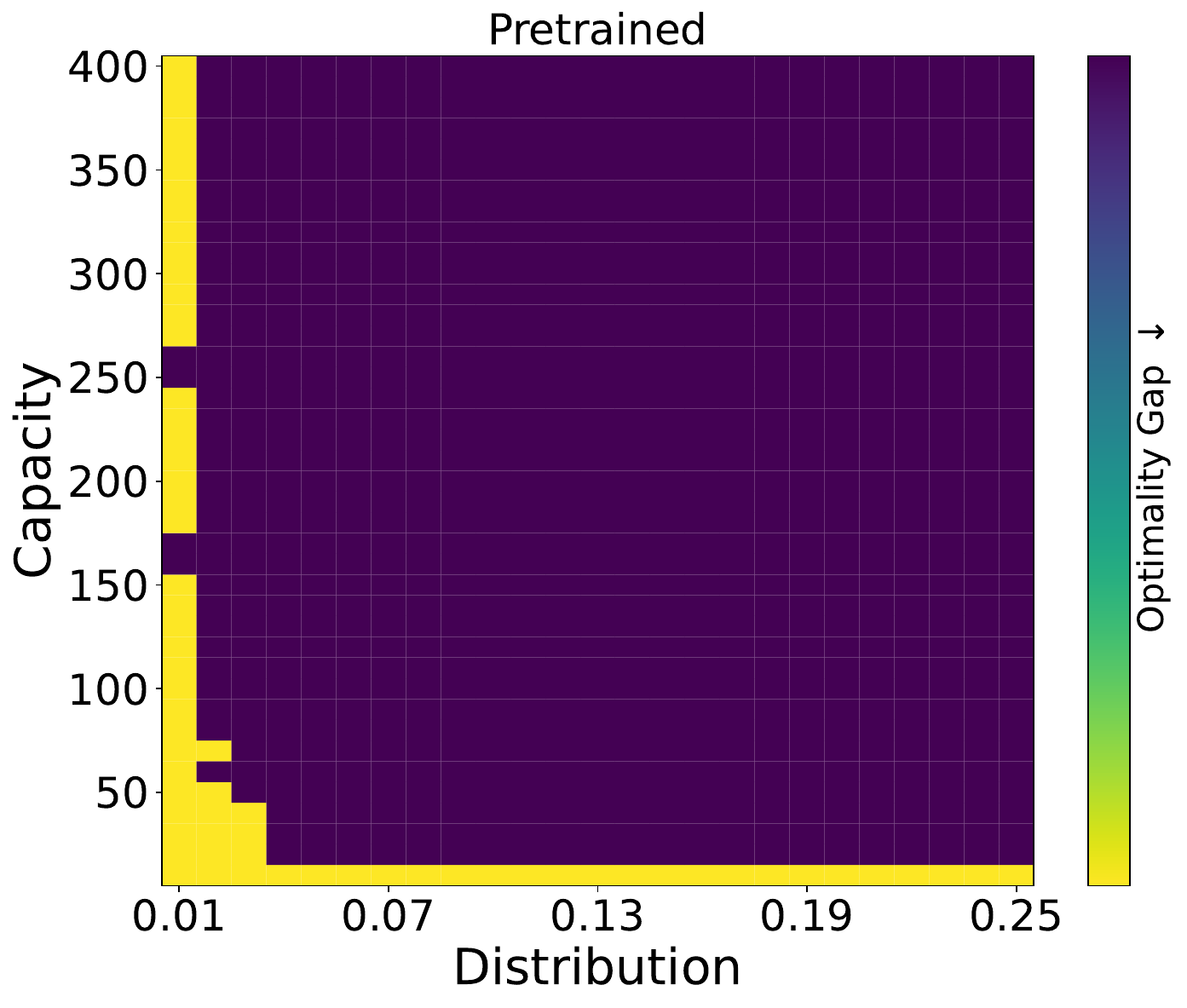}
    \includegraphics[width=0.19\columnwidth]{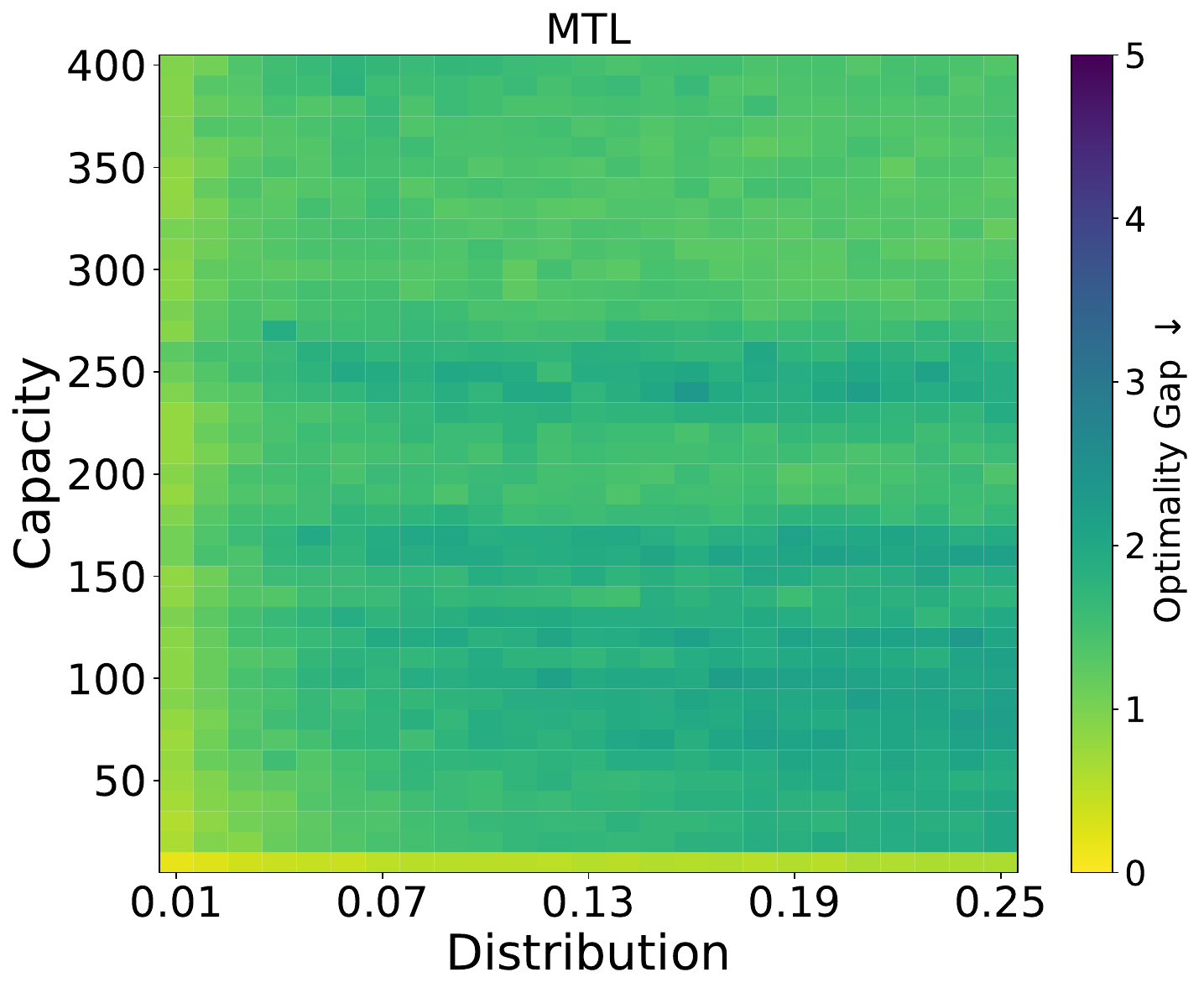}
    \includegraphics[width=0.19\columnwidth]{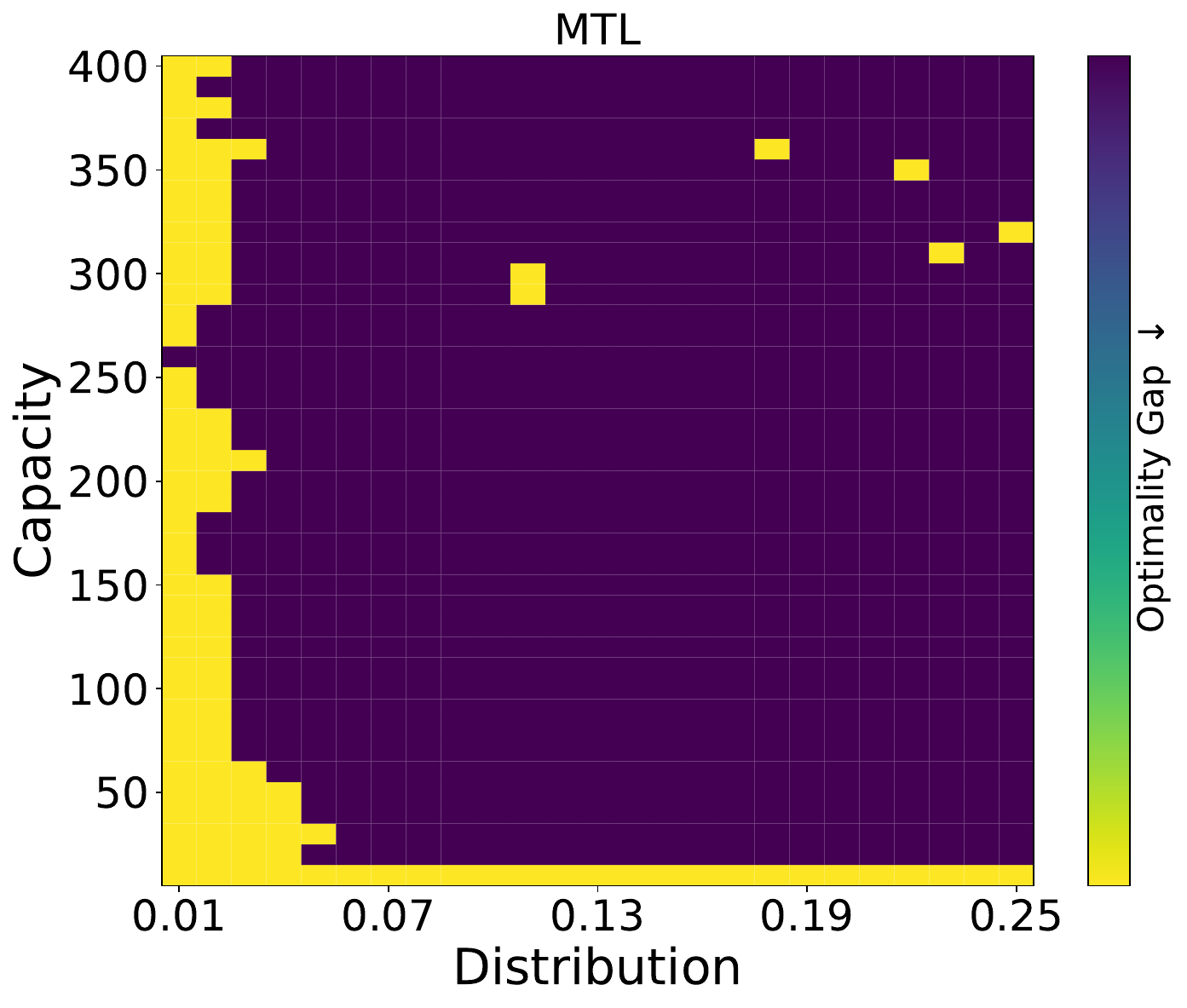}
    \includegraphics[width=0.19\columnwidth]{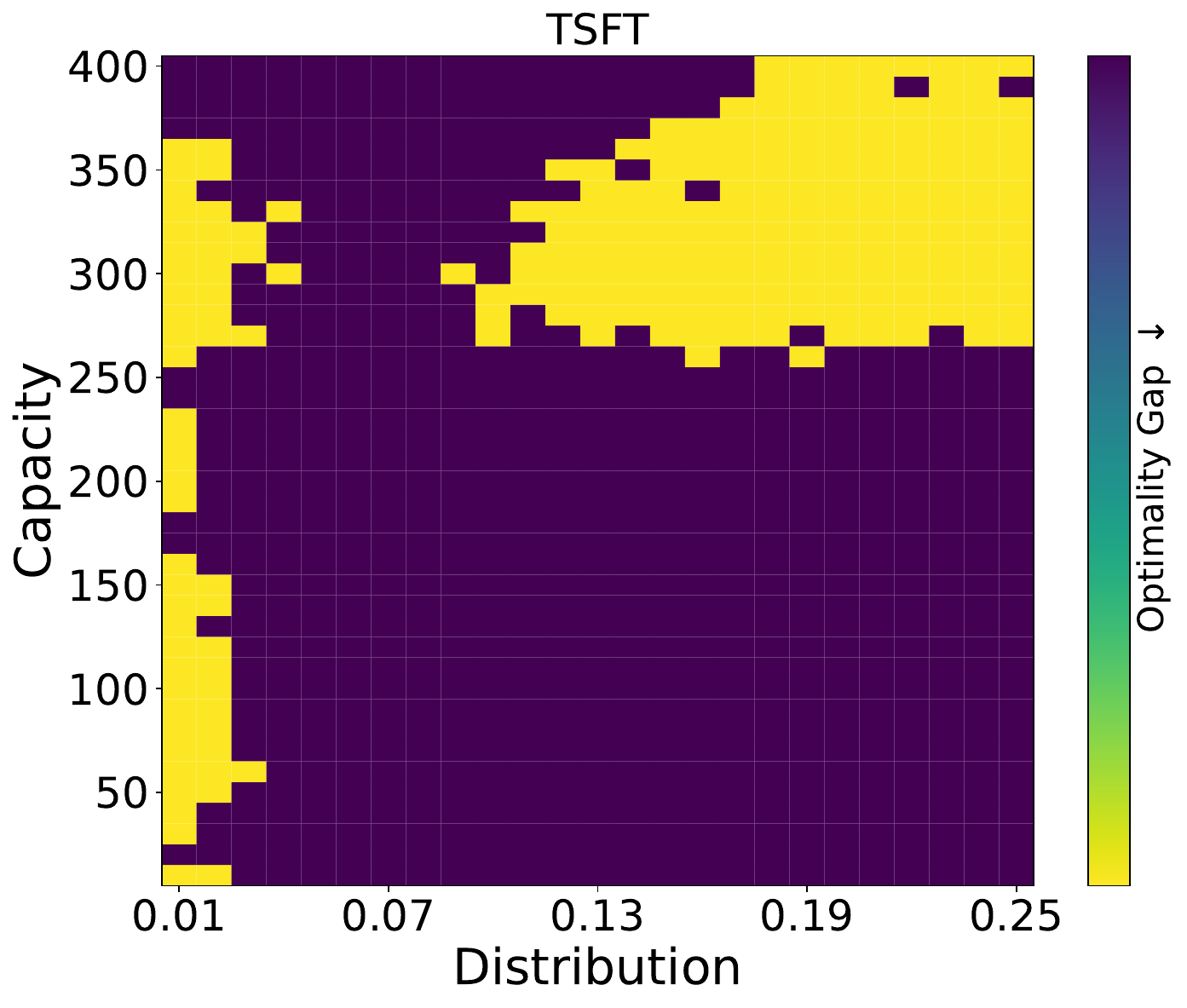}
    \caption{\emph{Top row:} Performance heatmaps of policies selected by TSFT. \emph{Bottom row:} Performance heatmaps and coverage sets (yellow regions) of the pretrained policy and the multi-task learning (MTL) policy under the same budget as TSFT, together with the final coverage set achieved by TSFT.}
    \label{app_visua}
    \vskip -0.1in
\end{figure*}

\section{Broader Impact}
\label{app_impact}
This work studies task specialization fine-tuning for contextual reinforcement learning. One positive societal impact is that it contributes directly to \emph{greener} AI practices: by improving sample efficiency, such methods may reduce the computational and energy costs required to adapt RL agents across diverse task conditions. This can make learning-based decision systems more accessible to researchers and practitioners with limited resources. In application domains such as logistics, transportation, robotics, and resource management, improved contextual generalization may also support more robust and efficient automated decision-making in changing environments. However, negative societal impacts may arise if these methods are deployed in high-stakes settings without sufficient validation, since improved sample efficiency alone does not guarantee safety, fairness, or reliability across all contexts. In addition, more efficient adaptation could accelerate the deployment of RL systems in domains where automation may affect labor, privacy, or human oversight. Therefore, practical deployment should be accompanied by careful evaluation, domain-specific safeguards, and appropriate human supervision.

\section{License}
\label{app_license}

The licenses and usage of the existing assets are listed in Table \ref{tab:asset}. Our source code and datasets will be publicly released under the MIT License upon publication.

\begin{table*}[h]
  \centering
  \caption{Licenses and usage for existing assets.}
  \begin{small}
    \begin{tabular}{c|l|l|l}
    \toprule
    \textbf{Type} & \textbf{Asset} & \textbf{License} & \textbf{Usage} \\
    \midrule
    \multirow{4}[2]{*}{Code}
     & POMO \cite{kwon2020pomo}                        & MIT License & Policy network (CVRP/CVRPTW) \\
     & Stable Baselines3 \cite{stable-baselines3}      & MIT License & PPO algorithm (CartPole/Ant) \\
     & MOORE \cite{hendawy2023multi}                   & MIT License & Meta-World multi-task RL \\
     & VeRL                                            & Apache 2.0  & LLM RFT framework \\
    \midrule
    Model
     & Qwen3-4B-Base \cite{yang2025qwen3}              & Apache 2.0  & LLM pretrained policy \\
    \midrule
    \multirow{4}[2]{*}{Datasets}
     & DAPO-17K \cite{yu2025dapo}                      & Apache 2.0  & LLM training / evaluation \\
     & MATH \cite{hendrycks2021measuring}              & MIT License & LLM training / evaluation (MATH-500) \\
     & GSM8K \cite{cobbe2021training}                  & MIT License & LLM training / evaluation \\
     & CodeContests+ \cite{wang2025codecontests}       & Apache 2.0  & LLM training / evaluation \\
    \midrule
    \multirow{7}[2]{*}{Benchmark}
     & CARL \cite{benjaminscontextualize}              & Apache 2.0  & Continuous control CRL benchmark \\
     & Meta-World \cite{yu2020meta}                    & MIT License & Multi-task robotic manipulation benchmark \\
     & AIME 2024                       & Public      & LLM evaluation \\
     & AIME 2025                       & Public      & LLM evaluation \\
     & Minerva Math \cite{lewkowycz2022solving}        & MIT License & LLM evaluation \\
     & MBPP \cite{austin2021program}                   & CC-BY-4.0   & LLM evaluation \\
     & BigCodeBench \cite{zhuo2024bigcodebench}        & Apache 2.0  & LLM evaluation \\
    \bottomrule
    \end{tabular}
    \end{small}
  \label{tab:asset}
\end{table*}



\end{document}